\documentclass[shortAfour,sageh,times]{sagej}
\usepackage{graphicx}
\usepackage{caption}
\graphicspath{
    {graphics/}
}
\usepackage{amssymb}
\usepackage{pifont}

\usepackage{amsmath,amssymb,enumerate}
\usepackage{amsthm}
\usepackage{setspace}
\usepackage{booktabs}
\usepackage[usenames,dvipsnames,svgnames,table]{xcolor}
\usepackage[bookmarks=true,colorlinks=true,pdfpagemode=UseNone,citecolor=black,linkcolor=black,urlcolor=BrickRed,pagebackref]{hyperref}
\usepackage{mathtools}
\usepackage{algorithm, algorithmicx, algpseudocode}
\usepackage{blindtext}
\usepackage{gensymb}
\usepackage{xparse}
\usepackage{lipsum}
\usepackage{graphicx}
\usepackage{subcaption}
\usepackage{soul} 
\usepackage{mathrsfs}
\usepackage[mathscr]{euscript}
\usepackage{times}
\usepackage{amsthm}
\usepackage{thmtools}
\usepackage{thm-restate}
\usepackage[nameinlink,capitalise]{cleveref}

\usepackage{mathtools}
\usepackage{multirow}
\usepackage{booktabs}

\usepackage{multirow}

\usepackage{multicol}

\usepackage{amsfonts}
\usepackage{fixltx2e}
\usepackage{cleveref}
\usepackage{balance}
\usepackage[utf8]{inputenc}
\usepackage[T1]{fontenc}
\usepackage{textcomp}
\usepackage[nolist]{acronym}
\usepackage{tabularx}
\usepackage{adjustbox}
\usepackage{textcomp}
\usepackage{placeins}
\usepackage{afterpage}

\newtheorem{theorem}{Theorem}

\theoremstyle{definition}
\newtheorem{definition}{Definition}
\newtheorem{problem}{Problem}

\newtheorem{remark}{Remark}

\definecolor{note}{rgb}{0.1,0.1,1}
\definecolor{rephase}{rgb}{0.15,0.7,0.15}
\definecolor{bag}{rgb}{0.6,0.6,0.2}
\DeclareDocumentCommand{\stateTransition}{ O{} O{} }{\boldsymbol{\Phi}_{#1}^{#2}}

\usepackage{bm}

\makeatletter
\newcommand{\mathleft}{\@fleqntrue\@mathmargin0pt}
\newcommand{\mathcenter}{\@fleqnfalse}
\makeatother

\usepackage[titletoc,page]{appendix}

\usepackage{moreverb,url} 
\usepackage{secdot}
\sectiondot{subsection}    
\sectiondot{subsubsection}
 
\newcommand{\method}{\textsc{LieIPM}\xspace}
\newcommand{\methodlong}{Lie Group Interior Point Method\xspace}

\def\volumeyear{2018}

\begin{document} 
 
\runninghead{Teng et al.}

\title{\method: \methodlong for Direct Trajectory Optimization of Rigid Bodies}

\author{Sangli Teng, Ruiqi Zhang, Tzu-Yuan Lin, William A Clark, Mark Mueller, Ram Vasudevan, Maani Ghaffari\footnotemark[1], Koushil Sreenath\footnotemark[1]}


\affiliation{
S.~Teng, R.~Zhang, M.~Mueller, and K.~Sreenath are with the University of California, Berkeley. T.~Lin is with MIT. W.~Clark is with the Ohio University. R.~Vasudevan and M.~Ghaffari are with the University of Michigan, Ann Arbor.}


\begin{abstract}
Designing dynamically feasible trajectories for rigid bodies is a fundamental problem in robotics. While direct methods are widely used, the existing constrained optimizers typically operate in Euclidean space and ignore the manifold structure of rigid body motions. 
This \emph{mismatch} may introduce singularities or lead to poorly conditioned optimization problems.
To bridge this gap, we develop a structure-aware framework for constrained trajectory optimization directly on matrix Lie groups. Our approach is based on the second-order rigid body models utilizing Lie group structures, which enables efficient Newton-type updates while preserving the underlying geometry. Building on this model, we propose a line-search \methodlong (\method) to handle constraints on the manifolds. We instantiate the framework for rigid body motion planning using Lie group variational integrators and derive closed-form intrinsic derivatives that exploit group symmetries. The \method preserves the topology of rotation motions by construction and avoids singularities. Numerical results demonstrate superior robustness and faster convergence compared to general-purpose solvers and structure-exploiting optimal control methods.


\end{abstract}

\keywords{Trajectory Optimization, Motion Planning, Optimization on Manifolds, Geometric Mechanics, Rigid Body Dynamics, Matrix Lie Groups}

\maketitle
\footnotetext[1]{M.~Ghaffari and K.~Sreenath equally advised this work.}
\begin{abstract}
    

Designing dynamically feasible trajectories for rigid bodies is a fundamental problem in robotics.
Although direct trajectory optimization is widely applied to solve this problem, inappropriate parameterizations of rigid body dynamics often result in slow convergence and violations of the intrinsic topological structure of the rotation group.
This paper introduces a Riemannian optimization framework for direct trajectory optimization of rigid bodies. 
We first use the Lie Group Variational Integrator to formulate the discrete rigid body dynamics on matrix Lie groups.
We then derive the closed-form first- and second-order Riemannian derivatives of the dynamics. 
Finally, this work applies a line-search Riemannian Interior Point Method (RIPM) to perform trajectory optimization with general nonlinear constraints.
As the optimization is performed on matrix Lie groups, it is correct-by-construction to respect the topological structure of the rotation group and be free of singularities.
The paper demonstrates that both the derivative evaluations and Newton steps required to solve the RIPM exhibit linear complexity with respect to the planning horizon and system degrees of freedom.
Simulation results illustrate that the proposed method is faster than conventional methods by an order of magnitude in challenging robotics tasks.

\end{abstract}




\section{Introduction}

\begin{figure*}
    \centering
    \includegraphics[width=1\linewidth]{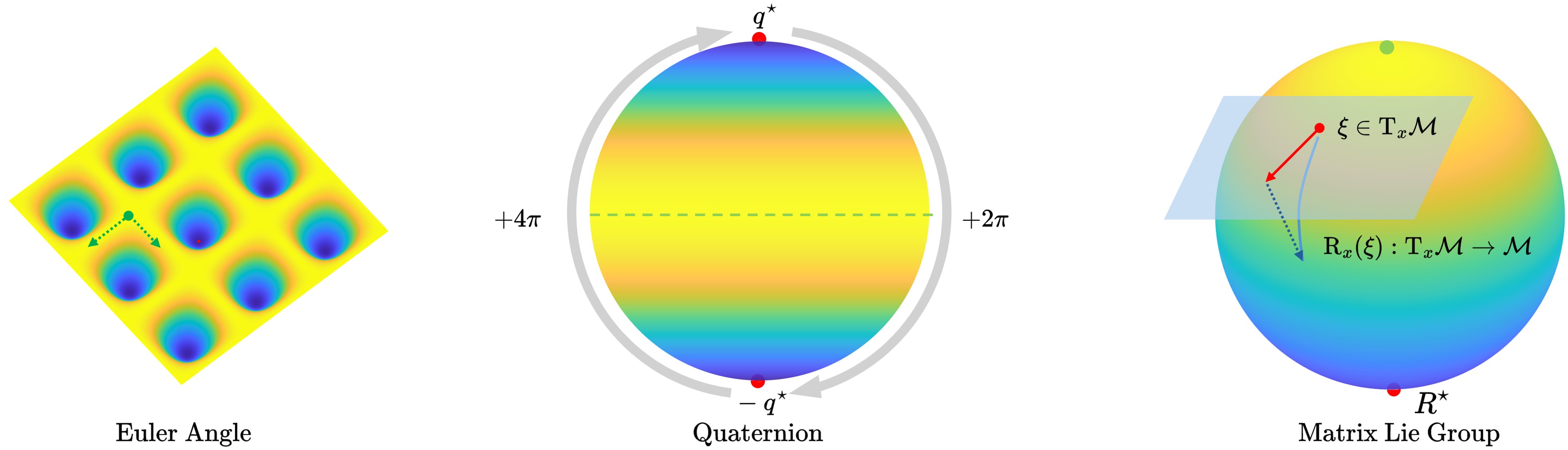}
    \caption{We develop \methodlong (\method) for direct trajectory optimization of rigid bodies on matrix Lie groups. We compare the landscape of $\min_R \|R-I\|_F^2, \mathrm{s.t}\ {R} \in \mathrm{SO}(3)$, using different parameterizations. For the Euler angle with the yaw angle fixed, we see repeated degenerate directions as a limit of the 3 DOF parameterization of $\mathrm{SO}(3)$. The equivalent quaternion version has two distinct solutions as a consequence of the double cover issue. Only the matrix Lie groups version, plotted on the quotient space $\mathrm{S}^2 \simeq \mathrm{SO}(3)/\mathrm{SO}(2)$, has a unique solution.  
    }
    \label{fig:teaser}
    \vspace{-4mm}
\end{figure*}


Optimizing the rigid body motions that evolve on a manifold is a fundamental problem in robotic control and motion planning.
Consider the Direct Trajectory Optimization (DTO) problem of a rigid body evolving in 3D space:
\begin{equation}
\label{eq:TO}
\begin{aligned}
    \min_{x_k,u_k} \quad &\ell_T(x_T) + \sum_{k=0}^{T-1}\ell(x_k,u_k) \\
    \mathrm{s.t.} \quad & f_d(x_{k+1}, x_k, u_k) = 0 \\
    \quad & g(x_k, u_k) \le 0 \\
    & x_k \in \mathcal{M}_{RB}
\end{aligned}\tag{DTO}
\end{equation}
with $\ell_T(\cdot)$ the terminal cost, $\ell(\cdot, \cdot)$ the running cost,  $f_d$ the implicit function of the discretized dynamics, $u_k$ the control input, $g(\cdot, \cdot)$ the inequality constraints, and $x_k \in \mathcal{M}_{RB}$ the state of rigid body defined on a non-Euclidean configuration space $\mathcal{M}_{RB}$ such as $\mathrm{SE}(3)$, $\mathrm{SO}(3)\times \mathbb{R}^3$, or its equivalent quaternion representations. 

The~\ref{eq:TO} has been widely studied for robot motion planning~\citep{hargraves1987direct, betts1998survey, posa2014direct, hereid2017frost}, where most existing approaches model robot dynamics in the generalized coordinates~\citep{bullo1999tracking}. When the derivatives are properly derived,~\ref{eq:TO} can be solved efficiently by nonlinear optimization techniques~\citep{wright2006numerical}.


However, the numerical optimization developed on Euclidean space~\citep{wright2006numerical} has fundamental limits when the decision variables are on manifolds with non-trivial topology. As the real projective space $\mathbb{RP}^3$ cannot be embedded differentiably into $\mathbb{R}^3$ \citep{Lee2003}, there does not exist a globally smooth mapping from $\mathbb{R}^3$ to $\mathrm{SO}(3)$ (rotation matrices) or $\mathrm{SU}(2)$ (unit quaternions). Thus, any effort trying to parameterize 3D rotational motions by $\mathbb{R}^3$, such as Euler angles, three Degree-Of-Freedom (DOF) parameterization of quaternion \citep{brudigam2021integrator} or Rodriguez formula \citep{kalabic2017mpc}, introduces singularities. As a consequence, solving \ref{eq:TO} with an off-the-shelf nonlinear optimization solver either suffers from the singularities in minimal coordinates or needs to incorporate additional constraints in the ambient space that decelerate the solutions. Though the quaternion can avoid the singularities, its topology is not globally consistent with $\mathbb{RP}^3$, thus having the double cover issue. While $\mathrm{SO}(3)$ has the same topology as $\mathbb{RP}^3$, each pose is uniquely represented by a rotation matrix. Thus we hypothesize that: {\emph{solving \eqref{eq:TO} on matrix Lie groups is topologically correct and will lead to superior computational efficiency.}} The comparison of different parameterization is illustrated in~\Cref{fig:teaser}. 



In light of such topological advantages, we describe rigid body dynamics by the Euler-Poincaré equations~\citep{marsden1998introduction, bloch2003nonholonomic} on matrix Lie groups. 
Despite the potential advantages, solving~\ref{eq:TO} on matrix Lie groups requires \emph{constrained} optimization on manifolds, which has not been as well studied as the \emph{unconstrained} ones in 3D perception~\citep{forster2016manifold, clark2021nonparametric, rosen2019se, han2025building}. 

To address the gap, we develop the \methodlong(\method) to solve~\ref{eq:TO} on matrix Lie groups. 
An early version of this work by the current authors was
presented as a conference paper in \citep{TengS-RSS-25}. 
The \emph{main contributions} of this work are: 
\begin{enumerate}
    \item We formulate trajectory optimization of rigid bodies as a constrained optimization problem on matrix Lie groups, preserving the geometric structure by construction.
    
    \item We derive a consistent closed-form second-order model of the dynamics based on Lie group symmetry, enabling efficient Newton-type optimization on the manifold.
    
    \item We develop the \methodlong (\method) for solving general constrained optimization problems directly on matrix Lie groups.
    
    \item We validate the proposed framework on trajectory optimization tasks, demonstrating superior robustness and convergence compared to existing methods.

    \item Open-source C++ implementation at \url{https://github.com/SangliTeng/LieIPM}. 
\end{enumerate}
{\color{black}The \emph{new} contributions compared to the conference version ~\citep{TengS-RSS-25} include:}
\begin{enumerate}
    \item Enhanced line search and inertia correction scheme to improve the robustness of \method.
    \item Convergence analysis of the proposed solver. 
    \item An extension of the proposed method to non-compact matrix Lie groups, where the Riemannian exponential and Lie exponential are not identical. 
    \item Deployment of the proposed method on real-world experiments.
\end{enumerate}

The remainder of the paper is organized as follows. The related work is summarized in \Cref{sec:related-work}. The math preliminary is provided in \Cref{sec:prelim}. The rigid body dynamics and its differentiation are presented in \Cref{sec:srb-dynamics} and \ref{sec:diff-srb-dynamics}, respectively. The \method is formulated in \Cref{sec:RIPM} and evaluated in \Cref{sec:num-exp}. Finally, the limitations of the proposed method is discussed in \Cref{sec:lim} and the conclusions are summarized in \Cref{sec:conclusion}. 
\section{Related Work}
\label{sec:related-work}
In this section, we review the trajectory optimization of rigid body systems and optimization on manifolds.
\subsection{Rigid Body Dynamics}
The majority of robotics applications model the dynamics in generalized coordinates. By expressing the kinetic energy in terms of joint angles, the robot dynamics evolves on the Riemannian manifold with the metric defined by the inertia~\citep{bullo2019geometric}. Based on this model, the tracking controller \citep{bullo1999tracking} and its variants in task space \citep{ratliff2018riemannian, sentis2005synthesis, khatib1987unified} can be applied for feedback control of robots. On the other hand, the Euler-Poincaré equation~\citep{marsden1998introduction, bloch2003nonholonomic} models the rigid bodies on matrix Lie groups, which is correct-by-construction to respect the topological structure of the rotation group.

The linearized single rigid body dynamics has been applied for tracking control of legged robots \citep{kim2019highly, ding2021representation, agrawal2022vision, teng2022error}. The simplified models with dynamics of angular rates neglected are applied in trajectory generation of the quadrotor \citep{mueller2013computationally}. Though the single rigid models are sometimes simplified, they are representative in many scenarios for robot motion planning. 
\subsection{Trajectory Optimization}
Trajectory optimization aims to synthesize robot motions subject to dynamics, kinematics, input, and environment constraints. The direct collocation method derives the dynamics in discrete time and then conducts optimizations \citep{zucker2013chomp, schulman2014motion, posa2014direct, hereid2017frost, manchester2019contact, Dong-RSS-23, li2023autonomous}. By the state-of-the-art numerical optimization techniques \citep{wachter2006implementation, gill2005snopt}, the direct methods can handle large-scale problems with complicated constraints. 


%

The full rigid body dynamics is incorporated to quaternion-based MPC for real-time control of quadrotor~\citep{nan2022nonlinear, sun2022comparative,sun2025agile}. Due to the manifold nature of rotations, the quaternion-based MPC does not fully respect the manifold constraints, as its integrators are designed in the ambient space without the algebraic constraints of quaternions. To handle full rotation dynamics with correct manifold constraints in discrete time, the variational integrator-based~\citep{marsden1998introduction, lee2005lie} has been applied to~\ref{eq:TO}. By the variational integrator, there is no need to integrate the continuous-time vector field to obtain the on-manifold states that introduce the singularities~\citep{teng2024convex}. The OCP with discrete dynamics on the Lie group is formulated in \citep{kobilarov2011discrete} and can be solved via iterative root finding to meet the first-order optimality condition. The on-manifold Model Predictive Control (MPC) \citep{kalabic2017mpc} synthesized the trajectory with full dynamics on $\mathrm{SO}(3)$ and applied the numerical optimization by lifting the variable to the Lie algebra \citep{lee2005lie}. As \citep{boutselis2020discrete, kobilarov2011discrete, kalabic2017mpc} are designed on Lie groups, they do not face the singularity of gimbal locks when using Euler angles. However, the Rodrigues formula \citep{kalabic2017mpc}, a three-DOF parameterization of the rotational map, inevitably introduces singularities. Though such singularities can sometimes be avoided by manually rounding the velocity vector \citep{wang2025unlocking}, an optimization framework that intrinsically handles the topological structures of the rotation group remains absent. Other than the direct collocation method, the shooting-based method, such as the Differential Dynamic Programming (DDP) is applied in \citep{boutselis2020discrete} to synthesize an optimal trajectory on the matrix Lie groups. However, its dynamics are formulated by integrating the continuous-time vector field, which does not have a closed-form solution. 

An alternative solution to trajectory optimization on manifolds is to leverage the embedding theorem~\citep{hirsch2012differential} to formulate the optimizations in the ambient space~\citep{bonalli2019trajectory}. However, such an extrinsic representation inevitably increases the size of the optimization and still requires the mapping from the velocities in the tangent space to the manifold that can potentially introduce singularities. 

\subsection{Optimization on Manifolds}
On-manifold optimization has been extensively applied to problems involving variables defined on smooth manifolds \citep{boumal2023introduction, absil2008optimization}, such as matrix completion \citep{vandereycken2013low} and Semi-Definite Programming (SDP) \citep{wang2023solving, burer2003nonlinear}. With proper on-manifold derivatives and the retraction map, one can seamlessly extend traditional first- and second-order numerical optimizations to the manifold setting. Since geometric optimization inherently finds search directions on the manifold, it involves far fewer constraints than formulations in the ambient space. For example, the on-manifold Gauss-Newton method has been successfully applied in SLAM \citep{forster2016manifold} on $\mathrm{SE}(3)$. Similarly, optimization on the Stiefel manifold has been proposed to solve the relaxed pose graph optimization to obtain globally optimal solutions \citep{rosen2019se}. Furthermore, Riemannian optimization has also demonstrated superior convergence rates in continuous sensor registrations \citep{clark2021nonparametric}.

Beyond perception tasks, OCPs for rigid bodies have been modeled on matrix Lie groups~\citep{teng2024convex, Teng-RSS-23, teng2022lie, teng2022error, 10301632} and could potentially benefit from on-manifold optimization techniques. The symmetry of Lie groups has been widely applied in observer design~\citep{barrau2016invariant, hartley2020contact, van2022equivariant} and feedback control~\citep{teng2021legged, teng2022error}. More recent work has also leveraged the manifold structure in learning to enhance the fidelity in Hamiltonian~\citep{duong2022adaptive, duong2024port} and hybrid systems~\citep{teng2025chyll}.

However, there is an absence of constrained on-manifold optimization solvers tailored for motion planning of rigid bodies on matrix Lie groups. To address this gap, the application of \emph{constrained} Riemannian optimization \citep{schiela2020sqp, obara2022sequential, yamakawa2022sequential, liu2020simple, lai2024riemannian} holds significant promise to consider general nonlinear constraints on matrix Lie groups. In this work, we developed the manifold extension of the interior point method solvers for \ref{eq:TO}.




\section{Math Preliminaries}
\label{sec:prelim}

In this section, we review the geometric ingredients to define second-order optimization methods on manifolds, followed by their specialization to matrix Lie groups. 

\subsection{Second-order Model on Manifolds}

Let $\mathcal{M}$ be a finite-dimensional smooth manifold. We denote the tangent (resp. co-tangent) space at $x\in \mathcal{M}$ by $\mathrm{T}_x\mathcal{M}$ (resp. $\mathrm{T}^*_x\mathcal{M}$) and the tangent bundle by $\mathrm{T}\mathcal{M}$ (resp. $\mathrm{T}^*\mathcal{M}$). A vector field is a map $\zeta:\mathcal{M}\to \mathrm{T}\mathcal{M}$ with $\zeta(x)\in \mathrm{T}_x\mathcal{M}$, and we denote the set of smooth vector fields by $\mathfrak{X}(\mathcal{M})$.

\begin{definition}[Affine Connection]
An affine connection on $\mathcal{M}$ is a $\mathbb{R}$-bilinear map
$
\nabla:\mathfrak{X}(\mathcal{M})\times\mathfrak{X}(\mathcal{M})\to\mathfrak{X}(\mathcal{M})
$
such that the following conditions are satisfied for $X,Y \in \mathfrak{X}(\mathcal{M})$ and any smooth function $f:\mathcal{M} \rightarrow \mathbb{R}$:
\begin{enumerate}
    \item $\nabla_{fX}Y = f\nabla_XY$, and
    \item $\nabla_X(fY) = \mathcal{L}_X f \cdot Y + f\nabla_XY$,
\end{enumerate}
\end{definition}

The affine connection defines directional derivatives of vector fields and covectors, and thus provides the basic ingredient for second-order models on $\mathcal{M}$.

\begin{definition}[Derivative and Hessian]
Let $f:\mathcal{M}\to\mathbb{R}$ be smooth. Its derivative at $x\in\mathcal{M}$ is the covector
\begin{equation}
    \mathrm{D}f(x) \in \mathrm{T}_x^*\mathcal{M},
\end{equation}
whose action on $\xi\in \mathrm{T}_x\mathcal{M}$ is denoted by the differential $\mathrm{D}f(x)[\xi]$.
Given an affine connection $\nabla$, the Hessian of $f$ at $x$ is the bilinear form
\begin{equation}
    \mathrm{Hess}\, f(x)[\xi_1,\xi_2] := (\nabla_{\xi_1} \mathrm{D}f)_x[\xi_2],
    \ \xi_1,\xi_2\in \mathrm{T}_x\mathcal{M}.
\end{equation}
\end{definition}

\begin{definition}[Second-order Model]
Let $f:\mathcal{M}\to\mathbb{R}$ be smooth, let $x\in\mathcal{M}$ and $\xi \in \mathrm{T}_x\mathcal{M}$, and consider the retraction curve $c(t)$ with $c(0)=x$ and $\dot{c}(0)=\xi$. Then the second-order expansion of $f$ along $c$ is
\begin{equation}
\begin{aligned}
    f(c(t)) &= f(x)
    + t\, \mathrm{D}f(x)[\xi] \\
    \quad + \frac{t^2}{2}&\left(
    \mathrm{Hess}\, f(x)[\xi,\xi]
    + \mathrm{D}f(x)\!\left[\frac{D\dot c}{dt}(0)\right]
    \right) + \mathcal{O}(t^3).
\end{aligned}
\end{equation}
\end{definition}

To map the tangent vector $\xi \in \operatorname{T}_x\mathcal{M}$ to the $\mathcal{M}$, we have the retraction map:
\begin{definition}[Retraction] A retraction on $\mathcal{M}$ is a smooth map
\begin{equation}
    \mathrm{R}: \mathrm{T}\mathcal{M} \to \mathcal{M},\quad (x,\xi)\mapsto \mathrm{R}_x(\xi),
\end{equation}
such that the retraction curve $c(t) = \mathrm{R}_x(t\xi)$ satisfies $c(0)=x$ and $\dot c(0)=\xi$.
\end{definition}

Given a retraction, the curve $c(t)=\mathrm{R}_x(t\xi)$ provides a local parametrization around $x$ along which a second-order expansion can be formed.

\subsection{Newton's Method on Manifolds}

We now formulate Newton's method for solving nonlinear equations on manifolds. Let $\zeta$ be a smooth vector field:
\begin{equation}
    \zeta : \mathcal{M} \to \mathrm{T}\mathcal{M},
    \qquad
    \zeta(x) \in \mathrm{T}_x\mathcal{M}.
\end{equation}
We consider the problem of finding a root of $\zeta$, namely $    \zeta(x) = 0.$
Given an affine connection $\nabla$, the Jacobian of $\zeta$ at $x$ is the linear map
\begin{equation}
    J_\zeta(x) : \mathrm{T}_x\mathcal{M} \to \mathrm{T}_x\mathcal{M},
\end{equation}
defined by $J_\zeta(x)\xi := \nabla_{\xi}\zeta,\ \xi \in \mathrm{T}_x\mathcal{M}.$
\begin{definition}[Newton Step]
Let $x_k\in \mathcal{M}$. The Newton direction $\xi_k \in \mathrm{T}_{x_k}\mathcal{M}$ is defined as the solution of
\begin{equation}
    J_\zeta(x_k)\xi_k = -\zeta(x_k).
\end{equation}
The next iterate is given by $x_{k+1} = \mathrm{R}_{x_k}(\xi_k)$, where $\mathrm{R}$ is a retraction.
\end{definition}

The on-manifold version of Newton iterations exhibits similar local convergence as in Euclidean space under proper conditions:
\begin{theorem}[Local Quadratic Convergence~\citep{absil2008optimization}]
\label{theorem:quad-newton}
Let $\zeta$ be a smooth vector field on $\mathcal{M}$, and let $x^\star \in \mathcal{M}$ be a non-degenerate root, i.e.,
\begin{equation}
    \zeta(x^\star)=0,
    \qquad
    J_\zeta(x^\star) \text{ is nonsingular}.
\end{equation}
Then there exists a neighborhood $\mathcal{U}$ of $x^\star$ such that, for any initial point $x_0 \in \mathcal{U}$, the Newton iteration generates a sequence $\{x_k\}$ that converges to $x^\star$ at least quadratically.
\end{theorem}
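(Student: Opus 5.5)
The plan is to reduce the statement to the classical Euclidean Newton theorem by working in a chart around $x^\star$. Fix a smooth chart $\varphi:\mathcal{V}\to\mathbb{R}^n$ with $x^\star\in\mathcal{V}$ and $\varphi(x^\star)=0$. In these coordinates the vector field is a smooth map $Z:\mathbb{R}^n\supset\varphi(\mathcal{V})\to\mathbb{R}^n$. The connection has Christoffel symbols $\Gamma$, so the Jacobian reads $J_\zeta(x)\xi = \mathrm{D}Z(y)\xi + \Gamma(y)[\xi, Z(y)]$ with $y=\varphi(x)$. Since $Z(0)=0$, the correction term vanishes at $x^\star$, and $\hat J(y) := \mathrm{D}Z(y)+\Gamma(y)[\cdot,Z(y)]$ satisfies $\hat J(y)-\mathrm{D}Z(y)=\mathcal{O}(\|y\|)$. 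By continuity and nonsingularity of $J_\zeta(x^\star)$, there are a ball $B_\delta(0)$ and a constant $\beta$ with $\|\hat J(y)^{-1}\|\le\beta$ on $B_\delta(0)$, so the Newton direction is well defined there.

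The iteration in coordinates is $y_{k+1} = \hat R_{y_k}(\eta_k)$, where $\eta_k = -\hat J(y_k)^{-1}Z(y_k)$ and $\hat R$ is the retraction expressed in the chart. Smoothness of the retraction, together with $\hat R_y(0)=y$ and $\mathrm{D}\hat R_y(0)=\mathrm{id}$, gives by Taylor's theorem $\|\hat R_y(\eta)-(y+\eta)\|\le c_1\|\eta\|^2$ uniformly for $y$ near $0$ and small $\eta$. This bound needs the retraction to be $C^2$ jointly in $(x,\xi)$. Also $\|\eta_k\|\le\beta\|Z(y_k)\|\le\beta L\|y_k\|$.

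Next, estimate the pseudo-Newton step $y_k+\eta_k$. Write
\[
y_k+\eta_k = \hat J(y_k)^{-1}\bigl(\hat J(y_k)y_k - Z(y_k)\bigr),
\]
and use $Z(y_k)=Z(y_k)-Z(0)=\mathrm{D}Z(y_k)y_k+\mathcal{O}(\|y_k\|^2)$, which is Taylor's theorem with $C^2$ bounds on $Z$. This leaves
\[
\hat J(y_k)y_k - Z(y_k) = \Gamma(y_k)[y_k,Z(y_k)] + \mathcal{O}(\|y_k\|^2).
\]
The Christoffel term is itself $\mathcal{O}(\|y_k\|^2)$ because $\|Z(y_k)\|\le L\|y_k\|$. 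Hence $\|y_k+\eta_k\|\le c_2\|y_k\|^2$. Combining this with the retraction estimate gives $\|y_{k+1}\|\le c_2\|y_k\|^2 + c_1\beta^2L^2\|y_k\|^2 =: C\|y_k\|^2$.

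Choosing $\mathcal{U}=\varphi^{-1}(B_r(0))$ with $r<\min(\delta,1/(2C))$ (and $r$ small enough for the Taylor bounds to hold) makes $\|y_{k+1}\|\le\tfrac12\|y_k\|$. So the iterates stay in the chart, by induction, and converge to $0$ quadratically. Quadratic convergence in chart coordinates is equivalent to quadratic convergence in any Riemannian distance near $x^\star$, since the two are locally bi-Lipschitz.

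I expect the main obstacle to be bookkeeping rather than a conceptual gap. The key observation is that the connection only enters through terms multiplied by $\zeta$, which vanish quadratically near a root, so the choice of affine connection does not affect the rate. The other delicate point is keeping all constants uniform on one neighborhood, including those from the retraction's second-order Taylor remainder. This is the standard argument of Theorem 6.3.2 in \citep{absil2008optimization}, which the statement cites.
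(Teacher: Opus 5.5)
Your proposal is correct and takes essentially the same route as the proof the paper relies on. The paper gives no argument of its own and defers to Absil et al., Theorem 6.3.2, whose proof is this chart-based reduction: the Christoffel term is controlled because $\zeta$ vanishes at $x^\star$, and the retraction is handled by a uniform estimate $\|\hat R_y(\eta)-(y+\eta)\|\le c\|\eta\|^2$, which together give the recursion $\|y_{k+1}\|\le C\|y_k\|^2$ and the induction you describe.
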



\subsection{Matrix Lie Groups}

Let $\mathcal{G}$ be an $n$-dimensional matrix Lie group and $\mathfrak{g}$ the associated Lie algebra, i.e, the tangent space of $\mathcal{G}$ at the identity. For convenience, we define the following isomorphism 
\begin{equation}
    (\cdot)^\wedge:\mathbb{R}^n \rightarrow \mathfrak{g}, \quad (\cdot)^\vee:\mathfrak{g} \rightarrow \mathbb{R}^n. 
\end{equation}
that maps between the vector space $\mathbb{R}^n$ and $\mathfrak{g}$. Then, $\forall \phi \in \mathbb{R}^{n}$, we can define the Lie exponential map as
\begin{equation}
    \exp(\cdot):\mathbb{R}^{n} \rightarrow \mathcal{G},\ \ \exp(\phi)=\operatorname{exp_m}({\phi}^\wedge),
\end{equation}
where $\operatorname{exp_m}(\cdot)$ is the exponential of square matrices. We also define the Lie logarithmic map as the inverse of the Lie exponential map in the local branch:
\begin{equation}
    \log(\cdot): \mathcal{G} \rightarrow \mathbb{R}^{n}. \ \ 
\end{equation}
For every $X \in \mathcal{G}$, the adjoint action, $\mathrm{Ad}_{X}: \mathfrak{g}\rightarrow \mathfrak{g}$, is a Lie algebra isomorphism that enables change of frames:
\begin{equation}
    \mathrm{Ad}_{X}({\phi}^\wedge)= X{{\phi}^\wedge}X^{-1}.
\end{equation}
Its derivative at the identity gives rise to the adjoint map in Lie Algebra as
\begin{equation}
    \mathrm{ad}_{\phi}(\eta) = [{\phi}, {\eta}],
\end{equation}
where $\phi^\wedge, \eta^\wedge \in \mathfrak{g}$ and $[\cdot, \cdot]$ is the Lie bracket such that
\begin{equation}
    [\phi, \eta] = (\phi^\wedge\eta^\wedge - \eta^\wedge\phi^\wedge)^\vee.
\end{equation}

Due to the possible non-commutativity of $\mathcal{G}$, we have the BCH formula~\citep{hall2013lie} to compute $\xi_3$ given $\exp(\xi_3) = \exp(\xi_1)\exp(\xi_2)$ with $\xi_1^\wedge, \xi_2^\wedge,\xi_3^\wedge \in \mathfrak{g}$:
\begin{equation}
\begin{aligned}
        \xi_3 = & \xi_1 + \xi_2 + \frac{1}{2}[\xi_1, \xi_2] \\
        + \frac{1}{12}(& [\xi_1, [\xi_1, \xi_2]] + [\xi_2, [\xi_2, \xi_1]]) \cdots
\end{aligned}\tag{BCH}
\end{equation}

Now we proceed to define the second-order models on the matrix Lie group based on these operations. For $X\in\mathcal{G}$ and $\xi\in\mathfrak{g}$, consider the curve
\begin{equation}
    c(t)=X\exp(t\xi),
\end{equation}
which induces the map $\mathrm{R}_X(\xi)=X\exp(\xi)$. 
Since $\exp(0)=I$ and $\frac{d}{dt}\exp(t\xi)\big|_{t=0}=\xi$, it follows that
$\mathrm{R}_X$ satisfies the retraction conditions, and thus defines a valid local parametrization.

To construct a second-order model on $\mathcal{G}$, we need to equip it with a proper affine connection. 
We choose a connection to be compatible with the Lie exponential retraction, so that the resulting Hessian provides the correct second-order expansion along the update curve $X\exp(t\xi)$: 

\begin{definition}[Second-order Models on Matrix Lie Groups~\citep{mahony2002geometry}]
\label{def:2nd-lie}
Consider a smooth function $f:\mathcal{G}\rightarrow \mathbb R$ and equip $\mathcal{G}$ with an Cartan-Schouten connections, we have $\frac{D\dot c}{dt}(0)=0,$ and the second-order expansion reduces to
\begin{equation}
\begin{aligned}
    f(X\exp(t\xi))
    &=
    f(X)
    + t\,\mathrm{D}f(X)[\xi]  \\
    \quad
    + \frac{t^2}{2}&\,
    \mathrm{Hess}\, f(X)[\xi,\xi]
    + \mathcal{O}(t^3).
\end{aligned}
\end{equation}
\end{definition}
Therefore, Newton's method updates the state by
\begin{equation}
    J_\zeta(X_k)\,\delta \xi_k = -\zeta(X_k),\ X_{k+1}=X_k\exp(\delta \xi_k)
\end{equation}
with $\delta\xi_k \in \mathrm{T}_{x_k}\mathcal{M}$ and retains local superlinear convergence under the non-degenerate conditions in~\Cref{theorem:quad-newton}.

For the choice of an affine connection, including the Levi-Civita connection that is standard in Riemannian optimization~\citep{boumal2014manopt} and Cartan-Schouten connection that is more natural on matrix Lie groups~\citep{mahony2002geometry}, see Appendix.~\ref{appx:connection}.
\section{Discrete Rigid Body Dynamics}
\label{sec:srb-dynamics}
In this section, we introduce the rigid body dynamics for~\eqref{eq:TO} based on the variational integrators on $\mathcal{M}_{\mathrm{RB}}$, i.e., the configuration states in \eqref{eq:TO}. As discussed in~\citep{teng2024convex}, the variational integrator is the only integrator that preserves the manifold structure in discrete time without the constraints in the ambient space.  

\subsection{Variation-based Discretization}

Consider a mechanical system with the configuration space $\mathcal{M}$. We denote the configuration state as $x \in \mathcal{M}$ and the generalized velocity as $\dot{x} \in \operatorname{T}_{x}\mathcal{M}$. Then we have the Lagrangian given the kinetic and potential energy $T(\dot{x}), V(x)$:
\begin{equation}
\label{eq:ct_lag}
    L(x, \dot{x}):=T(\dot{x}) - V(x). 
\end{equation}
The key idea of a variational integrator is to discretize the Lagrangian \eqref{eq:ct_lag} to obtain the discrete-time equation of motion (EoM) \citep{marsden2001discrete}. The discretization scheme ensures that the Lagrangian is conserved in discrete time, thus having superior energy conservation properties over long durations. 
The discrete Lagrangian $L_d: \mathcal{M}\times \mathcal{M} \rightarrow \mathbb{R}$ could be considered as the approximation of the action integral via: 
\begin{equation}
\label{eq:lag_ct2dt}
    L_d(x_k, x_{k+1}) \approx \int ^{t_{k+1}}_{t_k}L(x, \dot{x})dt .
\end{equation}
Then the discrete variant of the action integral becomes:
\begin{equation}
    S_d = \sum_{k=0}^{N-1}{L}_d(x_k, x_{k+1}) .
\end{equation}
Finally, we take variation in $\operatorname{T}\mathcal{M}$ and group the term corresponding to $\delta x_{k} \in \operatorname{T}_{x_k}\mathcal{M}$ as the discrete version of integration by parts \citep{marsden2001discrete}: 
\begin{equation}
\begin{aligned}
  \delta S_d &= D_1 {L}_d(x_0, x_1)[\delta x_0] +  D_2 {L}_d(x_{N-1}, x_N)[\delta x_N]\\ 
    + \sum_{k=1}^{N-1}
     &D_2 {L}_d(x_{k-1},  x_k)[\delta x_k] + D_1 {L}_d(x_k, x_{k+1})[\delta x_k ].
\end{aligned}
\end{equation}
where $D_i$ denotes the derivative with respect to the $i$-{th} argument.  
By the least action principle, the stationary point can be determined by letting the derivative of $\delta x_k$ be zero:
\begin{equation}
\label{eq:discrete_eom}
D_1L_d(x_k, x_{k+1}) + D_2L_d(x_{k-1}, x_{k}) = 0. 
\end{equation}


To incorporate the external force, we regard $u(t)$ as a covector,
$u(t)\in \mathrm{T}_{x(t)}^*\mathcal{M}$, acting on the virtual displacement
$\delta x(t)\in \mathrm{T}_{x(t)}\mathcal{M}$. The virtual work contribution is
approximated by the trapezoidal rule:
\begin{equation}
\begin{aligned}
        &\quad \int_{t_k}^{t_{k+1}} u(t)(\delta x(t))\, dt \\
        &\approx 
        \frac{\Delta t}{2} u_k[\delta x_k]
        + \frac{\Delta t}{2} u_{k+1}[\delta x_{k+1}].
\end{aligned}
\end{equation}

\subsection{Discrete-time Dynamics of Rigid Bodies }
Now, we derive the EoM on $\mathrm{SO}(3)\times \mathbb{R}^3$ using the Lie Group Variational Integrator (LGVI). Consider the discrete equation of motion by one-step Euler integration:
\begin{equation}
\label{eq:SO3R3-kin}
\begin{aligned}
    R_{k+1} &= R_kF_k \in \mathrm{SO}(3), \\
    p_{k+1} &= p_k + v_k \Delta t,
\end{aligned}
\end{equation}
with $R_k$ the orientation, $F_k$ the discrete rotation change, $p_k$ the position, and $v_k$ the linear velocity. The mid-point approximation can be applied:
\begin{equation}
\label{eq:Fk_midpoint}
\begin{aligned}
    F_k := R_k^{-1}R_{k+1} \approx I + \Delta t\omega_k^{\wedge}, \ \ \omega_k^{\wedge} \approx \frac{F_k - I}{\Delta t},
\end{aligned}
\end{equation}
\begin{equation}
\label{eq:pk_midpoint}
\begin{aligned}
    \dot{p}_k = v_k \approx \frac{p_{k+1} - p_k}{\Delta t}.
\end{aligned}
\end{equation}
The kinetic and potential energy can be approximated by:
\begin{align}
    \nonumber T_d&:=\frac{1}{2\Delta t}\operatorname{tr}((F_k - I)I^b(F_k - I)^{\intercal}) + \\ 
    & \quad \quad \frac{1}{2\Delta t}m\|p_{k+1} - p_k\|^2, \\
    V_d&:= m\left(\frac{p_{k+1}+p_{k}}{2}\right)^{\intercal} g \Delta t,
\end{align} 
where $I^b$ is the nonstandard moment of inertia \citep{marsden1999discrete} that relate the standard moment of inertia $I_b$ by $I_b = \operatorname{tr}(I^b)I_3-I^b$. Taking variation on $\mathrm{SO}(3) \times \mathbb{R}^3$, we have the dynamics:
\begin{align}
\label{eq:lgvi_dynamics}
    &F_{k+1}I^b - I^bF_{k+1}^{\intercal}=I^bF_{k} - F_k^{\intercal}I^b, \\ 
    &mv_{k+1} = mv_{k} + mg\Delta t. 
\end{align}
With the constraints formulated on $\mathrm{SO}(3)\times \mathbb{R}^3$, we can obtain the dynamics for multi-body systems. Compared to an explicit integration scheme, the LGVI naturally obeys the manifold constraints and conserves the energy \citep{marsden2001discrete, lee2005lie}. As the LGVI is completely in matrix form, there is no need to move back and forth between the Lie group and its Lie algebra for integration. The reader can refer to \citep{teng2024convex} for a detailed comparison of integrators on Lie groups for motion planning. The dynamics on $\mathrm{SO}(3)\times \mathbb{R}^3$ and $\mathrm{SE}(3)$ is summarized in~\Cref{table:diff-rotation}.  

\section{Differentiate the Rigid Body Dynamics}
\label{sec:diff-srb-dynamics} 
In this section, we derive the first- and second-order derivatives for the rigid body motions. By~\Cref{def:2nd-lie}, we apply the BCH formula of the Lie exponential map to obtain the second-order model of the rigid body dynamics on matrix Lie groups. 

\subsection{Retraction on $\mathrm{SE}(3)$ and $\mathrm{SO}(3)\times\mathbb{R}^3$}
Consider $X \in \mathcal{G}$, the second-order retraction can be obtained by the Taylor expansion of the exponential map:
\begin{equation}
\begin{aligned}
        c(t)=X\exp(t\xi)
            \approx X(I+t\xi^\wedge+\frac{(t\xi^{\wedge})^2}{2})
\end{aligned}
\end{equation}
For $\mathcal{G}=\mathrm{SE}(3)$, we have the second-order retraction as
\begin{equation}
    c_{X}(t)\approx \begin{bmatrix}
        R & p \\
        0 & 1
    \end{bmatrix} \begin{bmatrix}
        I+t\xi_R^\wedge + \frac{t^2}{2}\xi_R^{\wedge2} & t\xi_p + \frac{t^2}{2}\xi_R^\wedge \xi_p \\
        0 & 1
    \end{bmatrix}.
\end{equation}
For $\mathcal{G}=\mathrm{SO}(3)\times \mathbb{R}^3$, similarly we have:
\begin{equation}
    c_{(R, p)}(t) \approx \left(R(I+t\xi_R^\wedge + \frac{t^2}{2}\xi_R^{\wedge2}), p + t\xi_p\right),
\end{equation}
where the perturbation of position is not coupled with the change of orientation. 
\subsection{Differentiate the Kinematics}
To differentiate the kinematics of rotation in~\eqref{eq:SO3R3-kin}, we consider the kinematic chain constraints on $\mathcal{G}$ with finite length $n$. Without loss of generality, assume we have:
\begin{equation}
\label{eq:cons_kin}
    X_1X_2\cdots X_{n-1}X_n = I \in \mathcal{G}.
\end{equation}
Now we leverage the BCH formula to derive the second-order model at the operating point $\overline{X}_1, \overline{X}_2, \cdots, \overline{X}_n$. We denote $\overline{Y}:=\overline{X}_1\overline{X}_2 \cdots \overline{X}_N$. To avoid the use of Jacobians of $\log(\cdot)$ evaluated at point other than $I$, we reformulated constraints \eqref{eq:cons_kin} by multiplying $\overline{Y}^{-1}$ on both sides:
\begin{equation}
\begin{aligned}
&\overline{Y}^{-1}X_1X_2\cdots X_{n-1}X_n = \overline{Y}^{-1},
\end{aligned}
\end{equation}
and vectorize it via the logarithmic map:
\begin{equation}
\label{eq:vec-kin}
    \log{(\overline{Y}^{-1}X_1\cdots X_{n-1}X_n)} = \log{(\overline{Y}^{-1})} \in \mathbb{R}^{\dim \mathfrak{g}}.
\end{equation}
Denote $X_{i, j} = X_i X_{i+1} \cdots X_j$ when $i \le j$ and $X_{i+1, i} = I$, we consider the following identity to move the perturbation from the LHS to the RHS by the adjoint operation:
\begin{equation}
\label{eq:adj-id}
    \begin{aligned}
    \exp{(t\xi)\overline{X}_{i,j}} 
    &= \overline{X}_{i,j} (\overline{X}_{i,j}^{-1} \exp{(t\xi)} \overline{X}_{i,j}) \\ 
    &= \overline{X}_{i,j} \exp{(t \operatorname{Ad}_{\overline{X}_{i, j}^{-1} } \xi)}. 
\end{aligned}
\end{equation}
Then we apply \eqref{eq:adj-id} to group $\xi_i, \in 1,2,\dots, n$ in the kinematic chain:
\begin{equation}
\begin{aligned}
&\overline{Y}^{-1}\overline{X}_{1}\exp{(t\xi_1)}\cdots\overline{X}_{n-1}\exp{(t\xi_{n-1})} \overline{X}_{n}\exp{(t\xi_n)}\\
    =&\overline{Y}^{-1}\overline{X}_{1}\exp{(t\xi_1)} \cdots \\
    &\overline{X}_{n-1} \overline{X}_n \exp{(t \operatorname{Ad}_{\overline{X}_{n, n}^{-1} } \xi_{n-1})}\exp{(t\xi_n)}\\
     =&\overline{Y}^{-1}\overline{X}_{1}\exp{(t\xi_1)} \cdots  \overline{X}_{n-2} \\
     & \overline{X}_{n-1} \overline{X}_n  (\overline{X}_{n}^{-1} \overline{X}_{n-1}^{-1} \exp{(t\xi_{n-2})} \overline{X}_{n-1} \overline{X}_n) \\
     & \exp{(t \operatorname{Ad}_{\overline{X}_{n, n}^{-1} } \xi_{n-1})}\exp{(t\xi_n)}\\
     =&\overline{Y}^{-1}\overline{X}_{1}\exp{(t\xi_1)} \cdots  \overline{X}_{n-2}\overline{X}_{n-1} \overline{X}_n  \cdots\\
     & \exp{(t \operatorname{Ad}_{\overline{X}_{n-1, n}^{-1} }\xi_{n-2})}\exp{(t \operatorname{Ad}_{\overline{X}_{n, n}^{-1} } \xi_{n-1})}\exp{(t\xi_n)}\\
    =&\overline{Y}^{-1}\overline{X}_{1,n}\exp{(t \operatorname{Ad}_{\overline{X}_{2, n}^{-1} } \xi_{1})}\exp{(t \operatorname{Ad}_{\overline{X}_{3, n}^{-1} } \xi_{2})}  \exp{(t\xi_n)} \\
    =&\exp{(t \operatorname{Ad}_{\overline{X}_{2, n}^{-1} } \xi_{1})}\exp{(t \operatorname{Ad}_{\overline{X}_{3, n}^{-1} } \xi_{2})} \cdots \exp{(t\xi_n)}.
\end{aligned}
\end{equation}
Then we proceed to apply the BCH formula:
\begin{equation}
\begin{aligned}
     &\log{\left( \prod_{i=1}^n \exp{\left( t\operatorname{Ad}_{\overline{X}^{-1}_{i+1, n}} \xi_i \right)} \right)} \\
    =& t\operatorname{Ad}_{\overline{X}^{-1}_{2, n}} \xi_1 + \log{\left( \prod_{i=2}^n \exp{\left( t\operatorname{Ad}_{\overline{X}^{-1}_{i+1, n}} \xi_i \right)} \right)} \\
    +& \frac{1}{2} \left[t\operatorname{Ad}_{\overline{X}^{-1}_{2, n}} \xi_1, \log{\left( \prod_{i=2}^n \exp{\left( t\operatorname{Ad}_{\overline{X}^{-1}_{i+1, n}} \xi_i \right)} \right)}\right] \\
    &\quad + \mathcal{O}(t^3).
\end{aligned}
\end{equation}

By repeated application of BCH formula to $\log{\left( \prod_{i=k}^n \exp{\left( t\operatorname{Ad}_{\overline{X}^{-1}_{i+1, n}} \xi_i \right)} \right)}, k\ge 2$, we have the second-order model as: 
\begin{equation}
     \begin{aligned}
         & \quad \quad \log \left( \prod_{i=1}^n \exp{\left( t\operatorname{Ad}_{\overline{X}^{-1}_{i+1, n}} \xi_i \right)} \right) = \\
          &t\sum_{i=1}^n \operatorname{Ad}_{\overline X_{i+1, n}^{-1}} \xi_i + \sum_{i < j} \frac{t^2}{2}[\operatorname{Ad}_{\overline X_{i+1, n}^{-1}} \xi_i, \operatorname{Ad}_{\overline X_{j+1, n}^{-1}} \xi_j] \\
          & \quad + \mathcal{O}(t^3)
     \end{aligned}
\end{equation}

\begin{remark}
If one directly linearizes~\eqref{eq:cons_kin} at an operating point $\overline{Y}\neq I$ without the shifting strategy, then the first-order expansion of
$$\log(X_1X_2\cdots X_n)$$necessarily involves the Jacobians of the logarithm map at poses other than the identity~\citep{barfoot2024state}.  This introduces a dependence on a particular logarithmic chart, since the logarithm map is not a globally smooth single-valued parameterization of $\mathcal{G}$. 
\end{remark}
\begin{remark}
The shifting strategy avoids this dependence by recentering the constraint at the identity before linearization. The resulting second-order model is expressed directly in the Lie algebra, which preserves the symmetry of the kinematic chain and avoids differentiating the log map.
\end{remark}

\begin{table*}[t]
\centering
\caption{Summary of equations of motion for discrete rigid body dynamics and their second-order expansions. We denote the first-order perturbation of position, rotation, velocity, and the discrete angular velocity as $\xi^p, \xi^R, \xi^v$ and $\xi^F$, respectively. All the derived equations can be further evaluated by the automatic differentiation method for implementations.}
\small
\label{table:diff-rotation}
{\renewcommand{\arraystretch}{1.02}
\setlength{\tabcolsep}{4pt}
\begin{tabular}{>{\centering\arraybackslash}m{1.55cm}
                >{\centering\arraybackslash}m{4.95cm}
                >{\centering\arraybackslash}m{9.3cm}}
\hline
 & Constraints & Second-order Expansions \\ \hline

\(\begin{array}{c}
\mathrm{SO}(3)\\
\text{Kin.}
\end{array}\)
&
\[
\begin{aligned}
\log\!\left(R_{k+1}^{-1}R_kF_k\right)=0 \in \mathfrak{so}(3) \\
Y:=R_{k+1}^{-1}R_kF_k
\end{aligned}
\]
&
\[
\begin{aligned}
&-Y^{-1}\xi^R_{k+1}+F_k^{-1}\xi^R_k+\xi_k^F \\
&+\frac12[-Y^{-1}\xi^R_{k+1},\,F_k^{-1}\xi_k^R]
+\frac12[-Y^{-1}\xi^R_{k+1},\,\xi_k^F]  +\frac12[F_k^{-1}\xi_k^R,\,\xi_k^F]
\end{aligned}
\]
\\ \hline

\(\begin{array}{c}
\mathbb{R}^3\\
\text{Kin.}
\end{array}\)
&
\[
p_{k+1}-p_k-v_k\Delta t=0 \in \mathbb{R}^3
\]
&
\[
\xi_{k+1}^p-\xi_k^p-\xi_k^v\Delta t
\]
\\ \hline

\(\begin{array}{c}
\mathrm{SO}(3)\\
\text{Dyn.}
\end{array}\)
&
\[
\begin{aligned}
    (F_{k+1}I^b-I^bF^{\top}_{k+1})^\vee
- \\
(I^bF_k-F_k^{\top}I^b)^\vee=0 \in \mathbb{R}^3 
\end{aligned}
\]
&
\[
\begin{aligned}
&\Bigl(F_{k+1}(\xi^F_{k+1})^\wedge I^b
+I^b(\xi^F_{k+1})^\wedge F_{k+1}^{\top}\Bigr)^\vee\\
&\quad-\Bigl(F_k(\xi_k^F)^\wedge I^b
+I^b(\xi_k^F)^\wedge F_k^{\top}\Bigr)^\vee\\
&\quad+\frac12\Bigl(
F_{k+1}(\xi^F_{k+1})^{\wedge 2}I^b
-I^b(\xi^F_{k+1})^{\wedge 2}F_{k+1}^{\top}
\Bigr)^\vee\\
&\quad-\frac12\Bigl(
F_k(\xi_k^F)^{\wedge 2}I^b
-I^b(\xi_k^F)^{\wedge 2}F_k^{\top}
\Bigr)^\vee
\end{aligned}
\]
\\ \hline

\(\begin{array}{c}
\mathbb{R}^3\\
\text{Dyn.}
\end{array}\)
&
\[
mv_{k+1}-mv_k-mg\Delta t=0 \in \mathbb{R}^3
\]
&
\[
m\xi_{k+1}^v-m\xi_k^v
\]
\\ \hline



\(\begin{array}{c}
\mathrm{SE}(3)\\
\text{Kin.}
\end{array}\)
&
\[
\left\{
\begin{aligned}
&\log\left(R_{k+1}^{-1}R_kF_k\right) = 0,\\
&p_{k+1} - p_k - R_kv_k\Delta t= 0
\end{aligned}
\right.
\]
&
\[
\begin{aligned}
&\mathrm{(rotation)} \text{  Identical to the SO(3) case.} \\
&\mathrm{(translation)} \\
& \xi^p_{k+1}
-F_k^{-1}\xi^p_k
-\Delta t\,\xi^v_k
-\Delta t\,F_k^{-1}(\xi_k^R)^\wedge v_k\\
&\quad+\frac12(\xi^R_{k+1})^\wedge\xi^p_{k+1}
-\frac12F_k^{-1}(\xi_k^R)^\wedge\xi^p_k\\
&\quad-\Delta t\,F_k^{-1}(\xi_k^R)^\wedge F_k\xi^v_k
-\frac{\Delta t}{2}(\xi_k^F)^\wedge\xi_k^v \\
&-\frac{\Delta t}{2}F_k^{-1}(\xi_k^R)^{\wedge 2}v_k
\end{aligned}
\]\\ \hline

\(\begin{array}{c}
\mathrm{SE}(3)\\
\text{Dyn.}
\end{array}\)
&
\[
\left\{
\begin{aligned}
&F_{k+1}I^b-I^bF_{k+1}^{\top} -(I^bF_k-F_k^{\top}I^b)=0,\\
&mv_{k+1}-(mF_k^{\top}v_k+mR_{k+1}^{\top}g\Delta t)=0
\end{aligned}
\right.
\]
&
\[
\begin{aligned}
&\mathrm{(rotation)} \text{  Identical to the SO(3) case.} \\
&\mathrm{(translation)}\ mF_{k+1}\xi^v_{k+1}
-m\xi^v_k
+m(\xi_k^F)^\wedge F_k^{\top}v_k \\
& +m\Delta t\,(\xi^R_{k+1})^\wedge R_{k+1}^{\top}g\\
&\quad+\frac m2F_{k+1}(\xi^F_{k+1})^\wedge\xi^v_{k+1}
+\frac m2(\xi_k^F)^\wedge\xi_k^v\\
&\quad-\frac m2(\xi_k^F)^{\wedge 2}F_k^{\top}v_k
-\frac{m\Delta t}{2}(\xi^R_{k+1})^{\wedge 2}R_{k+1}^{\top}g
\end{aligned}
\]\\ \hline
\end{tabular}}
\end{table*}

\subsection{Differentiate the Dynamics}
\begin{table*}[t]
\centering
\caption{Comparison of rotation dynamics and its possible singularities. We note that our derivation in~\Cref{table:diff-rotation} is free of any of the problems. For the computation of quaternions, we have $q=\left[\begin{array}{l}
q_w \\
q_v
\end{array}\right],\ q_w \in \mathbb{R},\ q_v \in \mathbb{R}^3$ and $L(q)=\left[\begin{array}{cc}
q_w & -q_v^{\top} \\
q_v & q_w I_3+q_v^{\wedge}
\end{array}\right] \in \mathbb{R}^{4 \times 4}$ }
\label{table:rotation-coordinate-comparison}
\small
{\renewcommand{\arraystretch}{1.05}
\setlength{\tabcolsep}{4pt}
\begin{tabular}{>{\centering\arraybackslash}m{4.2cm}
                >{\centering\arraybackslash}m{2.2cm}
                >{\centering\arraybackslash}m{7.0cm}
                >{\centering\arraybackslash}m{2.4cm}}
\hline
Coordinate & Integration Scheme & Formula & Singularity \\ 
\hline

Rotation matrix $R\in \mathbb{R}^{3\times 3}, \omega \in \mathbb{R}^3$ & Explicit Euler &
\[
\begin{aligned}
R_{k+1} &= R_k\exp(\omega_k\Delta t)\\
\omega_{k+1} &= \omega_k+I^{-1}((I\omega_k)\times\omega_k)\Delta t
\end{aligned}
\]
& $\exp(\cdot)$ lose rank at $\|\omega_k\Delta t\|=\pi$ \\[1mm] \hline

Euler Angle$\ \ \theta, \dot{\theta} \in \mathbb{R}^{3}$ & Explicit Euler &
\[
\begin{aligned}
\theta_{k+1} &= \theta_k+J(\theta_k)\dot{\theta}_k\Delta t\\
\dot{\theta}_{k+1} &= \dot{\theta}_k
+J(\theta_k)^{-1}I^{-1}((I\omega_k)\times\omega_k)\Delta t
\end{aligned}
\]
& Gimbal Lock. The Jacobian $J(\theta)$ loses rank. \\[1mm] \hline

Quaternion $\ \ q \in \mathbb{R}^4, \omega \in \mathbb{R}^3 $ & Variational &
\[
\begin{aligned}
q_{k+1} &= L(q_k)\bar{\omega}_k,\quad M_{k+1} + m_{k+1}= M_k - m_k \\
M_k&:=I\omega_k\sqrt{1-\|\omega_k\|^2}, m_k=\omega_k\times(I\omega_k) \\
\bar{\omega}&:=[\sqrt{1-\|\omega_k\|^2},\omega^\top]^\top
\end{aligned}
\]
& The square root \\[1mm] \hline

Quaternion $\ \ q \in \mathbb{R}^4, \omega \in \mathbb{R}^3 $  & Explicit Euler &
\[
\begin{aligned}
q_{k+1} &= q_k+\frac{1}{2}L(q_k)\bar{\omega}_k\Delta t\\
\omega_{k+1} &= \omega_k+I^{-1}((I\omega_k)\times\omega_k)\Delta t \\
\bar{\omega}&:=[0,\omega^\top]^\top
\end{aligned}
\]
& No manifold constraints \\[1mm] \hline

\end{tabular}}
\end{table*}
Consider the rotational dynamics in \eqref{eq:lgvi_dynamics}, we again use the retraction curve along the direction $\xi^{F\wedge}_i \in \mathfrak{so}(3)$:
\begin{equation}
    c(t) = \overline{F}_i\exp{(t\xi^{F}_i)} \in \mathrm{SO}(3). 
\end{equation}
As \eqref{eq:lgvi_dynamics} is already a skew matrix, the vectorized perturbed dynamics can be obtained by substituting the retraction curve and applying the $(\cdot)^{\vee}$ map:
\begin{equation}
\begin{aligned}
    &(F_{k+1}\exp{(t\xi^{F}_{k+1})} I^b - I^b\exp{(-t\xi^{F}_{k+1})}F_{k+1}^{\top})^{\vee}= \\
    & \quad (I^bF_{k}\exp{(t\xi^{F}_{k})} - \exp{(-t\xi^{F}_{k})}F_k^{\top}I^b)^{\vee}. 
\end{aligned}
\end{equation}
The second-order expansion can be obtained by substituting the series and keeping the first and second-order terms of $t$.

For the dynamics on $\mathrm{SE}(3)$, we can follow the same procedure to derive the second-order model. Finally, we summarize all the constraints and the corresponding second-order expansions in \Cref{table:diff-rotation}.

\subsection{Singularity of Differentiation Schemes}
Now we compare the proposed differentiation scheme with the existing method in terms of the possible singularities in the rotational dynamics.

As shown in the first and third rows, the second-order model only includes the adjoint action (Lie bracket) for change of frame by matrix multiplication. Thus, the differential of the rotation dynamics and kinematics has no singularity, as the matrix representation is smooth everywhere. 

The possible alternatives are summarized in~\Cref{table:rotation-coordinate-comparison}. As indicated in~\citep{bonalli2019trajectory}, there is no need to impose the explicit manifold constraints in the ambient space if one can properly map the generalized velocity in the tangent space to the manifold. However, we note that on $\mathrm{SO}(3)$, such a mapping inevitably introduces singularities as it is not possible to embed $\mathbb{RP}^3$ in $\mathbb{R}^3$ as a limitation from the embedding theorem~\citep{hirsch2012differential}. As a consequence, any mapping $\mathbb{R}^3\rightarrow \mathbb{SO}(3)$ or $\mathbb{R}^3\rightarrow \mathbb{SU}(2)$ can not be globally differentiable. The possible examples are shown in the third column of~\Cref{table:rotation-coordinate-comparison}. To avoid such an issue, one can model the rotation in the ambient space with explicit manifold constraints, which has been shown in~\citep{teng2025optimization} to have poor convergence. As shown in the last row, one can also discard the manifold constraints with explicit constraints, while the state will no longer stay on the manifold. 
\section{Constrained Optimization on Manifolds} 
\label{sec:RIPM}
In this section, we introduce the \methodlong as the optimization backend for \ref{eq:TO}. A Riemannian Interior Point Method is implemented in \citep{lai2024riemannian} using Manopt~\citep{boumal2014manopt} to derive the gradients in MATLAB; however, it does not apply any advanced globalization strategy to enhance the robustness or convergence. In this work, we provide a customized C++ implementation without dependencies on Manopt. We refer to the classical Interior Point Method (IPM) implementation, IPOPT \citep{wachter2006implementation}, for the backtracking line search and inertia correction to improve robustness. 



\subsection{Optimality Conditions on Manifolds}

We introduce the optimality condition of constrained optimization defined on smooth manifolds. By the language of differential geometry, the condition coincides with its counterparts in Euclidean space \citep{yang2014optimality, absil2008optimization}.  
Consider an optimization problem with general nonlinear constraints defined on a smooth manifold $\mathcal{M}$:
\begin{problem}[Constrained Optimization on Manifolds]
\label{prob:cons-nlp}
\begin{equation}
\begin{aligned}
    \min_{x \in \mathcal{M} } \quad &f(x) \\
    \mathrm{s.t.}\quad &h_i(x) = 0,\ i = 1, 2, \cdots, l, \\
    &g_j(x) \le 0,\ j = 1, 2, \cdots, m.
\end{aligned}    
\end{equation}
with the variable $x$ defined on a smooth manifold $\mathcal{M}$, $f(x)$ the cost function, $\{h_i(x)\}_{i=1}^{l}$ the equality constraints and $\{g_i(x)\}_{i=1}^m$ the inequality constraints. 
\end{problem}

With the multiplier $y \in \mathbb{R}^{l}$ and $z \in \mathbb{R}^m$, we have the Lagrangian defined on $\mathcal{M}\times \mathbb{R}^l \times \mathbb{R}^m$:
\begin{equation}
    \mathcal{L}(x, y, z) := f(x) + \sum_{i=1}^{l} y_i h_i(x) + \sum_{i=1}^{m} z_i g_i(x).
\end{equation}
The differential of the Lagrangian with respect to $x$ is
\begin{equation}
\begin{aligned}
    \mathrm{D}_x\mathcal{L}(x, y, z) 
    = &\mathrm{D} f(x) + \sum_{i=1}^{l} y_i \mathrm{D} h_i(x)\\
    + &\sum_{i=1}^{m} z_i \mathrm{D} g_i(x),
\end{aligned}
\end{equation}
which is a covector in $\mathrm{T}_x^*\mathcal{M}$. Then we have the following optimality conditions:
\begin{definition}[(First-Order Optimality Conditions)]
\label{theorem:kkt}
    $x^{\star} \in \mathcal{M} $ is said to satisfy the KKT conditions if there exist multipliers $y^{\star}$ and $z^{\star}$ such that:
    \begin{enumerate}
        \item Stationarity condition: $\mathrm{D}_x\mathcal{L}(x^{\star}, y^{\star}, z^{\star}) = 0$,
        \item Primal feasibility: $h_i(x^{\star}) = 0, \; g_j(x^{\star}) \le 0, \forall i, j$,
        \item Dual feasibility: $z^{\star}_j \ge 0, \forall j$,
        \item Complementarity condition: $z^{\star}_j g_j(x^{\star}) = 0, \forall j$.
    \end{enumerate}
\end{definition}We then proceed to introduce the \method to search for the critical point that satisfies the first-order optimality condition.

\subsection{\methodlong}
\begin{algorithm}[t]
\caption{\methodlong (\method)}\label{alg:line-search-RIPM}
\begin{algorithmic}[1]
\footnotesize
\Require Initial guess $(x, y, z, s) \in \mathcal{M} \times \mathbb{R}^l \times \mathbb{R}^m \times \mathbb{R}^m$, homotopy parameter $\mu$, parameters in \Cref{table:param-RIPM}

\For{$k = 1, \ldots, N_{\max}$}
    \State {\texttt{// Check the termination condition}}
    \If {$E_0(x, y, z, s) \le \epsilon_{\mathrm{tol}}$}
        \State \textbf{break} 
    \EndIf
    \State
    \State {\texttt{// Decide the search direction}}
    \State $(J_{\zeta_\mu}, \zeta_\mu) \gets$ \eqref{eq:kkt-vec} and \eqref{eq:kkt-jac} evaluated at $(x, y, z, s)$
        \State $(J_{\zeta_\mu}, \delta_w^{\mathrm{last}}) \gets (J_{\zeta_\mu}, \mu, \delta_w^{\mathrm{last}})$ \Comment{\Cref{alg:inertia-correction}}
    \State $(\xi_x, \xi_{y}, \xi_z, \xi_s) \gets (J_{\zeta_\mu}, \zeta_\mu)$   \Comment{Newton step \eqref{eq:newton-kkt}}

    \State
    \State {\texttt{// Update the homotopy parameter}}     
    \If {$E_\mu(x, {y}, z, s) \le 10\mu$}
        \State $\mu \gets \max\{ \frac{\epsilon_{\mathrm{tol}}}{10}, \min \{ \kappa_{\mu}\mu, \mu^{\theta_{\mu}} \} \}$
    \EndIf

    \State
    \State {\texttt{// Step size by fraction to boundary rule}}
    \State $\tau \gets \max \left\{\tau_{\min}, 1-\mu\right\}$
    \State $\alpha^z \gets \max\{ \alpha \in (0, 1] \ | \ z + \alpha \xi_z \ge (1 - \tau_j) z \} $
    \State $\alpha^s \gets \max\{ \alpha \in (0, 1] \ | \ s + \alpha \xi_s \ge (1 - \tau_j) s \} $

    \State $\xi_{z} \gets \alpha^{z} \xi_{z} $
    \State $\xi_{s} \gets \alpha^{s} \xi_{s} $

    \State
    \State {\texttt{// Backtracking line-search}}
    \State $ \varphi_{\mu, k}^* \gets \varphi_{\mu}(x) $
    \State $\theta^* \gets \|h(x)\|_1 + \sum_{i, g_i(x) \ge 0} g_i(x) $
    
    \For {$j = 1, \ldots, j_{\max}$}
        \State $\Tilde{x} \gets \operatorname{R}_{x}(\xi_x)$ \Comment{Retraction map}

        \State $\Tilde{\varphi_{\mu}} \gets \varphi_{\mu}(\Tilde{x}) $, \Comment{Loss at trial point}
        \State $\Tilde{\theta} \gets \|h(\Tilde{x})\|$ \Comment{Constraints violation}
        \State $c \gets \operatorname{D}_{x} f(\Tilde{x})[\xi_x]$ \Comment{Improvement of loss}

        \\
        \If {$\theta^* \le \theta_{\min}$ \textbf{and} $c<0$ \textbf{and} $\alpha_{k, j}c^{s_{\varphi}} > \delta \theta^{*s_{\theta}} $}
        \If { $\varphi_{\mu}(\Tilde{x}) \le \varphi_{\mu}(x) + \eta_{\varphi}c $ }
        \State \textbf{break} \Comment{Armijo condition satisfied}
        \EndIf
        
        \ElsIf {$\Tilde{\varphi}_{\mu} \le {\varphi}_{\mu, k}^* - \gamma_{\theta} {\theta}^* $ \textbf{or} {$\Tilde{\theta} \le (1 - \gamma_{\theta}) {\theta}^* $}}
        \State \textbf{break} \Comment{Feasibility or cost improved}
        \EndIf
        
        \State $\xi_x \gets \beta \xi_x$ \Comment{Reduce the step size}
    \EndFor
    \State
    \State {\texttt{// Update the variables}}
    \State $x \gets \Tilde{x}$
    \State $y \gets y + \xi_y$
    \State $z \gets z + \xi_z$
    \State $s \gets s + \xi_s$
\EndFor \\
\Return $(x, y, z, s)$
\end{algorithmic}
\end{algorithm}

We apply a line-search scheme with inertia correction in the \method framework to find the KKT pair $(x^{\star}, y^{\star}, z^{\star}) \in \mathcal{M}\times \mathbb{R}^l \times \mathbb{R}^m$ that satisfies the optimality condition in \Cref{theorem:kkt}. As an implementation of IPM, we consider the log-barrier problem with homotopy parameter $\mu$ and slack variables $s$:
\begin{problem}[Log-barrier Problem]
\begin{equation}
\begin{aligned}
    \min_{x \in \mathcal{M}, s} \quad \varphi_{\mu} := &f(x) - \mu \sum_{i=1}^{m} \log{s_i} \\
    &h_i(x) = 0, i = 1, 2, \cdots, l, \\
    &g_j(x) + s_j = 0, j = 1, 2, \cdots, m, \\
    & s \ge 0.
\end{aligned}
\end{equation}
\end{problem}
{\color{black}
By letting $\mu \rightarrow 0$, $\varphi_\mu$ converges to $\varphi_0$~\citep{wright2006numerical}. To search the local optimum for each $\varphi_\mu$, we derive the KKT vector field and search for the root that satisfies the optimality condition. Here we consider multipliers $y, z$ and define the variable $w = (x,y,z,s) \in \mathcal{K} :=\mathcal{M}\times \mathbb{R}^l \times \mathbb{R}^m\times \mathbb{R}^m$. The KKT vector field takes the form:
}
\begin{equation}
\label{eq:kkt-vec}
\begin{aligned}
    \zeta_\mu &: \mathcal{K} \rightarrow \operatorname{T}^*\mathcal{K} \\
    \zeta_{\mu}(w) &:= \begin{bmatrix}
\operatorname{D}_x f(x) + A_E(x)y + A_I(x)z \\
h(x) \\
g(x)+s \\
Sz - \mu e
\end{bmatrix},
\end{aligned}
\end{equation}
with the Jacobian of the constraints:
\begin{equation}
    A_{\mathrm{E}}(x): \mathbb{R}^{l} \rightarrow \operatorname{T}^*_x\mathcal{M}:  A_{\mathrm{E}}(x)y := \sum_{i=1}^{l} y_i \operatorname{D}_x h_i(x),
\end{equation}
\begin{equation}
    A_{\mathrm{I}}(x): \mathbb{R}^{m} \rightarrow \operatorname{T}^*_x\mathcal{M}: A_{\mathrm{I}}(x)z := \sum_{i=1}^{m} z_i \operatorname{D}_x g_i(x).
\end{equation}
Then we have the Newton iteration as:
\begin{equation}
\label{eq:newton-kkt}
    J_{\zeta_\mu}(w) \xi_w = - \zeta_\mu(w)
\end{equation}
with the Jacobians of the KKT vector field as:
\begin{equation}
\label{eq:kkt-jac}
\begin{aligned}
J_{\zeta_\mu} &: \operatorname{T}\mathcal{K}\rightarrow \operatorname{T}^*\mathcal{K} \\
J_{\zeta_\mu} &= \left[\begin{array}{cccc}
H(x, y, z) & A_{\mathrm{E}}(x) & A_{\mathrm{I}}(x)  & 0 \\
A_{\mathrm{E}}^*(x) & 0 & 0 & 0 \\
A_{\mathrm{I}}^*(x) & 0 & 0 & I \\
0 & 0 & S & Z
\end{array}\right],
\end{aligned}
\end{equation}
$(\xi_x, \xi_y, \xi_z, \xi_s) \in \operatorname{T}\mathcal{K} $ the search direction, $Z = \operatorname{diag}(z)$,  $S = \operatorname{diag}(s)$ and {$A^{*}$ the adjoint of $A$}. The Hessian of the Lagrangian can be obtained by:
\begin{equation}
\begin{aligned}
    H(x, y, z) = &\operatorname{Hess}_x f(x) + \sum_{i=1}^{l} y_i \operatorname{Hess}_x h_i(x)\\ + &\sum_{i=1}^{m} z_i \operatorname{Hess}_x g_i(x).
\end{aligned}
\end{equation}

After obtaining the search direction via \eqref{eq:newton-kkt}, we implement a backtracking line-search method with inertia correction to decide the step size \citep{wachter2006implementation}. The rationale of inertia correction is to add a block diagonal regularizer to the KKT systems to ensure the number of positive and negative eigenvalues is correct. The procedure of the inertia correction is summarized in \Cref{alg:inertia-correction} in the Appendix~\ref{sec:inertia}. To fully utilize the symmetry of the KKT systems, we consider reducing the KKT matrix $J_{\zeta_\mu}$ to a symmetric indefinite matrix as summarized in the Appendix~\ref{appx:sym-lin-sys}.  

The procedure of \method is summarized in \Cref{alg:line-search-RIPM}. After an initial step size is determined, \Cref{alg:line-search-RIPM} accepts the step only if the improvement in feasibility or the cost reduction is sufficient. The condition for adjusting the homotopy parameter for the log-barrier problem is determined by the violation of the optimality condition $E_{\mu}$:
\begin{equation}
\label{eq:converge}
E_{\mu} = \max\left\{ \epsilon_{\mathrm{KKT}}, \epsilon_{\mathrm{E}},
\epsilon_{\mathrm{I}}, \right\},
\end{equation}
with the violation of the KKT condition, equality constraints and inequality constraints being
\begin{equation}
\begin{aligned}
\epsilon_{\mathrm{KKT}} &= \frac{\| \operatorname{D}_xf(x)+A_{\mathrm{E}}y + {A}_Iz \|_{\infty}}{s_d}, \\
\epsilon_{\mathrm{E}} &= \| h(x) \|_{\infty}, \ \ \epsilon_{\mathrm{I}} = \frac{\| Sz - \mu e \|_{\infty}}{s_c},
\end{aligned}
\end{equation}
respectively. The $s_d$ and $s_c$ are normalizing parameters, and the termination condition for \Cref{prob:cons-nlp} is to check \eqref{eq:converge} when $\mu=0$, i.e., $E_{0}$. The meaning of all the parameters used in \method is presented in the Appendix. \ref{appx:ripm-table}.

\subsection{Convergence Analysis}

We briefly discuss the local and global convergence property of \Cref{alg:line-search-RIPM}. 

\textbf{Local Convergence}: We show the local convergence of \method under standard regularity condition assume the Linear Independence Constraint Qualification (LICQ), Strong Complementarity (SC), and Second-order Sufficient Condition (SOSC): 
\begin{restatable}[Local regularity at the KKT point]{assumption}{localregularity}
\label{ass:local-regularity}
Let $w^\star=(x^\star,y^\star,z^\star,s^\star)\in 
\mathcal{M}\times\mathbb{R}^l\times\mathbb{R}^m\times\mathbb{R}^m$
be a KKT point of \Cref{prob:cons-nlp}, with $s^\star=-g(x^\star)$.
Let $\mathcal{E}:=\{1,\dots,l\},\
    \mathcal{I}:=\{1,\dots,m\},\
    \mathcal{A}:=\mathcal{A}(x^\star):=\{j\in\mathcal{I}\mid g_j(x^\star)=0\}$ denotes the index set of the constraints. 
Assume that:
\begin{enumerate}
    \item LICQ: 
    the set $\{\mathrm{D}h_i(x^\star)\}_{i\in\mathcal{E}} \cup \{\mathrm{D}g_j(x^\star)\}_{j\in\mathcal{A}}$
    is linearly independent.
    \item SC:
    $z_j^\star>0$ for all $j\in\mathcal{A}$.
    \item SOSC:
    $H(x^\star,y^\star,z^\star)[\xi,\xi] > 0$
    for every nonzero $\xi \in \mathrm{T}_{x^\star}\mathcal{M}$ satisfying
    $\mathrm{D}h_i(x^\star)[\xi]=0,\ \forall i\in\mathcal{E},\quad
    \mathrm{D}g_j(x^\star)[\xi]=0,\ \forall j\in\mathcal{A}$.
\end{enumerate}
\end{restatable}

\begin{restatable}[Local superlinear convergence of \method]{theorem}{localconvergence}
\label{thm:local-superlinear}
Suppose \Cref{ass:local-regularity} holds at \(w^\star\). Then the primal-dual KKT Jacobian \(J_{\zeta_0}(w^\star)\) is nonsingular. Consequently, for each fixed \(\mu\) sufficiently small, the Newton iteration
\begin{equation}
J_{\zeta_\mu}(w_k)\,\xi_k=-\zeta_\mu(w_k),
w_{k+1}=\mathrm R_{w_k}(\xi_k),
\end{equation}
exhibits local superlinear convergence. Moreover, if $\mu_k=o(\|\zeta_0(w_k)\|)$,
then \(\{w_k\}\) converges superlinearly to \(w^\star\); if $\mu_k=\mathcal O(\|\zeta_0(w_k)\|^2),
$ then \(\{w_k\}\) converges quadratically to \(w^\star\).
\end{restatable}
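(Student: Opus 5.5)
Since the KKT Jacobian $J_{\zeta_\mu}$ is built in a chart-free way, we will pass to a local frame and reduce everything to the classical Euclidean argument. Fix a basis of $\mathrm{T}_{x^\star}\mathcal{M}$ and extend it to a smooth local frame near $x^\star$ (for matrix Lie groups, left translation of a basis of $\mathfrak{g}$). In this frame $\zeta_\mu$ becomes a smooth map $\mathbb{R}^{n+l+2m}\to\mathbb{R}^{n+l+2m}$ and $J_{\zeta_\mu}$ becomes its Jacobian. Under the Cartan--Schouten connection of \Cref{def:2nd-lie}, $H$ agrees with the coordinate Hessian along $X\exp(t\xi)$. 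At a KKT point $\mathrm{D}_x\mathcal{L}=0$, so the connection-dependent first-order term of the Hessian vanishes and $J_{\zeta_0}(w^\star)$ does not depend on the choice of connection.

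\textbf{Nonsingularity.} Suppose $J_{\zeta_0}(w^\star)(\xi_x,\xi_y,\xi_z,\xi_s)=0$, with $S^\star=\mathrm{diag}(s^\star)$ and $Z^\star=\mathrm{diag}(z^\star)$. The last block row gives $s_j^\star\xi_{z,j}+z^\star_j\xi_{s,j}=0$.
\begin{itemize}
\item For inactive $j\notin\mathcal{A}$ we have $s^\star_j>0$ and $z^\star_j=0$ by complementarity, so $\xi_{z,j}=0$.
\item For active $j\in\mathcal{A}$ we have $s^\star_j=0$ and $z^\star_j>0$ by SC, so $\xi_{s,j}=0$. The third block row then gives $\mathrm{D}g_j(x^\star)[\xi_x]=0$.
\end{itemize}
The second block row gives $\mathrm{D}h_i(x^\star)[\xi_x]=0$, so $\xi_x$ lies in the critical cone of the SOSC. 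Pairing the first block row with $\xi_x$ kills the multiplier terms and leaves $H[\xi_x,\xi_x]=0$, hence $\xi_x=0$ by SOSC. The first block row then reduces to $A_E y+A_{I,\mathcal{A}}z_{\mathcal{A}}=0$, and LICQ forces $\xi_y=0$ and $\xi_{z,\mathcal{A}}=0$. Finally, the third row for $j\notin\mathcal{A}$ gives $\xi_{s,j}=0$. The kernel is trivial.

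\textbf{Superlinear and quadratic convergence.} Since $\zeta_\mu=\zeta_0-\mu\bar e$ with $\bar e=(0,0,0,e)$, we have $J_{\zeta_\mu}=J_{\zeta_0}$ independently of $\mu$. By the implicit function theorem, nonsingularity at $w^\star$ yields a smooth branch $w^\star(\mu)$ of roots of $\zeta_\mu$ with $\|w^\star(\mu)-w^\star\|=\mathcal{O}(\mu)$, and $J$ remains nonsingular near $w^\star$. For each fixed small $\mu$, $w^\star(\mu)$ is a nondegenerate root, so \Cref{theorem:quad-newton} gives local quadratic, hence superlinear, convergence. The retraction is the product of $X\exp(\cdot)$ with vector addition.

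For varying $\mu_k$, we view a step as an exact Newton step for $\zeta_0$ with perturbed residual $\zeta_0(w_k)-\mu_k\bar e$. In local coordinates the retraction agrees with addition to second order. Using Lipschitz continuity of $J$ and uniform boundedness of $J^{-1}$ near $w^\star$, we aim for the estimate
\[
\operatorname{dist}(w_{k+1},w^\star)\le C\bigl(\operatorname{dist}(w_k,w^\star)^2+\mu_k\bigr).
\]
Since $\|\zeta_0(w_k)\|$ is equivalent to $\operatorname{dist}(w_k,w^\star)$ near a nondegenerate root, $\mu_k=o(\|\zeta_0(w_k)\|)$ gives superlinear convergence and $\mu_k=\mathcal{O}(\|\zeta_0(w_k)\|^2)$ gives quadratic convergence.

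\textbf{Main obstacle.} The delicate step is justifying the Newton-plus-retraction error bound on $\mathcal{M}$ with a perturbed right-hand side, since \Cref{theorem:quad-newton} only covers exact roots. I would redo its proof in exponential coordinates centered at $x^\star$. On a compact neighborhood, $x\exp(\xi)$ differs from the coordinate sum by $\mathcal{O}(\|\xi\|\,\operatorname{dist})$, which is second order and is absorbed into the quadratic term.
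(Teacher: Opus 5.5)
Your proposal is correct and follows essentially the same route as the paper. Both arguments show the kernel of $J_{\zeta_0}(w^\star)$ is trivial by applying SC and complementarity to the last block row, SOSC to the resulting critical direction, and LICQ to the remaining multipliers; both then get fixed-$\mu$ convergence from the manifold Newton theorem and the homotopy case from the perturbed-Newton bound $\operatorname{dist}(w_{k+1},w^\star)\le C(\operatorname{dist}(w_k,w^\star)^2+\mu_k)$, while your implicit-function branch $w^\star(\mu)$ and exponential-coordinate treatment of the retraction error make explicit two points the paper only asserts (existence of the root of $\zeta_\mu$ and the perturbed estimate itself).
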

The proof of the local convergence is inspired by~\citep{lai2024riemannian} and is deferred to Appendix~\ref {appx:proof-convergence}.


\textbf{Global Convergence}: 
\method uses backtracking line search and inertia correction as globalization mechanisms. The inertia correction enforces a solvable KKT system with correct curvature, while the line search ensures sufficient decrease. 
This globalization strategy is mainly used for robustness. We do not employ filter or restoration mechanisms as in~\citep{wachter2006implementation} and therefore do not claim full global convergence from arbitrary initialization. Despite the lack of global convergence, we show that the proposed method still outperforms the method with such mechanisms for~\eqref{eq:TO} on matrix Lie groups.  

\section{Experiments}
\label{sec:num-exp}
In this section, we provide a comprehensive evaluation of the \method on \ref{eq:TO} in numerical benchmarks and deploy \method for hardware experiments. 

\subsection{System setup}

\subsubsection{Cost function design}
We consider the quadratic cost function to indicate the difference between the desired and the actual configurations. For orientation and the discrete angular velocity, we consider the chordal distance-based cost~\citep{lee2010geometric}:
\begin{equation}
\label{eq:SO3-cost}
\begin{aligned}
        &\ell_{\mathrm{SO}(3)}(R, R_d,Q_R) \\
    = &\frac{1}{2}\operatorname{tr}\left( (RR^{\top}_d - I)Q_R(RR^{\top}_d - I)^{\top} \right)
\end{aligned}
\end{equation}
with $Q_R$ a positive-definite weighting matrix. For the position, we consider the quadratic cost function:
\begin{equation*}
    \ell_{\mathbb{R}^3}(p, p_d, Q_p) = \frac{1}{2}(p - p_d)^{\top}Q_p(p - p_d),
\end{equation*}
where $Q_p$ is a positive-definite weighting matrix. 

For \ref{eq:TO}, we have the state $x:=(R,p,F,v)$  in the configuration space $\mathcal{M}_{RB}$ as $\in \mathrm{SO}(3)\times\mathbb{R}^3\times\mathrm{SO}(3)\times \mathbb{R}^3$ or $\mathrm{SE}(3)\times\mathrm{SE(3)}$. The terminal cost then take the form:
\begin{equation*}
\begin{aligned}
        \ell_T(x_T) &= \ell_{\mathrm{SO}(3)}(R_T, R_{d,T}, P_R) + \ell_{\mathbb{R}^3}(p_T,p_{d,T}, P_p) \\
    & + \ell_{\mathrm{SO}(3)}(F_T, F_{d,T}, P_F) + \ell_{\mathbb{R}^3}(v_T,v_{d,T}, P_v).
\end{aligned}
\end{equation*}
and the running cost becomes
\begin{equation*}
\begin{aligned}
        \ell(x_k, u_k) &= \ell_{\mathrm{SO}(3)}(R_k, R_{d,k}, Q_R) + \ell_{\mathbb{R}^3}(p_k,p_{d,k}, Q_p) \\
    & + \ell_{\mathrm{SO}(3)}(F_k, F_{d,k}, Q_F) + \ell_{\mathbb{R}^3}(v_k,v_{d,k}, Q_v) \\
    & + \frac{1}{2}\|u_k-u_{d,k}\|^2_{Q_u}.
\end{aligned}
\end{equation*}
with $Q_{(\cdot)}$ and $P_{(\cdot)}$ the cost matrices.
For $\mathrm{SE}(3)$ and $\mathrm{SO}(3)\times \mathbb{R}^3$, we differentiate the cost function in the corresponding tangent spaces using the corresponding retraction curves. 

For quaternion $q = [q_w, q_{v}^\top]^\top$ with $q_w$ the scale part and $q_v$ the imaginary part, we define the attitude error by the relative quaternion as in~\citep{sun2022comparative}
\begin{equation}
\begin{aligned}
\delta q &=q_{\mathrm{ref}}^{-1}\otimes q \\
&=
\begin{bmatrix}
q_{\mathrm{ref},w}q_w+q_{\mathrm{ref},v}^{\top}q_v\\
q_{\mathrm{ref},w}q_v-q_wq_{\mathrm{ref},v}-q_{\mathrm{ref},v}\times q_v
\end{bmatrix}. 
\end{aligned}
\end{equation}
The quadratic cost penalizes only the imaginary part by
\begin{equation}
\label{eq:quat-cost}
 \ell_{\mathrm{SU}(2)}(q,q_{\mathrm{ref}})
=
\delta q^\top_vQ_q\delta q_v
\end{equation}
with $Q_q$ the positive definite weighting matrix. As only the imaginary part is penalized, \eqref{eq:quat-cost} admits two global minima when $q_w = 1$ or $-1$, which enable the solver to smoothly select the branch. 

In the experiments, we choose two proper isotropic weighting matrices $Q_q$ and $Q_R$ so that
$\ell_{\mathrm{SO}(3)}$ and $\ell_{\mathrm{SU}(2)}$ are exactly equivalent by the fact that
\begin{equation}
\label{eq:R=q}
    \|R-I\|_F^2 = 8\|q_v\|^2. 
\end{equation}
The detailed derivation is included in the Appendix. \ref{appx:q=R}.


\subsubsection{Software setup} We implemented 
\Cref{alg:line-search-RIPM} in C++. We apply the sparse linear solver MUMPS~\citep{amestoy2019performance} to solve the modified symmetric indefinite KKT systems as in Appendix~\Cref{appx:sym-lin-sys}. We use CasADi~\citep{andersson2019casadi} to evaluate the second-order models in \Cref{table:diff-rotation} and then generate the C++ code for function evaluations. 


\begin{figure}
    \centering
    \includegraphics[width=1\linewidth]{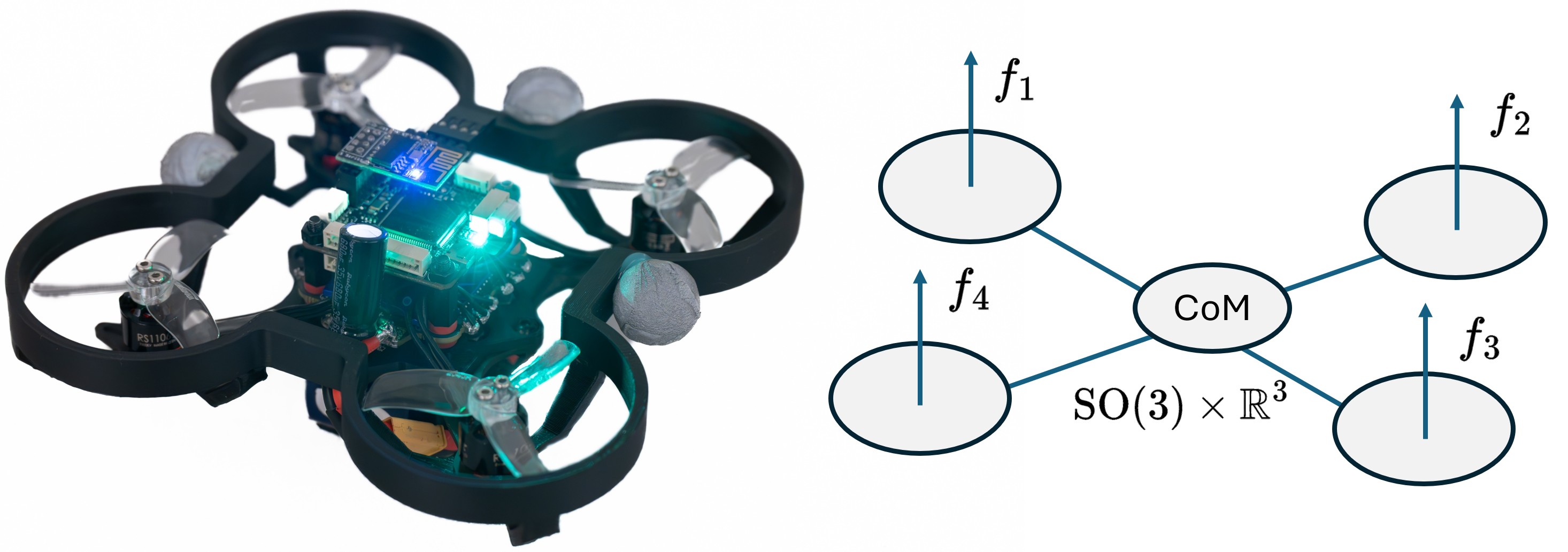}
    \caption{The hardware used in the experiment and the corresponding thrust force. We use a motion capture system for localization. }
    \label{fig:drone}
\end{figure}

\subsection{Convergence of \method}

\begin{figure*}[t]
    \centering
    \includegraphics[width=1\linewidth]{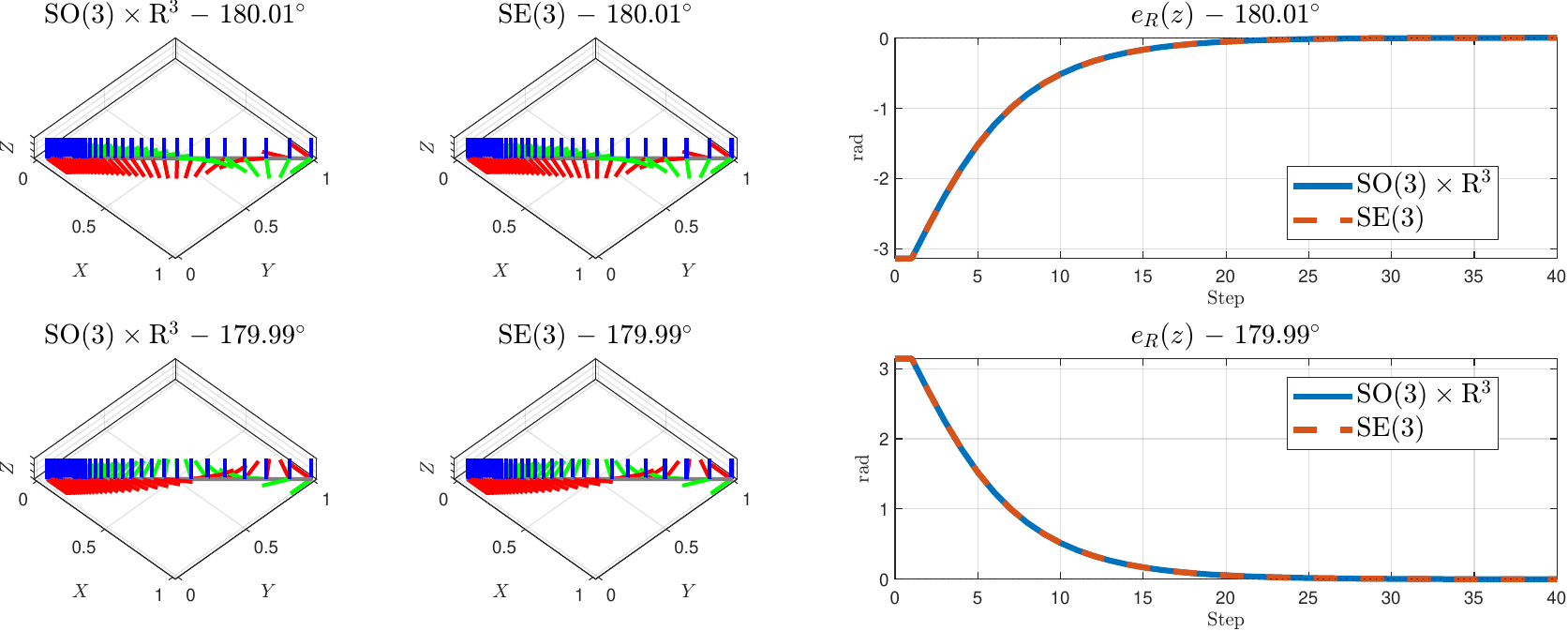}
    \caption{Comparison of the solutions of \method on $\mathrm{SE}(3)$ and $\mathrm{SO(3)}\times \mathbb{R}^3$ for steering a fully actuated rigid body with initial rotation around $z-$axis at location $[1,1,0]$ to the identity pose. We find that \method is capable of generating \emph{discontinuous} solutions as expected for rotation groups~\citep{kalabic2017mpc, bhat2000topological}.}
    \label{fig:SE3_SO3_traj}
\end{figure*}

\begin{figure}
    \centering
    \includegraphics[width=1\linewidth]{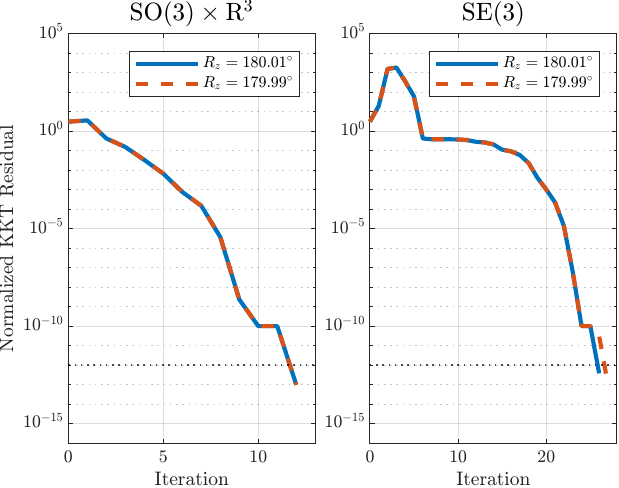}
    \caption{The convergence of the four cases in~\Cref{fig:SE3_SO3_traj}. We find that the proposed method exhibits superlinear convergence for both $\mathrm{SE(3)}$ and $\mathrm{SO}(3)\times \mathbb{R}^3$. This result matches the theory in~\citep{mahony2002geometry, absil2008optimization} and verifies that our analytical gradients in~\Cref{table:diff-rotation} are derived correctly.}
    \label{fig:SE3_SO3_converge}
\end{figure}

We first test a few numerical examples to show the convergence of the proposed method on $\mathrm{SO}(3)\times \mathbb{R}^3$ and $\mathrm{SE}(3)$. We show that the derived gradients based on the Lie exponential enable superlinear convergence.  

We consider steering a fully actuated rigid body with an initial rotation around the $z-$axis for $180^\circ\pm0.01^\circ$ that is perturbed around the singular pose $180^\circ$. In \Cref{fig:SE3_SO3_traj}, we see that the proposed method is capable of finding solutions on both $\mathrm{SE}(3)$ and $\mathrm{SO}(3)\times \mathbb{R}^3$. As implied by the topology of the rotation group~\citep{bhat2000topological, kalabic2017mpc}, a small perturbation around this singular pose yields a discontinuous change in the geodesics that results in a discontinuous control law. As in \Cref{fig:SE3_SO3_converge}, both methods exhibit superlinear convergence. Though the $\mathrm{SE}(3)$ case is not applying the Riemannian retraction, the expansion of the Lie exponential supplies a second-order model compatible with the Cartan-Schouten connection that still guarantees the superlinear convergence. This result matches the theory in~\citep{mahony2002geometry, absil2008optimization} and verifies that our analytical gradients in~\Cref{table:diff-rotation} are derived correctly.

\subsection{Numerical Benchmark}

\begin{table*}[t]
\centering
\small
\caption{Comparison of trajectory optimization methods. We compare the direct collocation that uses \method, IPOPT, and SNOPT. We also consider the methods leveraging the temporal structures of the optimal control problem, such as single- or multiple-shooting methods with different optimization backends. We note that only \method leverages the manifold structures while the rest have to convert the problem to Euclidean space. } 
\label{tab:solver_compare}
\begin{tabular}{l c c c}
\hline
Method & Formulation & Optimization Backend & Support of Manifold Variables\\
\hline
\textbf{\method} & {} & IPM & Correct-by-Construction\\
IPOPT~\citep{wachter2006implementation}  & Direct collocation & IPM & No \\
SNOPT~\citep{gill2005snopt}  & {} & SQP & No\\ \hline
ACADO~\citep{houska2011acado}  & Multiple shooting & SQP & No\\
CROCODDYL~\citep{mastalli2020crocoddyl}  & Multiple-shooting & DDP & No\\
ALTRO~\citep{howell2019altro}  & Single-shooting & iLQR & No\\
\hline
\end{tabular}
\end{table*}

\begin{table*}[t]
\centering 
\caption{Results with the \textbf{variational integrator}. Entries in the last four columns are reported as mean$\pm$std(success mean$\pm$std). We also reported the statistics for the successful case, as many failure cases are due to running out of computation budget, i.e., the maximal number of iterations. We use a laptop with an Intel i7-11850H CPU to test the numerical benchmarks in a single core. Among those whose convergence rate is bigger than 50\%: {\color{green!60} Best} {\color{green!20} Second Best}. }
\label{tab:uniform-vi-results-std}
\footnotesize
\setlength{\tabcolsep}{5pt}
\begin{tabular}{ccccccc}
\hline
Algorithm & Noise & Converged & Avg. \#Iter & Avg. \#Iter (success) & Wall Clock (s) & Wall Clock (success) (s) \\
\hline
\multirow{4}{*}{\textbf{\method}}
& none & \cellcolor{green!60}999 & \makebox[3.2em][r]{$ 22.1 $} $\pm$ \makebox[3.2em][l]{$ 7.9 $} & \makebox[3.2em][r]{$ 22.2 $} $\pm$ \makebox[3.2em][l]{$ 7.9 $} & $ 1.56e-02 \pm 8.53e-03 $ & $ 1.56e-02 \pm 8.51e-03 $ \\
& $0.1$ & \cellcolor{green!60}999 & \cellcolor{green!20}\makebox[3.2em][r]{$ 10.2 $} $\pm$ \makebox[3.2em][l]{$ 1.9 $} & \cellcolor{green!20}\makebox[3.2em][r]{$ 10.2 $} $\pm$ \makebox[3.2em][l]{$ 1.9 $} & \cellcolor{green!60}$ 8.30e-03 \pm 1.82e-03 $ & \cellcolor{green!60}$ 8.29e-03 \pm 1.82e-03 $ \\
& $0.2$ & \cellcolor{green!60}998 & \cellcolor{green!20}\makebox[3.2em][r]{$ 12.5 $} $\pm$ \makebox[3.2em][l]{$ 31.4 $} & \cellcolor{green!20}\makebox[3.2em][r]{$ 11.5 $} $\pm$ \makebox[3.2em][l]{$ 2.4 $} & \cellcolor{green!60}$ 9.38e-03 \pm 1.97e-02 $ & \cellcolor{green!60}$ 8.76e-03 \pm 2.02e-03 $ \\
& $0.3$ & 901 & \cellcolor{green!20}\makebox[3.2em][r]{$ 13.5 $} $\pm$ \makebox[3.2em][l]{$ 4.8 $} & \cellcolor{green!20}\makebox[3.2em][r]{$ 14.6 $} $\pm$ \makebox[3.2em][l]{$ 3.5 $} & \cellcolor{green!60}$ 1.34e-02 \pm 5.24e-03 $ & \cellcolor{green!60}$ 1.20e-02 \pm 3.39e-03 $ \\
\hline
\multirow{4}{*}{IPOPT}
& none & 862 & \makebox[3.2em][r]{$ 316.0 $} $\pm$ \makebox[3.2em][l]{$ 306.0 $} & \makebox[3.2em][r]{$ 220.0 $} $\pm$ \makebox[3.2em][l]{$ 178.0 $} & $ 9.45e-01 \pm 1.05e+00 $ & $ 6.54e-01 \pm 6.64e-01 $ \\
& $0.1$ & 860 & \makebox[3.2em][r]{$ 312.0 $} $\pm$ \makebox[3.2em][l]{$ 321.0 $} & \makebox[3.2em][r]{$ 206.0 $} $\pm$ \makebox[3.2em][l]{$ 188.0 $} & $ 7.51e-01 \pm 8.87e-01 $ & $ 5.08e-01 \pm 5.99e-01 $ \\
& $0.2$ & 825 & \makebox[3.2em][r]{$ 411.0 $} $\pm$ \makebox[3.2em][l]{$ 310.0 $} & \makebox[3.2em][r]{$ 300.0 $} $\pm$ \makebox[3.2em][l]{$ 188.0 $} & $ 9.31e-01 \pm 8.73e-01 $ & $ 6.79e-01 \pm 6.06e-01 $ \\
& $0.3$ & 654 & \makebox[3.2em][r]{$ 600.0 $} $\pm$ \makebox[3.2em][l]{$ 344.0 $} & \makebox[3.2em][r]{$ 402.0 $} $\pm$ \makebox[3.2em][l]{$ 240.0 $} & $ 1.16e+00 \pm 8.62e-01 $ & $ 7.89e-01 \pm 6.07e-01 $ \\
\hline
\multirow{4}{*}{SNOPT}
& none & 920 & \makebox[3.2em][r]{$ 227.0 $} $\pm$ \makebox[3.2em][l]{$ 660.0 $} & \makebox[3.2em][r]{$ 202.0 $} $\pm$ \makebox[3.2em][l]{$ 670.0 $} & $ 3.56e-01 \pm 8.39e-01 $ & $ 3.16e-01 \pm 8.43e-01 $ \\
& $0.1$ & 952 & \makebox[3.2em][r]{$ 159.0 $} $\pm$ \makebox[3.2em][l]{$ 404.0 $} & \makebox[3.2em][r]{$ 142.0 $} $\pm$ \makebox[3.2em][l]{$ 368.0 $} & $ 2.80e-01 \pm 5.77e-01 $ & $ 2.56e-01 \pm 5.43e-01 $ \\
& $0.2$ & 935 & \makebox[3.2em][r]{$ 184.0 $} $\pm$ \makebox[3.2em][l]{$ 496.0 $} & \makebox[3.2em][r]{$ 158.0 $} $\pm$ \makebox[3.2em][l]{$ 434.0 $} & $ 3.26e-01 \pm 7.04e-01 $ & $ 2.90e-01 \pm 6.30e-01 $ \\
& $0.3$ & \cellcolor{green!20}919 & \makebox[3.2em][r]{$ 188.0 $} $\pm$ \makebox[3.2em][l]{$ 392.0 $} & \makebox[3.2em][r]{$ 157.0 $} $\pm$ \makebox[3.2em][l]{$ 310.0 $} & $ 3.37e-01 \pm 5.51e-01 $ & $ 2.91e-01 \pm 4.77e-01 $ \\
\hline
\multirow{4}{*}{CROCODDYL}
& none & 993 & \cellcolor{green!60}\makebox[3.2em][r]{$ 10.6 $} $\pm$ \makebox[3.2em][l]{$ 6.0 $} & \cellcolor{green!60}\makebox[3.2em][r]{$ 10.6 $} $\pm$ \makebox[3.2em][l]{$ 6.0 $} & \cellcolor{green!60}$ 1.62e-03 \pm 1.09e-03 $ & \cellcolor{green!60}$ 1.62e-03 \pm 1.09e-03 $ \\
& $0.1$ & 111 & \makebox[3.2em][r]{$ 3.7 $} $\pm$ \makebox[3.2em][l]{$ 6.4 $} & \makebox[3.2em][r]{$ 8.4 $} $\pm$ \makebox[3.2em][l]{$ 5.3 $} & $ 8.76e-04 \pm 1.28e-03 $ & $ 1.53e-03 \pm 1.04e-03 $ \\
& $0.2$ & 8 & \makebox[3.2em][r]{$ 0.7 $} $\pm$ \makebox[3.2em][l]{$ 3.3 $} & \makebox[3.2em][r]{$ 5.5 $} $\pm$ \makebox[3.2em][l]{$ 3.9 $} & $ 3.17e-04 \pm 7.20e-04 $ & $ 1.74e-03 \pm 9.34e-04 $ \\
& $0.3$ & 5 & \makebox[3.2em][r]{$ 0.1 $} $\pm$ \makebox[3.2em][l]{$ 1.2 $} & \makebox[3.2em][r]{$ 8.0 $} $\pm$ \makebox[3.2em][l]{$ 3.7 $} & $ 2.31e-04 \pm 3.34e-04 $ & $ 2.26e-03 \pm 1.77e-03 $ \\
\hline
\multirow{4}{*}{ACADO}
& none & \cellcolor{green!20}996 & \cellcolor{green!20}\makebox[3.2em][r]{$ 16.7 $} $\pm$ \makebox[3.2em][l]{$ 31.7 $} & \cellcolor{green!20}\makebox[3.2em][r]{$ 14.8 $} $\pm$ \makebox[3.2em][l]{$ 8.1 $} & $ 2.06e-01 \pm 6.69e-01 $ & $ 1.66e-01 \pm 2.30e-01 $ \\
& $0.1$ & \cellcolor{green!20}997 & \cellcolor{green!60}\makebox[3.2em][r]{$ 8.9 $} $\pm$ \makebox[3.2em][l]{$ 29.2 $} & \cellcolor{green!60}\makebox[3.2em][r]{$ 7.4 $} $\pm$ \makebox[3.2em][l]{$ 11.4 $} & \cellcolor{green!20}$ 1.03e-01 \pm 5.17e-01 $ & \cellcolor{green!20}$ 7.78e-02 \pm 2.22e-01 $ \\
& $0.2$ & \cellcolor{green!20}996 & \cellcolor{green!60}\makebox[3.2em][r]{$ 10.3 $} $\pm$ \makebox[3.2em][l]{$ 32.4 $} & \cellcolor{green!60}\makebox[3.2em][r]{$ 8.4 $} $\pm$ \makebox[3.2em][l]{$ 9.5 $} & \cellcolor{green!20}$ 1.18e-01 \pm 5.73e-01 $ & \cellcolor{green!20}$ 8.35e-02 \pm 1.79e-01 $ \\
& $0.3$ & \cellcolor{green!60}996 & \cellcolor{green!60}\makebox[3.2em][r]{$ 11.6 $} $\pm$ \makebox[3.2em][l]{$ 32.2 $} & \cellcolor{green!60}\makebox[3.2em][r]{$ 9.6 $} $\pm$ \makebox[3.2em][l]{$ 8.8 $} & \cellcolor{green!20}$ 1.37e-01 \pm 6.28e-01 $ & \cellcolor{green!20}$ 9.95e-02 \pm 1.99e-01 $ \\
\hline
\multirow{4}{*}{ALTRO}
& none & 621 & \makebox[3.2em][r]{$ 26.7 $} $\pm$ \makebox[3.2em][l]{$ 28.1 $} & \makebox[3.2em][r]{$ 21.8 $} $\pm$ \makebox[3.2em][l]{$ 28.2 $} & \cellcolor{green!20}$ 1.07e-02 \pm 1.21e-02 $ & \cellcolor{green!20}$ 8.80e-03 \pm 1.24e-02 $ \\
& $0.1$ & 7 & \makebox[3.2em][r]{$ 1.9 $} $\pm$ \makebox[3.2em][l]{$ 4.9 $} & \makebox[3.2em][r]{$ 51.4 $} $\pm$ \makebox[3.2em][l]{$ 20.8 $} & $ 8.84e-04 \pm 1.84e-03 $ & $ 1.72e-02 \pm 1.20e-02 $ \\
& $0.2$ & 0 & \makebox[3.2em][r]{$ 1.1 $} $\pm$ \makebox[3.2em][l]{$ 0.8 $} & $ \mathrm{---} $ & $ 6.94e-04 \pm 2.78e-04 $ & $ \mathrm{---} $ \\
& $0.3$ & 0 & \makebox[3.2em][r]{$ 1.1 $} $\pm$ \makebox[3.2em][l]{$ 0.7 $} & $ \mathrm{---} $ & $ 7.53e-04 \pm 4.39e-04 $ & $ \mathrm{---} $ \\
\hline
\end{tabular}
\end{table*}

\begin{table*}[t]
\centering 
\caption{Results with the \textbf{Euler integrator} for all the \textbf{baselines}. When using Euler integrators, we note that the quaternion variables will no longer stay on the $\|q\|=1$ manifold for the direct method. Entries in the last four columns are reported as mean$\pm$std(success mean$\pm$std). We also reported the statistics for the successful case, as many failure cases are due to running out of computation budget, i.e., the maximal number of iterations. We use a laptop with an Intel i7-11850H CPU to test the numerical benchmarks in a single core. Among those whose convergence rate is bigger than 50\%: {\color{green!60} Best} {\color{green!20} Second Best}.}
\label{tab:uniform-euler-results-std}
\footnotesize
\setlength{\tabcolsep}{5pt}
\begin{tabular}{ccccccc}
\hline
Algorithm & Noise & Converged & Avg. \#Iter & Avg. \#Iter (success) & Wall Clock (s) & Wall Clock (success) (s) \\
\hline
\multirow{4}{*}{IPOPT}
& none & \cellcolor{green!20}990 & \makebox[3.2em][r]{$ 130.0 $} $\pm$ \makebox[3.2em][l]{$ 99.5 $} & \makebox[3.2em][r]{$ 126.0 $} $\pm$ \makebox[3.2em][l]{$ 78.4 $} & $ 2.73e-01 \pm 3.28e-01 $ & $ 2.66e-01 \pm 3.14e-01 $ \\
& $0.1$ & \cellcolor{green!20}990 & \makebox[3.2em][r]{$ 110.0 $} $\pm$ \makebox[3.2em][l]{$ 122.0 $} & \makebox[3.2em][r]{$ 103.0 $} $\pm$ \makebox[3.2em][l]{$ 92.1 $} & \cellcolor{green!20}$ 2.06e-01 \pm 3.32e-01 $ & $ 1.95e-01 \pm 3.11e-01 $ \\
& $0.2$ & \cellcolor{green!20}997 & \makebox[3.2em][r]{$ 128.0 $} $\pm$ \makebox[3.2em][l]{$ 94.1 $} & \makebox[3.2em][r]{$ 127.0 $} $\pm$ \makebox[3.2em][l]{$ 90.1 $} & $ 2.31e-01 \pm 3.17e-01 $ & $ 2.30e-01 \pm 3.14e-01 $ \\
& $0.3$ & \cellcolor{green!20}994 & \makebox[3.2em][r]{$ 162.0 $} $\pm$ \makebox[3.2em][l]{$ 99.2 $} & \makebox[3.2em][r]{$ 161.0 $} $\pm$ \makebox[3.2em][l]{$ 92.0 $} & $ 2.85e-01 \pm 3.39e-01 $ & $ 2.83e-01 \pm 3.36e-01 $ \\
\hline
\multirow{4}{*}{SNOPT}
& none & 850 & \makebox[3.2em][r]{$ 163.0 $} $\pm$ \makebox[3.2em][l]{$ 217.0 $} & \makebox[3.2em][r]{$ 164.0 $} $\pm$ \makebox[3.2em][l]{$ 186.0 $} & $ 2.96e-01 \pm 4.20e-01 $ & $ 2.56e-01 \pm 3.30e-01 $ \\
& $0.1$ & 937 & \makebox[3.2em][r]{$ 157.0 $} $\pm$ \makebox[3.2em][l]{$ 218.0 $} & \makebox[3.2em][r]{$ 149.0 $} $\pm$ \makebox[3.2em][l]{$ 178.0 $} & $ 2.61e-01 \pm 3.53e-01 $ & $ 2.40e-01 \pm 2.88e-01 $ \\
& $0.2$ & 950 & \makebox[3.2em][r]{$ 172.0 $} $\pm$ \makebox[3.2em][l]{$ 239.0 $} & \makebox[3.2em][r]{$ 169.0 $} $\pm$ \makebox[3.2em][l]{$ 236.0 $} & $ 2.74e-01 \pm 3.64e-01 $ & $ 2.64e-01 \pm 3.66e-01 $ \\
& $0.3$ & 928 & \makebox[3.2em][r]{$ 183.0 $} $\pm$ \makebox[3.2em][l]{$ 318.0 $} & \makebox[3.2em][r]{$ 174.0 $} $\pm$ \makebox[3.2em][l]{$ 316.0 $} & $ 2.95e-01 \pm 4.09e-01 $ & $ 2.73e-01 \pm 3.96e-01 $ \\
\hline
\multirow{4}{*}{CROCODDYL}
& none & \cellcolor{green!60}1000 & \cellcolor{green!60}\makebox[3.2em][r]{$ 10.1 $} $\pm$ \makebox[3.2em][l]{$ 5.4 $} & \cellcolor{green!60}\makebox[3.2em][r]{$ 10.1 $} $\pm$ \makebox[3.2em][l]{$ 5.4 $} & \cellcolor{green!60}$ 1.19e-03 \pm 7.38e-04 $ & \cellcolor{green!60}$ 1.19e-03 \pm 7.38e-04 $ \\
& $0.1$ & \cellcolor{green!60}1000 & \cellcolor{green!60}\makebox[3.2em][r]{$ 11.6 $} $\pm$ \makebox[3.2em][l]{$ 5.2 $} & \cellcolor{green!20}\makebox[3.2em][r]{$ 11.6 $} $\pm$ \makebox[3.2em][l]{$ 5.2 $} & \cellcolor{green!60}$ 1.40e-03 \pm 9.39e-04 $ & \cellcolor{green!60}$ 1.40e-03 \pm 9.39e-04 $ \\
& $0.2$ & \cellcolor{green!60}1000 & \cellcolor{green!60}\makebox[3.2em][r]{$ 12.1 $} $\pm$ \makebox[3.2em][l]{$ 7.1 $} & \cellcolor{green!20}\makebox[3.2em][r]{$ 12.1 $} $\pm$ \makebox[3.2em][l]{$ 7.1 $} & \cellcolor{green!60}$ 1.48e-03 \pm 9.74e-04 $ & \cellcolor{green!60}$ 1.48e-03 \pm 9.74e-04 $ \\
& $0.3$ & \cellcolor{green!60}1000 & \cellcolor{green!60}\makebox[3.2em][r]{$ 11.5 $} $\pm$ \makebox[3.2em][l]{$ 7.2 $} & \cellcolor{green!20}\makebox[3.2em][r]{$ 11.5 $} $\pm$ \makebox[3.2em][l]{$ 7.2 $} & \cellcolor{green!60}$ 1.30e-03 \pm 9.16e-04 $ & \cellcolor{green!60}$ 1.30e-03 \pm 9.16e-04 $ \\
\hline
\multirow{4}{*}{ACADO}
& none & 982 & \cellcolor{green!20}\makebox[3.2em][r]{$ 23.0 $} $\pm$ \makebox[3.2em][l]{$ 65.4 $} & \cellcolor{green!20}\makebox[3.2em][r]{$ 14.3 $} $\pm$ \makebox[3.2em][l]{$ 10.7 $} & $ 3.06e-01 \pm 1.23e+00 $ & $ 1.45e-01 \pm 2.50e-01 $ \\
& $0.1$ & 977 & \cellcolor{green!20}\makebox[3.2em][r]{$ 20.6 $} $\pm$ \makebox[3.2em][l]{$ 75.4 $} & \cellcolor{green!60}\makebox[3.2em][r]{$ 9.3 $} $\pm$ \makebox[3.2em][l]{$ 16.9 $} & $ 3.01e-01 \pm 1.38e+00 $ & \cellcolor{green!20}$ 9.49e-02 \pm 2.65e-01 $ \\
& $0.2$ & 981 & \cellcolor{green!20}\makebox[3.2em][r]{$ 19.5 $} $\pm$ \makebox[3.2em][l]{$ 68.6 $} & \cellcolor{green!60}\makebox[3.2em][r]{$ 10.2 $} $\pm$ \makebox[3.2em][l]{$ 15.5 $} & $ 2.69e-01 \pm 1.25e+00 $ & $ 1.00e-01 \pm 2.35e-01 $ \\
& $0.3$ & 982 & \cellcolor{green!20}\makebox[3.2em][r]{$ 19.6 $} $\pm$ \makebox[3.2em][l]{$ 66.5 $} & \cellcolor{green!60}\makebox[3.2em][r]{$ 10.8 $} $\pm$ \makebox[3.2em][l]{$ 13.8 $} & $ 2.72e-01 \pm 1.23e+00 $ & $ 1.12e-01 \pm 2.84e-01 $ \\
\hline
\multirow{4}{*}{ALTRO}
& none & 704 & \makebox[3.2em][r]{$ 32.0 $} $\pm$ \makebox[3.2em][l]{$ 36.5 $} & \makebox[3.2em][r]{$ 29.8 $} $\pm$ \makebox[3.2em][l]{$ 38.0 $} & \cellcolor{green!20}$ 1.00e-02 \pm 1.18e-02 $ & \cellcolor{green!20}$ 9.30e-03 \pm 1.24e-02 $ \\
& $0.1$ & 473 & \makebox[3.2em][r]{$ 79.2 $} $\pm$ \makebox[3.2em][l]{$ 53.3 $} & \makebox[3.2em][r]{$ 79.1 $} $\pm$ \makebox[3.2em][l]{$ 55.9 $} & $ 2.28e-02 \pm 1.78e-02 $ & $ 2.21e-02 \pm 1.83e-02 $ \\
& $0.2$ & 528 & \makebox[3.2em][r]{$ 85.6 $} $\pm$ \makebox[3.2em][l]{$ 58.1 $} & \makebox[3.2em][r]{$ 84.1 $} $\pm$ \makebox[3.2em][l]{$ 61.5 $} & \cellcolor{green!20}$ 2.87e-02 \pm 2.33e-02 $ & \cellcolor{green!20}$ 2.78e-02 \pm 2.49e-02 $ \\
& $0.3$ & 586 & \makebox[3.2em][r]{$ 95.9 $} $\pm$ \makebox[3.2em][l]{$ 63.6 $} & \makebox[3.2em][r]{$ 99.4 $} $\pm$ \makebox[3.2em][l]{$ 67.8 $} & \cellcolor{green!20}$ 2.70e-02 \pm 2.11e-02 $ & \cellcolor{green!20}$ 2.81e-02 \pm 2.27e-02 $ \\
\hline
\end{tabular}
\end{table*}

 \begin{figure*}
    \centering
    \includegraphics[width=1\linewidth]{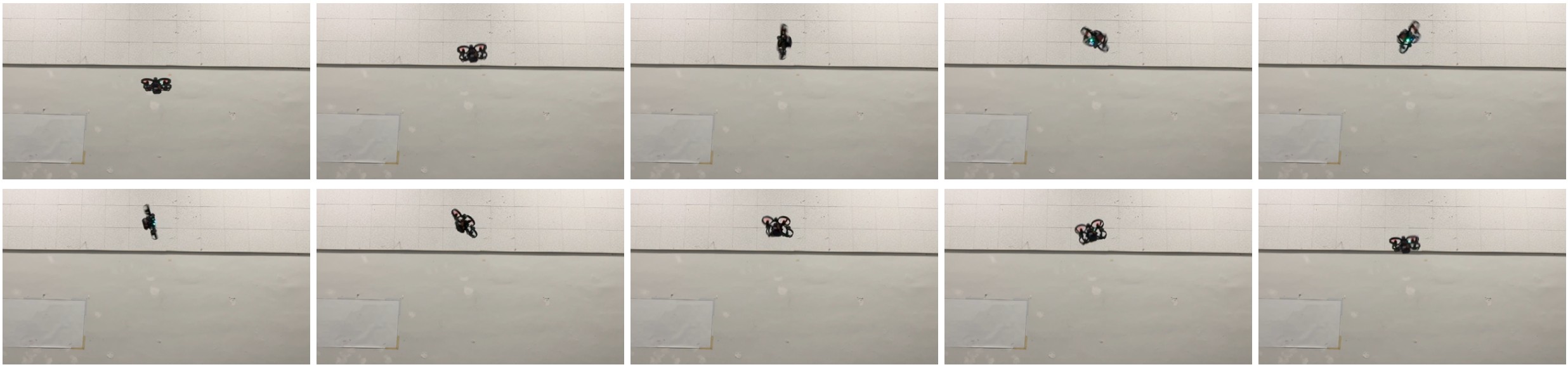}
    \caption{Snapshots of the in-place flip experiments. The quadrotor was initialized at the hovering pose and then moved upward, followed by a back flip motion, and then recovered to the hovering pose. This motion traverses the singular pose where the pitch angle equals $\pm 90^\circ$.}
    \label{fig:flip}
\end{figure*}

\begin{figure*}
    \centering
    \includegraphics[width=1\linewidth]{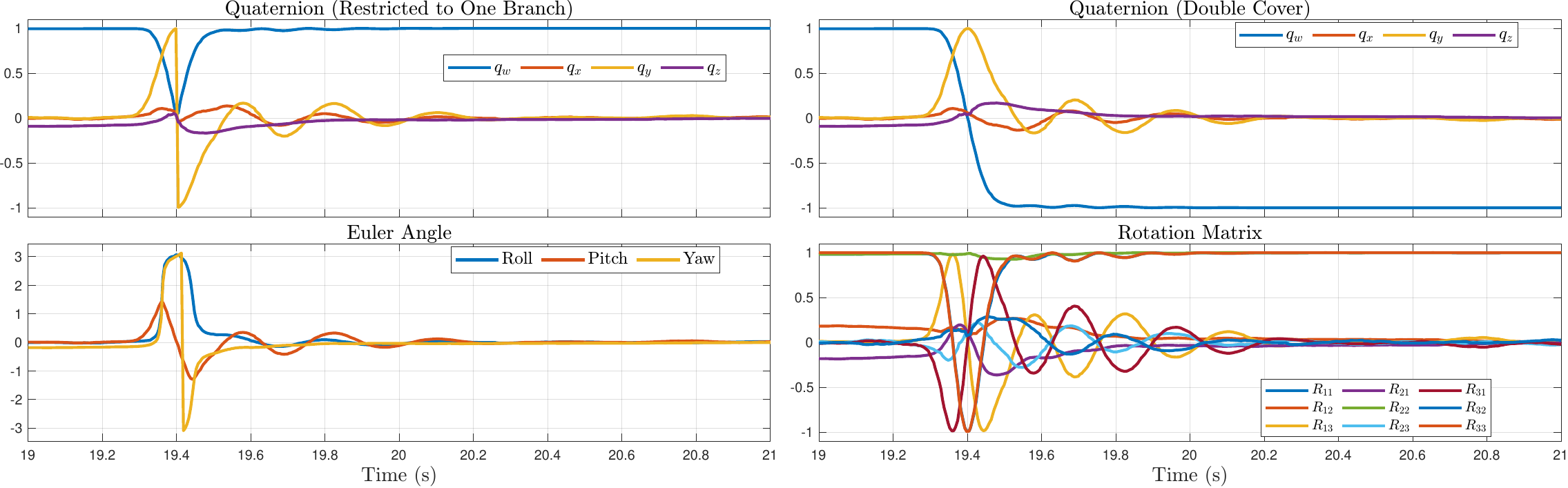}
    \caption{
    The orientation of the in-place flip motion. Though a quaternion is a singularity-free representation, the representation can still be discontinuous if the branch of the quaternion is not chosen properly. Due to the double cover of the quaternion, the terminal pose moves to another branch after a full circle of rotation. The Euler angle exhibits a singularity when the pitch angle across the $90\deg$. The rotation matrix does not have the two problems as its representation for $\mathrm{SO}(3)$ is unique and free of singularity.   
    }
    \label{fig:flip}
\end{figure*}





In this section, we further compare the proposed method with the existing solvers on a quadrotor motion planning problem. We consider a quadrotor modeled as a full rigid body with input limits as illustrated in~\Cref{fig:drone}. The input of the quadrotor is the motor force $f \in \mathbb{R}^4$, which is mapped to the torque $\tau \in \mathbb{R}^3$ in the body frame and the linear force $u_z\in\mathbb{R}$ in the body $z$ axis via a linear map:
\begin{equation}
    \begin{bmatrix}
        \tau \\
        u_z
    \end{bmatrix} = \begin{bmatrix}
        -B_1& -B_1&  B_1&  B_1 \\
        -B_1&  B_1&  B_1&  -B_1 \\
        -B_2&  B_2&  -B_2&  B_2 \\
          B_3&  B_3&  B_3&  B_3 \\
    \end{bmatrix}
\begin{bmatrix}
        f_1\\
        f_2 \\
        f_3\\
        f_4
    \end{bmatrix}.
\end{equation}
The motor force has the box constraints $f \in [0, f_{\max}]^4$. 
Thus we have the continuous-time dynamics on $\mathrm{SO}(3)\times \mathbb{R}^3$ as the following equations:
\begin{equation}
\begin{aligned}
    \dot{R}&=R\omega^\wedge, \quad \dot{p} = v, \\
    I\dot{\omega}&=(I\omega) \times \omega + \tau,\ m\dot{v}= mg + Re_zu_z.
\end{aligned}
\end{equation}
By the LGVI, we have the discrete dynamics that is compatible with the input models:
\begin{equation}
\label{eq:drone_dynamics}
\begin{aligned}
    &F_{k+1}I^b - I^bF_{k+1}^{\top}=I^bF_{k} - F_k^{\top}I^b + \tau^{\wedge} \Delta t^2 , \\ 
    &mv_{k+1} = mv_{k} + mg\Delta t + R_{k+1}e_zu_z\Delta t, \\ 
\end{aligned}
\end{equation}

As summarized in~\Cref{table:rotation-coordinate-comparison}, we consider a quaternion-based formulation as illustrated in the last two columns for the baseline. We note that the ambient space representation of the rotation is $R\in \mathbb{R}^{3\times3}$, which is $9-$dimensional. As it dramatically increases dimensionality, we do not consider such an ambient space representation in our baseline. 



We then compare the performance of the \method with the existing optimization frameworks. For direct methods, we consider the general-purpose solvers IPOPT~\citep{wachter2006implementation} and SNOPT~\citep{gill2005snopt}. We note that IPOPT is an implementation of the line search filter IPM that is mostly related to our work. The SNOPT is based on active-set Sequential Quadratic Programming (SQP). We also consider a few frameworks leveraging the structure of optimal control problems, such as ACADO~\citep{houska2011acado}, ALTRO~\citep{howell2019altro}, and CROCODDYL~\citep{mastalli2020crocoddyl}. The problem formulation and optimization backend of these methods are summarized in~\Cref{tab:solver_compare}. The convergence criterion of \method and the baselines are listed in Appendix.~\ref{appx:baseline}.





We consider the task of stabilizing the quadrotor from randomly sampled initial poses to the desired positions. The location of the quadrotor is uniformly sampled from a box. The orientation is also randomly sampled such that $\log{(R_0)}$ is uniform in the interval $[0, \pi]$. We consider solving the~\eqref{eq:TO} with 21 collocation points. We consider~\eqref{eq:R=q} to ensure that the cost for the quaternion-based and matrix-based methods have equivalent cost functions. 

We uniformly interpolate between the initial and goal state to obtain kinematically feasible initializations for each method. The dual initialization strategy for the IPM-based method is to set the dual variables for equality constraints to $0$, and those for inequality constraints and slack variables to a small positive number. To test the robustness of all the proposed methods, we also add noise to the solutions found in the first test. We perturb the primal solutions in $\mathrm{SO}(3)\times\mathbb{R}^3$, such that
\begin{equation}
\begin{aligned}
        R_\epsilon=R\exp(\epsilon_R),\ \epsilon_p \sim N(0, \Sigma), \\
        p_\epsilon=p + \epsilon_p,\ \epsilon_R \sim N(0, \Sigma),
\end{aligned}
\end{equation}
for both the pose and the velocities. For the baselines, we convert the perturbed state in $\mathrm{SO}(3)$ to quaternions. For the IPM-based method, we also perturb the dual variables by additive Gaussian noise for those of equality constraints, and by multiplicative noise for inequalities and slack variables, i.e., $z_\epsilon = z \exp(\epsilon_z), s_\epsilon=s\exp(\epsilon_s)$ with $\epsilon_z,\epsilon_s \sim N(0, \Sigma)$ the Gaussian noise.

\begin{figure}
    \centering
    \includegraphics[width=1\linewidth]{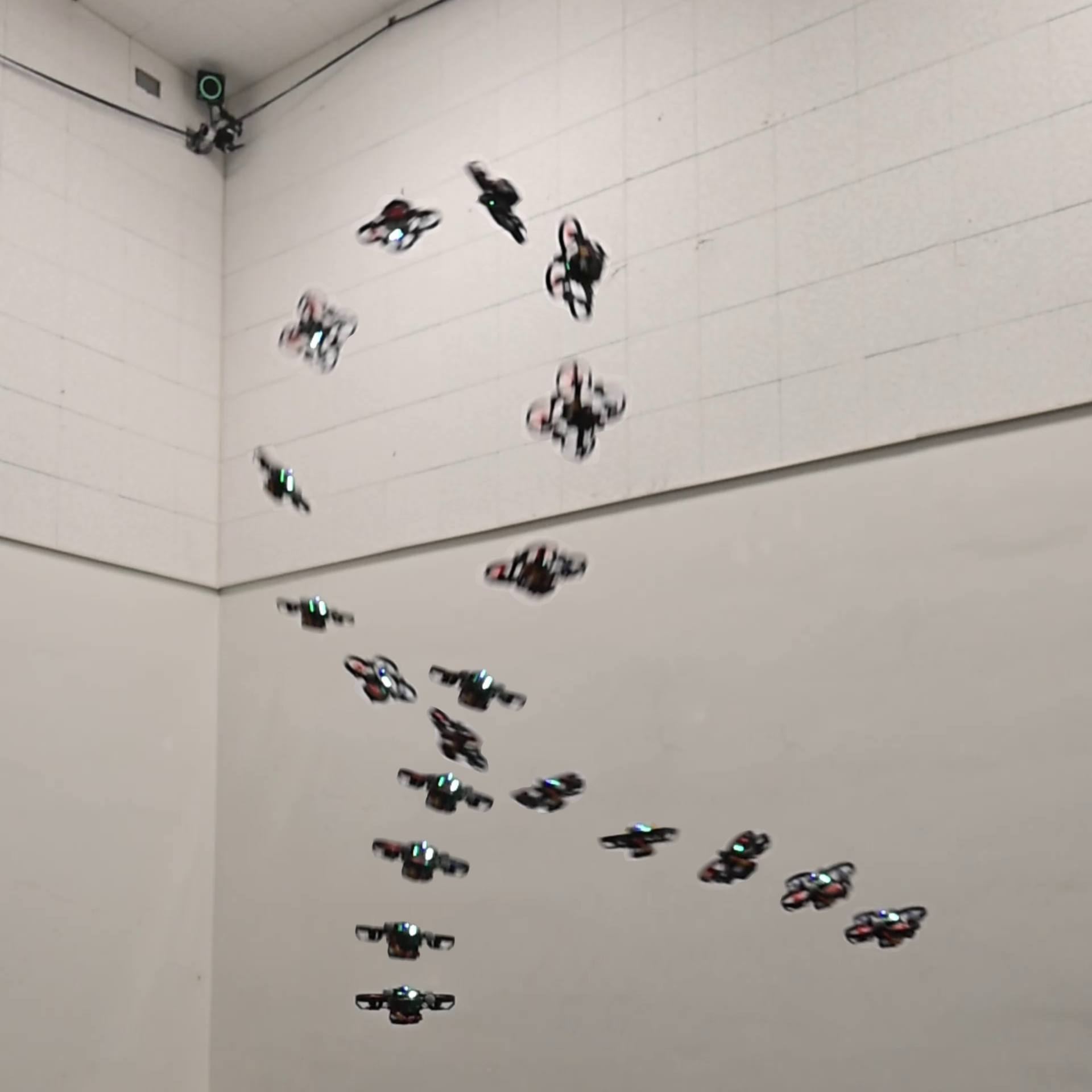}
    \caption{Snapshots of the power loop experiments. The quadrotor is commanded to rotate about the $y-$axis for a full circle while tracking motions in the $x-z$ plane. The quadrotor starts to flip from the bottom-center to the bottom-right of the picture. }
    \label{fig:powerloop}
\end{figure}

\begin{figure*}
    \centering
    \includegraphics[width=1\linewidth]{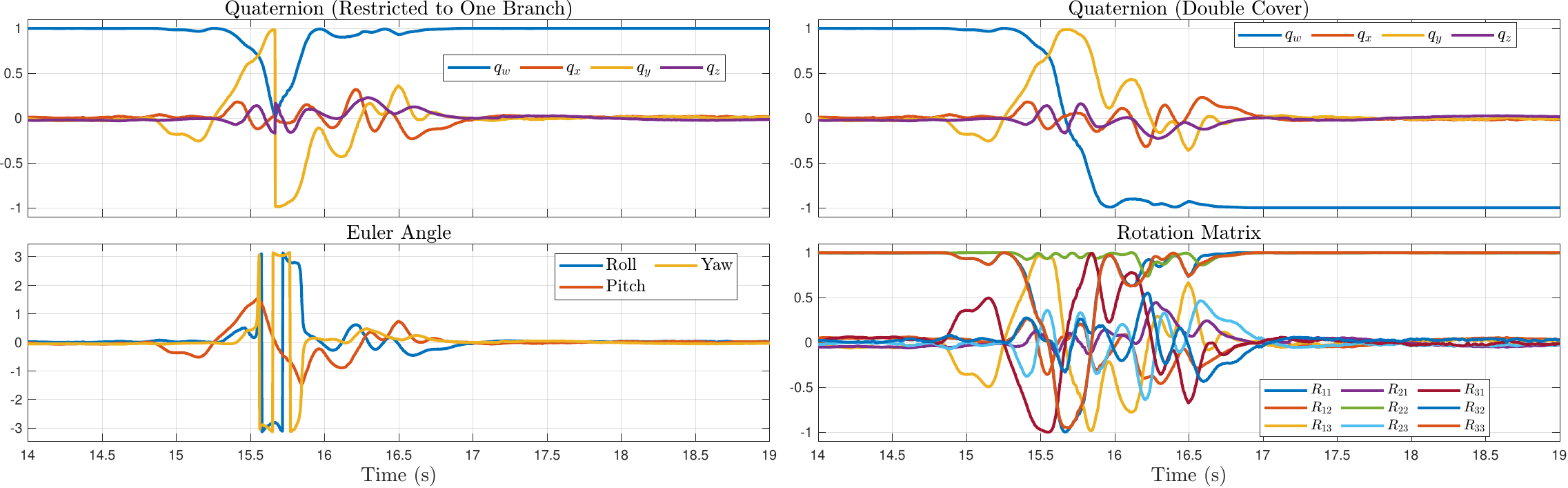}
    \caption{
    The orientation of the power-loop motion. Due to the translational motion, the rotation has more oscillation compared to the in-place motion. The trajectories of orientation represented by the rotation matrix remain continuous along the trajectory.  
    }
    \label{fig:loop}
\end{figure*}

\begin{figure*}
    \centering
    \includegraphics[width=1\linewidth]{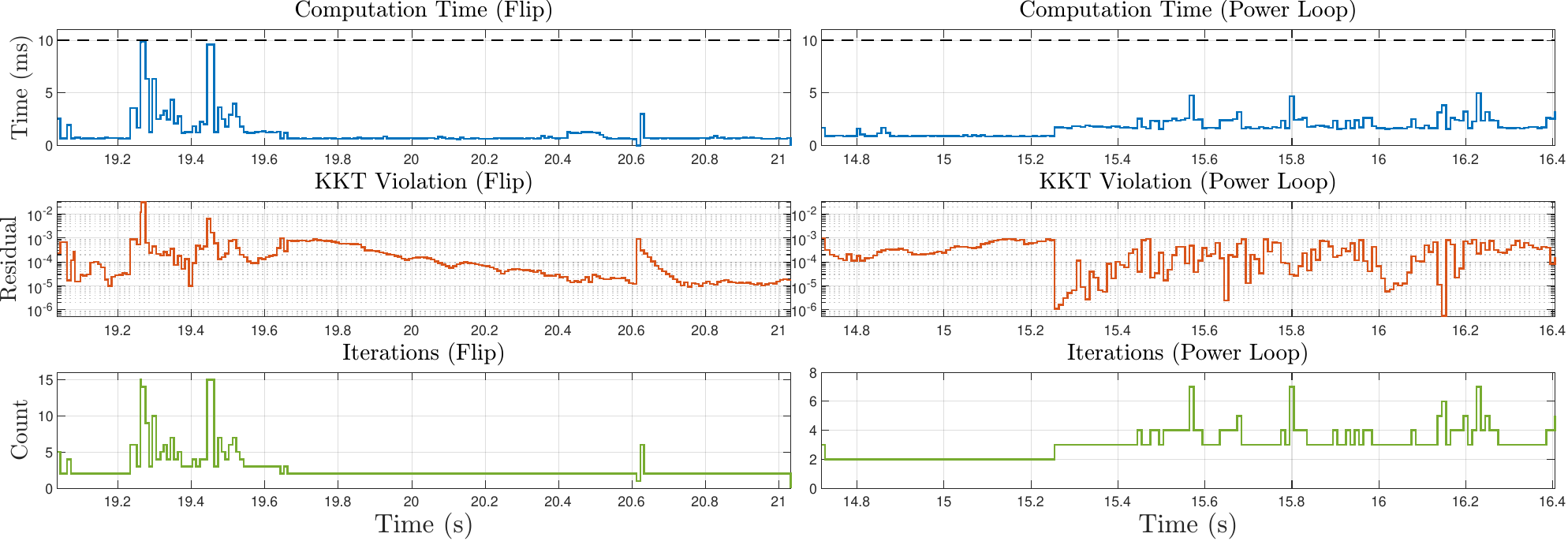}
    \caption{The computational time and KKT residual in the runtime. The rate of MPC control loop is set to be 100Hz. We find that in the power-loop case, \method only takes half of 0.01s to finish the computation. In the flip case, the solver only reaches the time budget when the desired roll angle changes abruptly.}
    \label{fig:kkt_time}
\end{figure*}

As summarized in~\Cref{tab:uniform-vi-results-std} for dynamics based on variational integrators, our method has the highest convergence rate among all the methods with noise up to $0.2$. Among the direct methods, \method can save up to $90\%$ of iterations when compared to SNOPT or IPOPT. The wall clock consumption of IPOPT or SNOPT is 10 to 20 times that of \method. For the shooting methods, we note that ACADO is the most robust one with a higher convergence rate than that of ALTRO and CROCODDYL. However, we note that though ACADO takes fewer iterations, the wall clock time is higher as each iteration includes repeated interval integration, sensitivity propagation, condensing, and QP solves. For the CROCODDYL, we note that the performance degrades a lot when noise is added. The single shooting method ALTRO is overall less robust than the multi-shooting method, i.e., CROCODDYL and ACADO. 

We later applied the baselines using the explicit Euler integration as summarized in~\Cref{tab:uniform-euler-results-std}. We find that all the methods exhibit a higher convergence rate compared to the variational integrators. The CROCODDYL exhibits extremely good performance in all noise levels. However, we note that the explicit Euler method sacrifices the manifold constraints where the quaternions can leave the $\|q\|=1$ manifold, which is not correct for rotation. 

In conclusion, \method is the only method that can naturally preserve the Lie group structure while maintaining high computational efficiency.  
Among the direct method solvers, \method takes a magnitude fewer iterations and shorter wall clock. 
When compared with the shooting-based method, \method still outperforms the three baselines in terms of wall clock and robustness without sacrificing the manifold structure.

\subsection{Applications on Quadrotor Aerobatics}

Now we proceed to deploy \method in the real-world experiments. We consider applying \method to solve \eqref{eq:TO} in a receding horizon manner for quadrotor motion planning. The \eqref{eq:TO} in the MPC implementation considers 21 collocation points. To balance the planning horizon and the number of collocation points, we consider a finer time step at the start of the horizon and a longer time step at the tail of the horizon.


\textbf{Hardware setup: }Figure~\ref{fig:drone} shows the quadrotor platform used in the real-world experiments. 
The platform is equipped with four 1106 7500kV BLDC motors with 2040-3 propellers.
The onboard Pixracer R15 flight control unit~(FCU) provides inertial measurements, including linear acceleration and body angular velocity. The vehicle position and attitude are measured by a motion-capture system and fused with onboard sensing through an extended Kalman filter to obtain the state estimate. During flight,~\eqref{eq:TO} is solved in real time by \method on an off-board personal computer. 

\textbf{Control architecture: }For real-time execution, the control rate of MPC is set to $100$~Hz. The tolerance of \method is set to $E_0 \le \epsilon_{\mathrm{tol}} = 10^{-3}$. To ensure bounded computation time within the control loop, \method will exit the main loop of~\Cref{alg:line-search-RIPM} if the computation time exceeds $8.5$~ms. After the MPC is solved, the solved trajectory is sent to the on-board PD controller for tracking~\citep{lee2010geometric}. 



\textbf{In-place Flip:} In the first experiment, we consider the flip motion such that we command the quadrotor pitch to $180^\circ$ and then recover to the hovering height $z_h$ using the following command:
\begin{equation}
\begin{aligned}
    z_{d} &= \left\{ \begin{aligned}
        &z_{h} + 0.4,\quad &0 \le t \le 0.2 \\
                    &z_h  \quad &t > 0.2
    \end{aligned}\right. \\
    \theta_{y} &= \left\{ \begin{aligned}
        &\pi,\quad &0 \le t \le 0.2 \\
                    &0  \quad &t > 0.2
    \end{aligned}\right.
\end{aligned}
\end{equation}

The motion of the quadrotor is illustrated in~\Cref{fig:flip}. As shown in \Cref{fig:flip}, \method successfully enables the quadrotor to change the pitch angles across $\pm 90^\circ$. After the transient state, the quadrotor resumes its hovering pose with zero inclination angle at the desired height. 

\textbf{Power-loop: } In the second experiment, we consider the power loop motion such that we command the quadrotor to rotate $360^\circ$ around the $y$ axis while tracking a circle in the $x-z$ plane. We first generate a dynamically feasible reference trajectory by solving \eqref{eq:TO} given a kinematically feasible trajectory interpolated between key frames. Then the resulting dynamically feasible trajectory is given to the MPC for tracking. 

The snapshots of the experiment are shown in~\Cref{fig:powerloop}. As shown in \Cref{fig:loop}, the quadrotor successfully traverses $\pm90^\circ$ pitch angle.

\textbf{Computation time: }We further report the computation time and convergence of the \method in~\Cref{fig:kkt_time}. In both cases, \method converges far below the control interval $10$ ms. In the flip motion, only the case with a large change of reference pitch angle consumes more time. For both motions, the KKT violation in most of the cases converges below $1e^{-3}$.






\section{Discussions}
\label{sec:lim}


\textbf{Global Convergence:} 
We note that the proposed method is not a full manifold extension of the filter line-search framework used in IPOPT~\citep{wachter2006implementation}. Instead, we adopt a simpler globalization strategy based on backtracking line search and inertia correction. Though this is not a full filter-based method with infeasibility restoration as in~\citep{wachter2006implementation}, we find that the current version already outperforms IPOPT or SNOPT with an order of magnitude in convergence. In the manifold setting, \method also outperforms the shooting-based baselines that utilized the structure of the optimal control problem. 

As a consequence of the lack of feasibility restoration, we have to set a maximum number of line searches to terminate the line search. In IPOPT, the filter will determine when to terminate the line search. Extending the filter or restoration mechanisms to the manifold setting, while preserving geometric consistency, it remains an interesting direction for future work.

\textbf{Topology of $\mathrm{SO}(3)$ and $\mathrm{SU}(2)$:}
The matrix Lie group $\mathrm{SO}(3)$ and the unit-quaternion group $\mathrm{SU}(2)$ both provide singularity-free representations of three-dimensional rotations. However, their global topologies are different. The rotation group satisfies
$\mathrm{SO}(3)\cong \mathbb{RP}^3 \cong \mathrm{S}^3/\{q\sim -q\}$,
where the three-dimensional real projective space $\mathbb{RP}^3$ can be viewed as the space of lines through the origin in $\mathbb{R}^4$. In contrast, $\mathrm{SU}(2)\cong \mathrm{S}^3$ keeps the two antipodal points $q$ and $-q$ distinct, although they correspond to the same physical rotation. Thus, $\mathrm{SU}(2)$ is a double cover of $\mathrm{SO}(3)$ rather than a one-to-one rotation space. As shown in~\Cref{fig:teaser}, an optimization problem with a unique solution on $\mathrm{SO}(3)$ may lift to two antipodal minimizers on $\mathrm{SU}(2)$, which are distinct in the covering space but equivalent after projection to $\mathrm{SO}(3)$. As shown in~\Cref{fig:flip} and~\ref{fig:loop}, the quaternion will not be continuous if one restricts it to one branch. The double cover issue results in an additional branch selection issue when designing continuous trajectories. 

\textbf{Optimization Landscape: }As illustrated in~\Cref{fig:teaser}, optimizing on $\mathrm{SO}(3)$ results in a homogeneous symmetric space that is less nonlinear than the generalized coordinates, such as the Euler angles. Though intuitively, this phenomenon may lead to a more friendly landscape for optimization, we leave the formal analysis as an interesting future research question. Such an analysis can potentially be applied for broader robotics applications ~\citep{teng2022input, teng2024convex, 10301632, teng2021toward, liu2025discrete, teng2024generalized, Teng-RSS-23, ghaffari2022progress, teng2022lie, teng2022error, teng2021legged, yu2023fully, he2024legged, teng2024gmkf, he2024legged,chang2026survey,teng2026max,liu2025ego, he2025invariant, iwasaki2025learning, liu2026mepoly, dong2025online, li2026stein}

\section{Conclusions}
\label{sec:conclusion}

In this work, we propose the \methodlong (\method) on matrix Lie groups for direct trajectory optimization problems of rigid bodies. This work derives the second-order models for rigid body dynamics on matrix Lie groups, leveraging the symmetry of the Lie group mechanics and kinematics. Then we developed \method, a back-tracking line search interior point method on matrix Lie groups. The \method is correct-by-construction to preserve the topological structure of matrix Lie groups with provable local superlinear convergence. We show that \method, when properly using the on-manifold derivatives, outperforms the baselines by an order of magnitude in terms of convergence rate and computation time in direct trajectory optimization of rigid bodies. Finally, \method is deployed for real-time Model Predictive Control in quadrotor aerobatic tasks.



\section*{Acknowledgment}
The work by S. Teng and K. Sreenath was supported by The Robotics and AI Institute. The work by W. Clark, R. Vasudevan, and M. Ghaffari were supported by AFOSR MURIFA9550-23-1-0400.

We thank Prof. Robert Mahony for the insightful discussions on the choice of affine connections on matrix Lie groups. We thank Dr. Thomas Lew for insightful discussions on the trajectory optimization in the ambient space. We thank Dr. Yuman Gao for helpful instructions on the experiments.

{
\balance
\bibliographystyle{SageH}
\bibliography{strings-full,ieee-full,references}

@STRING{IEEE_J_AC         = "{IEEE} Trans. Autom. Control"}

@STRING{IEEE_J_AC         = "{IEEE} Transactions on Automatic Control"}

@article{teng2026max,
  title={Max entropy moment kalman filter for polynomial systems with arbitrary noise},
  author={Teng, Sangli and Zhang, Harry and Jin, David and Jasour, Ashkan and Vasudevan, Ram and Ghaffari, Maani and Carlone, Luca},
  journal={Advances in Neural Information Processing Systems},
  volume={38},
  pages={118700--118722},
  year={2026}
}

@article{li2026stein,
  title={Stein Variational Ergodic Surface Coverage with SE (3) Constraints},
  author={Li, Jiayun and Jin, Yufeng and Teng, Sangli and Gong, Dejian and Chalvatzaki, Georgia},
  journal={arXiv preprint arXiv:2603.09458},
  year={2026}
}

@article{liu2026mepoly,
  title={MePoly: Max Entropy Polynomial Policy Optimization},
  author={Liu, Hang and Teng, Sangli and Ghaffari, Maani},
  journal={arXiv preprint arXiv:2602.17832},
  year={2026}
}

@article{dong2025online,
  title={Online Learning-Enhanced Lie Algebraic MPC for Robust Trajectory Tracking of Autonomous Surface Vehicles},
  author={Dong, Yinan and Xu, Ziyu and Lazouski, Tsimafei and Teng, Sangli and Ghaffari, Maani},
  journal={arXiv preprint arXiv:2511.18683},
  year={2025}
}

@article{iwasaki2025learning,
  title={Learning hybrid dynamics via convex optimizations},
  author={Iwasaki, Kaito and Teng, Sangli and Bloch, Anthony and Ghaffari, Maani},
  journal={arXiv preprint arXiv:2509.24157},
  year={2025}
}

@article{he2025invariant,
  title={Invariant Filtering for Full-State Estimation of Ground Robots in Noninertial Environments},
  author={He, Zijian and Teng, Sangli and Lin, Tzu-Yuan and Ghaffari, Maani and Gu, Yan},
  journal={IEEE/ASME Transactions on Mechatronics},
  year={2025},
  publisher={IEEE}
}

@article{chang2026survey,
  title={A Survey of Legged Robotics in Non-Inertial Environments: Past, Present, and Future},
  author={Chang, I and Huang, Xinyan and Lin, Tzu-Yuan and Teng, Sangli and Li, Wenjing and Ghaffari, Maani and Yi, Jingang and Gu, Yan and others},
  journal={arXiv preprint arXiv:2604.20990},
  year={2026}
}

@article{liu2025ego,
  title={Ego-Vision World Model for Humanoid Contact Planning},
  author={Liu, Hang and Gao, Yuman and Teng, Sangli and Chi, Yufeng and Shao, Yakun Sophia and Li, Zhongyu and Ghaffari, Maani and Sreenath, Koushil},
  journal={arXiv preprint arXiv:2510.11682},
  year={2025}
}

@article{teng2025chyll,
  title={CHyLL: Learning Continuous Neural Representations of Hybrid Systems},
  author={Teng, Sangli and Liu, Hang and Song, Jingyu and Sreenath, Koushil},
  journal={arXiv preprint arXiv:2512.10117},
  year={2025}
}

@article{duong2024port,
  title={Port-Hamiltonian neural ODE networks on Lie groups for robot dynamics learning and control},
  author={Duong, Thai and Altawaitan, Abdullah and Stanley, Jason and Atanasov, Nikolay},
  journal={IEEE Transactions on Robotics},
  volume={40},
  pages={3695--3715},
  year={2024},
  publisher={IEEE}
}

@article{duong2022adaptive,
  title={Adaptive control of SE (3) Hamiltonian dynamics with learned disturbance features},
  author={Duong, Thai and Atanasov, Nikolay},
  journal={IEEE Control Systems Letters},
  volume={6},
  pages={2773--2778},
  year={2022},
  publisher={IEEE}
}

@incollection{hall2013lie,
  title={Lie groups, Lie algebras, and representations},
  author={Hall, Brian C},
  booktitle={Quantum Theory for Mathematicians},
  pages={333--366},
  year={2013},
  publisher={Springer}
}

@book{hirsch2012differential,
  title={Differential topology},
  author={Hirsch, Morris W},
  year={2012},
  publisher={Springer Science \& Business Media}
}

@book{barfoot2024state,
  title={State estimation for robotics},
  author={Barfoot, Timothy D},
  year={2024},
  publisher={Cambridge University Press}
}

@article{sun2025agile,
  title={Agile and cooperative aerial manipulation of a cable-suspended load},
  author={Sun, Sihao and Wang, Xuerui and Sanalitro, Dario and Franchi, Antonio and Tognon, Marco and Alonso-Mora, Javier},
  journal={Science Robotics},
  volume={10},
  number={107},
  pages={eadu8015},
  year={2025},
  publisher={American Association for the Advancement of Science}
}

@article{nan2022nonlinear,
  title={Nonlinear MPC for quadrotor fault-tolerant control},
  author={Nan, Fang and Sun, Sihao and Foehn, Philipp and Scaramuzza, Davide},
  journal={IEEE Robotics and Automation Letters},
  volume={7},
  number={2},
  pages={5047--5054},
  year={2022},
  publisher={IEEE}
}

@article{bhat2000topological,
  title={A topological obstruction to continuous global stabilization of rotational motion and the unwinding phenomenon},
  author={Bhat, Sanjay P and Bernstein, Dennis S},
  journal={Systems \& control letters},
  volume={39},
  number={1},
  pages={63--70},
  year={2000},
  publisher={Elsevier}
}

@phdthesis{teng2025optimization,
  title={Optimization-based Robot Control and State Estimation on Matrix Lie Groups},
  author={Teng, Sangli},
  year={2025}
}

@book{absil2008optimization,
  title={Optimization algorithms on matrix manifolds},
  author={Absil, P-A and Mahony, Robert and Sepulchre, Rodolphe},
  year={2008},
  publisher={Princeton University Press}
}

@article{mahony2002geometry,
  title={The geometry of the Newton method on non-compact Lie groups},
  author={Mahony, Robert and Manton, Jonathan H},
  journal={Journal of Global Optimization},
  volume={23},
  number={3},
  pages={309--327},
  year={2002},
  publisher={Springer}
}

@inproceedings{mastalli2020crocoddyl,
  title={Crocoddyl: An efficient and versatile framework for multi-contact optimal control},
  author={Mastalli, Carlos and Budhiraja, Rohan and Merkt, Wolfgang and Saurel, Guilhem and Hammoud, Bilal and Naveau, Maximilien and Carpentier, Justin and Righetti, Ludovic and Vijayakumar, Sethu and Mansard, Nicolas},
  booktitle={2020 IEEE International Conference on Robotics and Automation (ICRA)},
  pages={2536--2542},
  year={2020},
  organization={IEEE}
}

@article{sun2022comparative,
  title={A comparative study of nonlinear mpc and differential-flatness-based control for quadrotor agile flight},
  author={Sun, Sihao and Romero, Angel and Foehn, Philipp and Kaufmann, Elia and Scaramuzza, Davide},
  journal={IEEE Transactions on Robotics},
  volume={38},
  number={6},
  pages={3357--3373},
  year={2022},
  publisher={IEEE}
}

@article{bonalli2019trajectory,
  title={Trajectory optimization on manifolds: A theoretically-guaranteed embedded sequential convex programming approach},
  author={Bonalli, Riccardo and Bylard, Andrew and Cauligi, Abhishek and Lew, Thomas and Pavone, Marco},
  journal={arXiv preprint arXiv:1905.07654},
  year={2019}
}

@article{amestoy2019performance,
  title={Performance and scalability of the block low-rank multifrontal factorization on multicore architectures},
  author={Amestoy, Patrick R and Buttari, Alfredo and L'excellent, Jean-Yves and Mary, Theo},
  journal={ACM Transactions on Mathematical Software (TOMS)},
  volume={45},
  number={1},
  pages={1--26},
  year={2019},
  publisher={ACM New York, NY, USA}
}

@INPROCEEDINGS{TengS-RSS-25, 
    AUTHOR    = {Sangli Teng AND Tzu-Yuan Lin AND William A. Clark AND Ram Vasudevan AND Maani Ghaffari}, 
    TITLE     = {{Riemannian Direct Trajectory Optimization of Rigid Bodies on Matrix Lie Groups}}, 
    BOOKTITLE = {Proceedings of Robotics: Science and Systems}, 
    YEAR      = {2025}, 
    ADDRESS   = {Los Angeles, CA, USA}, 
    MONTH     = {June}, 
    DOI       = {10.15607/RSS.2025.XXI.120} 
}

@article{he2024legged,
  title={Legged robot state estimation within non-inertial environments},
  author={He, Zijian and Teng, Sangli and Lin, Tzu-Yuan and Ghaffari, Maani and Gu, Yan},
  journal={arXiv preprint arXiv:2403.16252},
  year={2024}
}

@inproceedings{teng2022input,
  title={Input Influence Matrix Design for MIMO Discrete-Time Ultra-Local Model},
  author={Teng, Sangli and Sanyal, Amit K and Vasudevan, Ram and Bloch, Anthony and Ghaffari, Maani},
  booktitle={2022 American Control Conference (ACC)},
  pages={2730--2735},
  year={2022},
  organization={IEEE}
}

@article{teng2021toward,
  title={Toward safety-aware informative motion planning for legged robots},
  author={Teng, Sangli and Gong, Yukai and Grizzle, Jessy W and Ghaffari, Maani},
  journal={arXiv preprint arXiv:2103.14252},
  year={2021}
}

@article{liu2025discrete,
  title={Discrete-Time Hybrid Automata Learning: Legged Locomotion Meets Skateboarding},
  author={Liu, Hang and Teng, Sangli and Liu, Ben and Zhang, Wei and Ghaffari, Maani},
  journal={arXiv preprint arXiv:2503.01842},
  year={2025}
}

@article{teng2024generalized,
  title={A Generalized Metriplectic System via Free Energy and System\~{} Identification via Bilevel Convex Optimization},
  author={Teng, Sangli and Iwasaki, Kaito and Clark, William and Yu, Xihang and Bloch, Anthony and Vasudevan, Ram and Ghaffari, Maani},
  journal={arXiv preprint arXiv:2410.06233},
  year={2024}
}

@Inbook{Lee2003,
author="Lee, John M.",
title="Smooth Manifolds",
bookTitle="Introduction to Smooth Manifolds",
year="2003",
publisher="Springer New York",
address="New York, NY",
pages="1--29",
isbn="978-0-387-21752-9",
doi="10.1007/978-0-387-21752-9_1",
}

@article{han2025building,
  title={Building Rome with Convex Optimization},
  author={Han, Haoyu and Yang, Heng},
  journal={arXiv preprint arXiv:2502.04640},
  year={2025}
}

@article{wang2025unlocking,
  title={Unlocking aerobatic potential of quadcopters: Autonomous freestyle flight generation and execution},
  author={Wang, Mingyang and Wang, Qianhao and Wang, Ze and Gao, Yuman and Wang, Jingping and Cui, Can and Li, Yuan and Ding, Ziming and Wang, Kaiwei and Xu, Chao and others},
  journal={Science Robotics},
  volume={10},
  number={101},
  pages={eadp9905},
  year={2025},
  publisher={American Association for the Advancement of Science}
}

@article{hargraves1987direct,
  title={Direct trajectory optimization using nonlinear programming and collocation},
  author={Hargraves, Charles R and Paris, Stephen W},
  journal={Journal of guidance, control, and dynamics},
  volume={10},
  number={4},
  pages={338--342},
  year={1987}
}

@article{yang2014optimality,
  title={Optimality conditions for the nonlinear programming problems on {Riemannian} manifolds},
  author={Yang, Wei Hong and Zhang, Lei-Hong and Song, Ruyi},
  journal={Pacific Journal of Optimization},
  volume={10},
  number={2},
  pages={415--434},
  year={2014}
}

@article{boumal2014manopt,
  title={{Manopt}, a {Matlab} toolbox for optimization on manifolds},
  author={Boumal, Nicolas and Mishra, Bamdev and Absil, P-A and Sepulchre, Rodolphe},
  journal={The Journal of Machine Learning Research},
  volume={15},
  number={1},
  pages={1455--1459},
  year={2014},
  publisher={JMLR. org}
}

@misc{wright2006numerical,
  title={Numerical optimization},
  author={Wright, Stephen J},
  year={2006}
}

@article{burer2003nonlinear,
  title={A nonlinear programming algorithm for solving semidefinite programs via low-rank factorization},
  author={Burer, Samuel and Monteiro, Renato DC},
  journal={Mathematical programming},
  volume={95},
  number={2},
  pages={329--357},
  year={2003},
  publisher={Springer}
}

@article{teng2024convex,
  title={Convex geometric motion planning of multi-body systems on {Lie} groups via variational integrators and sparse moment relaxation},
  author={Teng, Sangli and Jasour, Ashkan and Vasudevan, Ram and Ghaffari, Maani},
  journal=J-IJRR,
  pages={02783649241296160},
  year={2024},
  publisher={SAGE Publications Sage UK: London, England}
}

@article{wang2023solving,
  title={Solving low-rank semidefinite programs via manifold optimization},
  author={Wang, Jie and Hu, Liangbing},
  journal={arXiv preprint arXiv:2303.01722},
  year={2023}
}

@article{kobilarov2011discrete,
  title={Discrete geometric optimal control on {Lie} groups},
  author={Kobilarov, Marin B and Marsden, Jerrold E},
  journal={IEEE Transactions on Robotics},
  volume={27},
  number={4},
  pages={641--655},
  year={2011},
  publisher={IEEE}
}

@article{kim2019highly,
  title={Highly dynamic quadruped locomotion via whole-body impulse control and model predictive control},
  author={Kim, Donghyun and Di Carlo, Jared and Katz, Benjamin and Bledt, Gerardo and Kim, Sangbae},
  journal={arXiv preprint arXiv:1909.06586},
  year={2019}
}

@article{boutselis2020discrete,
  title={Discrete-time differential dynamic programming on {Lie} groups: Derivation, convergence analysis, and numerical results},
  author={Boutselis, George I and Theodorou, Evangelos},
  journal={IEEE Transactions on Automatic Control},
  volume={66},
  number={10},
  pages={4636--4651},
  year={2020},
  publisher={IEEE}
}

@article{houska2011acado,
  title={{ACADO} toolkit—An open-source framework for automatic control and dynamic optimization},
  author={Houska, Boris and Ferreau, Hans Joachim and Diehl, Moritz},
  journal={Optimal control applications and methods},
  volume={32},
  number={3},
  pages={298--312},
  year={2011},
  publisher={Wiley Online Library}
}

@inproceedings{mueller2013computationally,
  title={A computationally efficient algorithm for state-to-state quadrocopter trajectory generation and feasibility verification},
  author={Mueller, Mark W and Hehn, Markus and D'Andrea, Raffaello},
  booktitle=C-IROS,
  pages={3480--3486},
  year={2013},
  organization={IEEE}
}

@inproceedings{agrawal2022vision,
  title={Vision-aided dynamic quadrupedal locomotion on discrete terrain using motion libraries},
  author={Agrawal, Ayush and Chen, Shuxiao and Rai, Akshara and Sreenath, Koushil},
  booktitle=C-ICRA,
  pages={4708--4714},
  year={2022},
  organization={IEEE}
}

@article{ding2021representation,
  title={Representation-free model predictive control for dynamic motions in quadrupeds},
  author={Ding, Yanran and Pandala, Abhishek and Li, Chuanzheng and Shin, Young-Ha and Park, Hae-Won},
  journal={IEEE Transactions on Robotics},
  volume={37},
  number={4},
  pages={1154--1171},
  year={2021},
  publisher={IEEE}
}

@inproceedings{howell2019altro,
  title={{ALTRO}: A fast solver for constrained trajectory optimization},
  author={Howell, Taylor A and Jackson, Brian E and Manchester, Zachary},
  booktitle=C-IROS,
  pages={7674--7679},
  year={2019},
  organization={IEEE}
}

@article{kalabic2017mpc,
  title={{MPC} on manifolds with an application to the control of spacecraft attitude on {SO(3)}},
  author={Kalabi{\'c}, Uro{\v{s}} V and Gupta, Rohit and Di Cairano, Stefano and Bloch, Anthony M and Kolmanovsky, Ilya V},
  journal={Automatica},
  volume={76},
  pages={293--300},
  year={2017},
  publisher={Elsevier}
}

@article{gill2005snopt,
  title={{SNOPT}: An {SQP} algorithm for large-scale constrained optimization},
  author={Gill, Philip E and Murray, Walter and Saunders, Michael A},
  journal={SIAM review},
  volume={47},
  number={1},
  pages={99--131},
  year={2005},
  publisher={SIAM}
}

@article{zucker2013chomp,
  title={Chomp: Covariant {Hamiltonian} Optimization for motion planning},
  author={Zucker, Matt and Ratliff, Nathan and Dragan, Anca D and Pivtoraiko, Mihail and Klingensmith, Matthew and Dellin, Christopher M and Bagnell, J Andrew and Srinivasa, Siddhartha S},
  journal=J-IJRR,
  volume={32},
  number={9-10},
  pages={1164--1193},
  year={2013},
  publisher={SAGE Publications Sage UK: London, England}
}

@article{manchester2019contact,
  title={Contact-implicit trajectory optimization using variational integrators},
  author={Manchester, Zachary and Doshi, Neel and Wood, Robert J and Kuindersma, Scott},
  journal=J-IJRR,
  volume={38},
  number={12-13},
  pages={1463--1476},
  year={2019},
  publisher={SAGE Publications Sage UK: London, England}
}

@article{li2023autonomous,
  title={Autonomous navigation of underactuated bipedal robots in height-constrained environments},
  author={Li, Zhongyu and Zeng, Jun and Chen, Shuxiao and Sreenath, Koushil},
  journal=J-IJRR,
  volume={42},
  number={8},
  pages={565--585},
  year={2023},
  publisher={SAGE Publications Sage UK: London, England}
}

@article{teng2024gmkf,
  title={{GMKF}: Generalized moment kalman filter for polynomial systems with arbitrary noise},
  author={Teng, Sangli and Zhang, Harry and Jin, David and Jasour, Ashkan and Ghaffari, Maani and Carlone, Luca},
  journal={arXiv preprint arXiv:2403.04712},
  year={2024}
}

@INPROCEEDINGS{Dong-RSS-23, 
    AUTHOR    = {Dayi E Dong AND Henry P Berger AND Ian Abraham}, 
    TITLE     = {{Time Optimal Ergodic Search}}, 
    BOOKTITLE = {Proceedings of Robotics: Science and Systems}, 
    YEAR      = {2023}, 
    ADDRESS   = {Daegu, Republic of Korea}, 
    MONTH     = {July}, 
    DOI       = {10.15607/RSS.2023.XIX.082} 
}

@article{schulman2014motion,
  title={Motion planning with sequential convex optimization and convex collision checking},
  author={Schulman, John and Duan, Yan and Ho, Jonathan and Lee, Alex and Awwal, Ibrahim and Bradlow, Henry and Pan, Jia and Patil, Sachin and Goldberg, Ken and Abbeel, Pieter},
  journal=J-IJRR,
  volume={33},
  number={9},
  pages={1251--1270},
  year={2014},
  publisher={SAGE Publications Sage UK: London, England}
}

@article{lai2024riemannian,
  title={{Riemannian} interior point methods for constrained optimization on manifolds},
  author={Lai, Zhijian and Yoshise, Akiko},
  journal={Journal of Optimization Theory and Applications},
  volume={201},
  number={1},
  pages={433--469},
  year={2024},
  publisher={Springer}
}

@article{liu2020simple,
  title={Simple algorithms for optimization on {Riemannian} manifolds with constraints},
  author={Liu, Changshuo and Boumal, Nicolas},
  journal={Applied Mathematics \& Optimization},
  volume={82},
  number={3},
  pages={949--981},
  year={2020},
  publisher={Springer}
}

@article{yamakawa2022sequential,
  title={Sequential optimality conditions for nonlinear optimization on {Riemannian} manifolds and a globally convergent augmented {Lagrangian} method},
  author={Yamakawa, Yuya and Sato, Hiroyuki},
  journal={Computational Optimization and Applications},
  volume={81},
  number={2},
  pages={397--421},
  year={2022},
  publisher={Springer}
}

@article{obara2022sequential,
  title={Sequential quadratic optimization for nonlinear optimization problems on {Riemannian} manifolds},
  author={Obara, Mitsuaki and Okuno, Takayuki and Takeda, Akiko},
  journal={SIAM Journal on Optimization},
  volume={32},
  number={2},
  pages={822--853},
  year={2022},
  publisher={SIAM}
}

@article{schiela2020sqp,
  title={An {SQP} method for equality constrained optimization on manifolds},
  author={Schiela, Anton and Ortiz, Julian},
  journal={arXiv preprint arXiv:2005.06844},
  year={2020}
}

@article{ratliff2018riemannian,
  title={{Riemannian} motion policies},
  author={Ratliff, Nathan D and Issac, Jan and Kappler, Daniel and Birchfield, Stan and Fox, Dieter},
  journal={arXiv preprint arXiv:1801.02854},
  year={2018}
}

@book{boumal2023introduction,
  title={An introduction to optimization on smooth manifolds},
  author={Boumal, Nicolas},
  year={2023},
  publisher={Cambridge University Press}
}

@article{forster2016manifold,
  title={On-manifold preintegration for real-time visual--inertial odometry},
  author={Forster, Christian and Carlone, Luca and Dellaert, Frank and Scaramuzza, Davide},
  journal={IEEE Transactions on Robotics},
  volume={33},
  number={1},
  pages={1--21},
  year={2016},
  publisher={IEEE}
}

@article{sentis2005synthesis,
  title={Synthesis of whole-body behaviors through hierarchical control of behavioral primitives},
  author={Sentis, Luis and Khatib, Oussama},
  journal={International Journal of Humanoid Robotics},
  volume={2},
  number={04},
  pages={505--518},
  year={2005},
  publisher={World Scientific}
}

@article{khatib1987unified,
  title={A unified approach for motion and force control of robot manipulators: The operational space formulation},
  author={Khatib, Oussama},
  journal={IEEE Journal on Robotics and Automation},
  volume={3},
  number={1},
  pages={43--53},
  year={1987},
  publisher={IEEE}
}

@inproceedings{lee2010geometric,
  title={Geometric tracking control of a quadrotor UAV on {SE(3)}},
  author={Lee, Taeyoung and Leok, Melvin and McClamroch, N Harris},
  booktitle=C-CDC,
  pages={5420--5425},
  year={2010},
  organization={IEEE}
}

@ARTICLE{10301632,
  author={Jang, Junwoo and Teng, Sangli and Ghaffari, Maani},
  journal={IEEE Robotics and Automation Letters}, 
  title={Convex Geometric Trajectory Tracking Using Lie Algebraic MPC for Autonomous Marine Vehicles}, 
  year={2023},
  volume={8},
  number={12},
  pages={8374-8381},
  doi={10.1109/LRA.2023.3328450}}

@inproceedings{teng2021legged,
  title={Legged robot state estimation in slippery environments using invariant extended Kalman filter with velocity update},
  author={Teng, Sangli and Mueller, Mark Wilfried and Sreenath, Koushil},
  booktitle=C-ICRA,
  pages={3104--3110},
  year={2021},
  organization={IEEE}
}

@inproceedings{yu2023fully,
  title={Fully proprioceptive slip-velocity-aware state estimation for mobile robots via invariant kalman filtering and disturbance observer},
  author={Yu, Xihang and Teng, Sangli and Chakhachiro, Theodor and Tong, Wenzhe and Li, Tingjun and Lin, Tzu-Yuan and Koehler, Sarah and Ahumada, Manuel and Walls, Jeffrey M and Ghaffari, Maani},
  booktitle=C-IROS,
  pages={8096--8103},
  year={2023},
  organization={IEEE}
}

@article{vandereycken2013low,
  title={Low-rank matrix completion by {Riemannian} optimization},
  author={Vandereycken, Bart},
  journal={SIAM Journal on Optimization},
  volume={23},
  number={2},
  pages={1214--1236},
  year={2013},
  publisher={SIAM}
}

@INPROCEEDINGS{Teng-RSS-23, 
    AUTHOR    = {Sangli Teng AND Ashkan Jasour AND Ram Vasudevan AND Maani Ghaffari Jadidi}, 
    TITLE     = {{Convex Geometric Motion Planning on {Lie} Groups via Moment Relaxation}}, 
    BOOKTITLE = C-RSS, 
    YEAR      = {2023}, 
    ADDRESS   = {Daegu, Republic of Korea}, 
    MONTH     = {July}, 
    DOI       = {10.15607/RSS.2023.XIX.058} 
}

@article{van2022equivariant,
  title={Equivariant filter (eqf)},
  author={van Goor, Pieter and Hamel, Tarek and Mahony, Robert},
  journal=IEEE_J_AC,
  year={2022},
  publisher={IEEE}
}

@article{ghaffari2022progress,
  title={Progress in Symmetry Preserving Robot Perception and Control through Geometry and Learning},
  author={Ghaffari, Maani and Zhang, Ray and Zhu, Minghan and Lin, Chien Erh and Lin, Tzu-Yuan and Teng, Sangli and Li, Tingjun and Liu, Tianyi and Song, Jingwei},
  journal={Frontiers in Robotics and AI},
  volume={9},
  pages={232},
  year={2022},
  publisher={Frontiers}
}

@article{clark2021nonparametric,
  title={Nonparametric Continuous Sensor Registration},
  author={Clark, William and Ghaffari, Maani and Bloch, Anthony},
  journal={Journal of Machine Learning Research},
  volume={22},
  number={271},
  pages={1--50},
  year={2021}
}

@inproceedings{teng2022lie,
  title={{Lie} algebraic cost function design for control on {Lie} groups},
  author={Teng, Sangli and Clark, William and Bloch, Anthony and Vasudevan, Ram and Ghaffari, Maani},
  booktitle=C-CDC,
  pages={1867--1874},
  year={2022},
  organization={IEEE}
}

@inproceedings{teng2022error,
  title={An error-state model predictive control on connected matrix {Lie} groups for legged robot control},
  author={Teng, Sangli and Chen, Dianhao and Clark, William and Ghaffari, Maani},
  booktitle=C-IROS,
  pages={8850--8857},
  year={2022},
  organization={IEEE}
}

@article{wachter2006implementation,
  title={On the implementation of an interior-point filter line-search algorithm for large-scale nonlinear programming},
  author={W{\"a}chter, Andreas and Biegler, Lorenz T},
  journal={Mathematical programming},
  volume={106},
  pages={25--57},
  year={2006},
  publisher={Springer}
}

@article{marsden1999discrete,
  title={Discrete {Euler-Poincar{\'e}} and {Lie-Poisson} equations},
  author={Marsden, Jerrold E and Pekarsky, Sergey and Shkoller, Steve},
  journal={Nonlinearity},
  volume={12},
  number={6},
  pages={1647},
  year={1999},
  publisher={IOP Publishing}
}

@article{rosen2019se,
  title={{SE-Sync}: A certifiably correct algorithm for synchronization over the special Euclidean group},
  author={Rosen, David M and Carlone, Luca and Bandeira, Afonso S and Leonard, John J},
  journal=J-IJRR,
  volume={38},
  number={2-3},
  pages={95--125},
  year={2019},
  publisher={Sage Publications Sage UK: London, England}
}

@article{posa2014direct,
  title={A direct method for trajectory optimization of rigid bodies through contact},
  author={Posa, Michael and Cantu, Cecilia and Tedrake, Russ},
  journal=J-IJRR,
  volume={33},
  number={1},
  pages={69--81},
  year={2014},
  publisher={Sage Publications Sage UK: London, England}
}

@inproceedings{hereid2017frost,
  title={{FROST}: Fast robot optimization and simulation toolkit},
  author={Hereid, Ayonga and Ames, Aaron D},
  booktitle=C-IROS,
  pages={719--726},
  year={2017},
  organization={IEEE}
}

@article{andersson2019casadi,
  title={{CasADi}: a software framework for nonlinear optimization and optimal control},
  author={Andersson, Joel AE and Gillis, Joris and Horn, Greg and Rawlings, James B and Diehl, Moritz},
  journal={Mathematical Programming Computation},
  volume={11},
  number={1},
  pages={1--36},
  year={2019},
  publisher={Springer}
}

@article{betts1998survey,
  title={Survey of numerical methods for trajectory optimization},
  author={Betts, John T},
  journal={Journal of guidance, control, and dynamics},
  volume={21},
  number={2},
  pages={193--207},
  year={1998}
}

@inproceedings{lee2005lie,
  title={A {Lie} group variational integrator for the attitude dynamics of a rigid body with applications to the {3D} pendulum},
  author={Lee, Taeyoung and McClamroch, N Harris and Leok, Melvin},
  booktitle={Proceedings of IEEE Conference on Control Applications},
  pages={962--967},
  year={2005},
  organization={IEEE}
}

@article{bullo1999tracking,
  title={Tracking for fully actuated mechanical systems: a geometric framework},
  author={Bullo, Francesco and Murray, Richard M},
  journal={Automatica},
  volume={35},
  number={1},
  pages={17--34},
  year={1999},
  publisher={Elsevier}
}

@book{bullo2019geometric,
  title={Geometric control of mechanical systems: modeling, analysis, and design for simple mechanical control systems},
  author={Bullo, Francesco and Lewis, Andrew D},
  volume={49},
  year={2019},
  publisher={Springer}
}

@article{marsden2001discrete,
  title={Discrete mechanics and variational integrators},
  author={Marsden, Jerrold E and West, Matthew},
  journal={Acta Numerica},
  volume={10},
  pages={357--514},
  year={2001},
  publisher={Cambridge University Press}
}

@article{marsden1998introduction,
  title={Introduction to Mechanics and Symmetry},
  author={Marsden, Jerrold E and Ratiu, Tudor S},
  year={1998},
  publisher={Springer}
}

@inproceedings{brudigam2021integrator,
  title={Linear-time variational integrators in maximal coordinates},
  author={Br{\"u}digam, Jan and Manchester, Zachary},
  booktitle={International Workshop on the Algorithmic Foundations of Robotics},
  pages={194--209},
  year={2021},
  organization={Springer}
}

@incollection{bloch2003nonholonomic,
  title={Nonholonomic mechanics},
  author={Bloch, Anthony M},
  booktitle={Nonholonomic mechanics and control},
  pages={207--276},
  year={2003},
  publisher={Springer}
}

@article{barrau2016invariant,
  title={The invariant extended {Kalman} filter as a stable observer},
  author={Barrau, Axel and Bonnabel, Silv{\`e}re},
  journal=IEEE_J_AC,
  volume={62},
  number={4},
  pages={1797--1812},
  year={2016},
  publisher={IEEE}
}

@article{hartley2020contact,
  title={Contact-aided invariant extended {Kalman} filtering for robot state estimation},
  author={Hartley, Ross and Ghaffari, Maani and Eustice, Ryan M and Grizzle, Jessy W},
  journal=J-IJRR,
  volume={39},
  number={4},
  pages={402--430},
  year={2020},
  publisher={SAGE Publications Sage UK: London, England}
}

@article{milnor1976,
    title = {Curvatures of left invariant metrics on {L}ie groups},
    author = {Milnor, John},
    journal = {Advances in Mathematics},
    volume = {21},
    number = {3},
    pages = {293-329},
    year = {1976}
}

@STRING{C-CDC = {Proc. {IEEE} Conf. Decision Control}}

@STRING{C-ICRA = {Proc. {IEEE} Int. Conf. Robot. and Automation}}

@STRING{C-IROS = {Proc. {IEEE}/{RSJ} Int. Conf. Intell. Robots and Syst.}}

@STRING{C-RSS = {Proc. Robot.: Sci. Syst. Conf.}}

@STRING{J-IJRR = {Int. J. Robot. Res.}}

@STRING{C-CDC = {Proceedings of the {IEEE} Conference on Decision and Control}}

@STRING{C-ICRA = {Proceedings of the {IEEE} International Conference on Robotics and Automation}}

@STRING{C-IROS = {Proceedings of the {IEEE}/{RSJ} International Conference on Intelligent Robots and Systems}}

@STRING{C-RSS = {Proceedings of the Robotics: Science and Systems Conference}}

@STRING{J-IJRR = {International Journal of Robotics Research}}

}
\clearpage

\begin{appendices}
\section{Connections on Matrix Lie Groups}
\label{appx:connection}
In this appendix, we discuss the selection of connections on matrix Lie groups. In the Riemannian optimization framework~\citep{boumal2023introduction}, the Levi-Civita connection compatible with the Riemannian metric is the natural choice. While on matrix Lie groups, we can solely use the Lie group structures without the Riemannian structure~\citep{mahony2002geometry}. 

\subsection{Levi-Civita Connection}
$\mathcal{M}$ is a Riemannian manifold if it is equipped with a metric  $\langle\cdot, \cdot \rangle_x: \operatorname{T}_x\mathcal{M} \times \operatorname{T}_x\mathcal{M} \rightarrow \mathbb{R}$ for each tangent space $x \in \mathcal{M}$ and the map $x \rightarrow \langle V_1(x), V_2(x) \rangle$ for vector fields $V_1$ and $V_2$. 


\begin{definition}[Levi-Civita Connection]
\label{def:levi-civita-connection}
Let $(\mathcal{M},\langle\cdot,\cdot\rangle)$ be a Riemannian manifold.
A Levi-Civita connection is an affine connection $\nabla$ on $\mathcal{M}$
that is compatible with the metric $\langle\cdot, \cdot\rangle$ and torsion-free:
\begin{itemize}
    \item $X\langle Y,Z\rangle
        = \langle \nabla_XY,Z\rangle
        + \langle Y,\nabla_XZ\rangle$ 
    \item $\nabla_XY-\nabla_YX=[X,Y],$ 
\end{itemize}
For any $ X,Y,Z\in\mathfrak{X}(\mathcal{M})$.
\end{definition}

Given the retraction map, we have the second-order Taylor expansion on curves:
\begin{definition}[Second-order Retraction]
\label{def:2nd-retraction}
    Consider $c(t)$ as the retraction curve $c(t) = \operatorname{R}_x(tv)$, for $ x \in \mathcal{M}$ and $v \in \operatorname{T}_x\mathcal{M}$. We have the second-order retraction:
\begin{equation}
\label{eq:2nd-curve}
\begin{aligned}
& \quad f\left(\mathrm{R}_x(t v)\right)= \\
& f(x)+t\langle\operatorname{grad} f(x), v\rangle_x 
 +\frac{t^2}{2}\langle \operatorname{Hess} f(x)[v], v\rangle_x \\
& +\frac{t^2}{2}\left\langle\operatorname{grad} f(x), \ddot{c}(0)\right\rangle_x+\mathcal{O}\left(t^3\right) .
\end{aligned}
\end{equation}
\end{definition}

\begin{remark}
\label{rmk:d2c=0}
    In the case that the retraction $\operatorname{R}_x(\cdot) $ is the Riemannian exponential map, the acceleration of $c(t)$, i.e, $\ddot{c}(0) = 0$. If the retraction is available, \eqref{eq:2nd-curve} provides a convenient way to compute the derivatives by extracting the coefficients of $t$ and $t^2$.  
\end{remark}

On a matrix Lie group $\mathcal{G}$, we say a Riemannian metric is left (resp. right) invariant if for $\phi, \eta \in \mathfrak{g}$ and $X \in \mathcal{G}$:
\begin{equation}
\tag{left-invariant metric}
    \langle X\phi^{\wedge}, X\eta^{\wedge} \rangle_{\operatorname{T}_X\mathcal{G}} = \langle \phi^{\wedge}, \eta^{\wedge} \rangle_{\mathfrak{g}},\ \ 
\end{equation}
\begin{equation}
\tag{right-invariant metric}
    \langle \phi^{\wedge}X, \eta^{\wedge}X \rangle_{\operatorname{T}_X\mathcal{G}} = \langle \phi^{\wedge}, \eta^{\wedge} \rangle_{\mathfrak{g}}.
\end{equation}
A metric is \emph{bi-invariant} if it is both left and right invariant. In general, the Lie exponential and Riemannian exponential are not identical, as bi-invariant metrics may not exist for $\mathcal{G}$.
In particular, for a bi-invariant metric to exist, $\mathrm{Ad}_g$ must be an isometry on $\mathfrak{g}$ \citep{milnor1976}, which is always possible when the group is either compact or Abelian.
\begin{remark}
\label{rmk:Lie=Rieman}
If a matrix Lie group \(\mathcal G\) admits a bi-invariant Riemannian metric, such as a compact or Abelian group, the Riemannian exponential at the identity coincides with the Lie exponential.
\end{remark}
For trajectory optimization on $\mathrm{SO}(3)\times \mathbb{R}^3$, the derivation based on the second-order expansion of the Lie exponential is compatible with the bi-invariant Riemannian metric. For $\mathrm{SE}(3)$, the Lie exponential is not compatible, as a bi-invariant metric is absent in a non-compact Lie group.

\subsection{Cartan-Schouten Connection}
Now we introduce the Cartan-Schouten connection, which is intrinsic to matrix Lie groups without a Riemannian structure. 

\begin{definition}[Cartan--Schouten Connections]
\label{def:cartan-schouten-connections}
Let $\mathcal{G}$ be a connected Lie group with Lie algebra $\mathfrak{g}$.
A left-invariant affine connection $\nabla$ on $\mathcal{G}$ is characterized
by a bilinear connection function
\begin{equation}
    \omega:\mathfrak{g}\times\mathfrak{g}\rightarrow\mathfrak{g},
    \qquad
    \nabla_{X^L}Y^L = \left(\mathrm{d}L_g\right)\omega(x,y),
\end{equation}
where $X^L$ and $Y^L$ are the left-invariant vector fields generated by
$x,y\in\mathfrak{g}$. The Cartan--Schouten connections are the three
bi-invariant affine connections whose connection functions are $\omega^{-}(x,y)=0,\ \omega^{0}(x,y)=\frac{1}{2}[x,y],\  \omega^{+}(x,y)=[x,y]$.
\end{definition}

Unlike a general affine connection on a smooth manifold, the Cartan--Schouten
connections are induced by the Lie group structure itself: their definition uses
the Lie algebra bracket and is invariant under group translations. This structure
makes them compatible with the canonical coordinates of the first kind. Indeed,
for the one-parameter subgroup
\begin{equation}
    \gamma_x(t)=\exp(tx),\qquad x\in\mathfrak{g},
\end{equation}
the velocity satisfies $\dot{\gamma}_x(t)=\left(\mathrm{d}L_{\exp(tx)}\right)x$,
and therefore
\begin{equation}
    \nabla_{\dot{\gamma}_x(t)}\dot{\gamma}_x(t)
    =
    \left(\mathrm{d}L_{\exp(tx)}\right)\omega(x,x).
\end{equation}
Since $\omega^{-}(x,x)=0$, $\omega^{0}(x,x)=\frac{1}{2}[x,x]=0$, and
$\omega^{+}(x,x)=[x,x]=0$, every one-parameter subgroup is a geodesic for each Cartan-Schouten connection. 
\section{Implementation Details of \method}
\subsection{Inertia Correction}
\label{sec:inertia}
We summarize the procedure of inertia correction in~\Cref{alg:inertia-correction}. 
\begin{algorithm}[t]
\caption{Inertia Correction for the KKT Systems}
\label{alg:inertia-correction}
\begin{algorithmic}[1]
\footnotesize
\Require KKT matrix $A$, barrier parameter $\mu$, regularization $\delta_w^{\mathrm{last}}$
\Require Parameters $\delta_{\bar w}^{\min}$, $\delta_{\bar w}^{0}$, $\delta_{\bar w}^{\max}$, $k_{\bar w}^{+}$, $k_w^{+}$, $k_w^{-}$, $\delta_{\bar c}$, $k_c$
\Require Projectors $P_x$ and $P_y$ onto the primal and equality-dual subspaces, target negative index $n_y$
\Ensure Corrected matrix $A^c$, updated $\delta_w^{\mathrm{last}}$

\State $\delta_c \gets 0$, \quad $A^c \gets A$
\State $(n^+,n^-,n^0) \gets \operatorname{Inertia}(A)$
\State $\mathrm{ic\_required} \gets (n^- \neq n_y)\ \textbf{or}\ (n^0 \neq 0)$

\If{$n^0 \neq 0$}
    \State $\delta_c \gets \delta_{\bar c}\mu^{k_c}$
\EndIf

\If{$\mathrm{ic\_required}$}
    \If{$\delta_w^{\mathrm{last}} = 0$}
        \State $\delta_w \gets \delta_{\bar w}^{0}$
    \Else
        \State $\delta_w \gets \max\{\delta_{\bar w}^{\min},\, k_w^{-}\delta_w^{\mathrm{last}}\}$
    \EndIf
\EndIf

\While{$\mathrm{ic\_required}$}
    \State $A^c \gets A + \delta_w P_x - \delta_c P_y$
    \State $(n^+,n^-,n^0) \gets \operatorname{Inertia}(A^c)$
    \State $\mathrm{ic\_required} \gets (n^- \neq n_y)\ \textbf{or}\ (n^0 \neq 0)$

    \If{not $\mathrm{ic\_required}$}
        \State $\delta_w^{\mathrm{last}} \gets \delta_w$
        \State \textbf{break}
    \Else
        \If{$\delta_w^{\mathrm{last}} = 0$}
            \State $\delta_w \gets k_{\bar w}^{+}\delta_w$
        \Else
            \State $\delta_w \gets k_w^{+}\delta_w$
        \EndIf
    \EndIf

    \If{$\delta_w > \delta_{\bar w}^{\max}$}
        \State \textbf{break}
    \EndIf
\EndWhile

\State \Return $(A^c,\delta_w^{\mathrm{last}})$
\end{algorithmic}
\end{algorithm}

\subsection{Linear System Solver}
\label{appx:sym-lin-sys}
To exploit sparse symmetric indefinite solvers such as MUMPS, we eliminate the slack-complementarity block as in IPOPT. Define
\[
\hat\xi_3 := \xi_3 - Z^{-1}S\xi_4.
\]
Then the Newton system~\eqref{eq:newton-kkt} becomes
\begin{equation}
\begin{aligned}
\left[\begin{array}{cccc}
H & A_{\mathrm{E}} & A_{\mathrm{I}}  & 0 \\
A_{\mathrm{E}}^* & 0 & 0 & 0 \\
A_{\mathrm{I}}^* & 0 & -ZS^{-1} & I \\
0 & 0 & 0 & Z
\end{array}\right]
\left[\begin{array}{c}
\xi_1 \\
\xi_2 \\
\hat\xi_3 \\
\xi_4
\end{array}\right]
=
-\left[\begin{array}{c}
\zeta_1 \\
\zeta_2 \\
\zeta_3 \\
\zeta_4
\end{array}\right].
\end{aligned}
\end{equation}
Since the last row gives $\xi_4 = -Z^{-1}\zeta_4$, we eliminate $\xi_4$ and obtain the reduced symmetric indefinite system
\begin{equation}
\begin{aligned}
\left[\begin{array}{ccc}
H & A_{\mathrm{E}} & A_{\mathrm{I}} \\
A_{\mathrm{E}}^* & 0 & 0 \\
A_{\mathrm{I}}^* & 0 & -ZS^{-1}
\end{array}\right]
\left[\begin{array}{c}
\xi_1 \\
\xi_2 \\
\hat\xi_3
\end{array}\right]
=
-\left[\begin{array}{c}
\zeta_1 \\
\zeta_2 \\
\hat{\zeta}_3
\end{array}\right],
\end{aligned}
\end{equation}
with $\hat{\zeta}_3:=\zeta_3 - Z^{-1}\zeta_4$.
After solving this system, $\xi_4$ is recovered from $\xi_4=-Z^{-1}\zeta_4$, and $\xi_3$ is obtained by $\xi_3=\hat\xi_3+Z^{-1}S\xi_4$. We note that the above linear system is symmetric and can fully leverage the symmetric indefinite route of MUMPS solver. 

\subsection{Parameters of \method}
\label{appx:ripm-table}
We implement \method using the line-search primal--dual interior-point
algorithm in \Cref{alg:line-search-RIPM}. The main hyper-parameters are listed
in Table~\ref{table:param-RIPM}.

\begin{table}[h]
\centering
\caption{Hyper-parameters used by \method in \Cref{alg:line-search-RIPM}.}
\label{table:param-RIPM}
\begin{tabular}{ccc}
\hline
 Parameter & Notation & Value \\ \hline
Maximal iteration & ${N}_{\max}$ &  $200$ \\ \hline
Maximal iteration in line search & $J_{\max}$ &  $30$ \\ \hline
Tolerance to KKT of \Cref{prob:cons-nlp} & $\epsilon_{\mathrm{tol}}$ &  $10^{-11}$ \\ \hline
Linear decaying rate of $\mu$ & $\kappa_{\mu}$ &  $0.99$ \\ \hline
Superlinear decaying rate of $\mu$ & $\theta_{\mu}$ &  $1.99$ \\ \hline
Minimal fraction to boundary & $\tau_{\min}$ &  $0.995$ \\ \hline
Barrier problem cost progress & $\gamma_{\theta}$ &  $10^{-6}$ \\ \hline
Minimal infeasibility & $\theta_{\min}$ &  $10^{-4}$ \\ \hline
Progress of barrier cost & $\eta_{\varphi}$ &  $10^{-4}$ \\ \hline
Progress of feasibility & $\gamma_\theta$ &  $10^{-4}$ \\ \hline
Decaying rate for line search & $\beta$ & $0.5$ \\ \hline  
\end{tabular}
\end{table}

\section{Local Convergence of \method}
\label{appx:proof-convergence}
We follow the proof in~\citep{lai2024riemannian} to show the local convergence of \method under standard regularity conditions. 
\localconvergence*
\begin{proof}
For simplicity, write $x:=x^\star$, $y:=y^\star$, $z:=z^\star$, $s:=s^\star$, and let
\begin{equation}
\xi_w=(\xi_x,\xi_y,\xi_z,\xi_{s})
\in T_x\mathcal M\times \mathbb R^l\times \mathbb R^m\times \mathbb R^m.
\end{equation}
We show that
\begin{equation}
J_{\zeta_0}(w^\star)\xi_w=0
\end{equation}
implies
\begin{equation}
\xi_w=0.
\end{equation}

Expanding the linear system \(J_{\zeta_0}(w^\star)\xi_w=0\) using \eqref{eq:kkt-jac} gives
\begin{equation}
\label{eq:kernel-system-1}
\left\{
\begin{aligned}
0 &= H(x,y,z)[\xi_x] + A_E(x)[\xi_y] + A_I(x)[\xi_z],\\
0 &= A_E(x)^*[\xi_x],\\
0 &= A_I(x)^*[\xi_x] + \xi_{s},\\
0 &= S\xi_z + Z\xi_{s} .
\end{aligned}
\right.
\end{equation}
Equivalently, in component form,
\begin{equation}
\label{eq:kernel-system-2}
\left\{
\begin{aligned}
0 &= H(x,y,z)[\xi_x]
   + \sum_{i\in\mathcal{E}}\xi_{y,i}\, \mathrm D h_i(x) \\
  &+ \sum_{j\in\mathcal{I}}\xi_{z,j}\, \mathrm D g_j(x),\\
0 &= \mathrm D h_i(x)[\xi_x], \quad \forall i\in\mathcal{E},\\
0 &= \mathrm D g_j(x)[\xi_x] + \xi_{s,j}, \quad \forall j\in\mathcal{I},\\
0 &= s_j \xi_{z,j} + z_j \xi_{s,j}, \quad \forall j\in\mathcal{I}.
\end{aligned}
\right.
\end{equation}

Since \(w^\star\) satisfies strict complementarity, we have $z_j>0$ for all $j\in\mathcal{A}$ and $s_j>0$ for all $j\in\mathcal{I}\setminus\mathcal{A}$. Hence the last equation in \eqref{eq:kernel-system-2} implies
$\xi_{s,j}=0, \forall j\in\mathcal{A},$ and $\xi_{z,j}=0, \forall j\in\mathcal{I}\setminus\mathcal{A}.
$
Substituting these relations into \eqref{eq:kernel-system-2} yields
\begin{equation}
\label{eq:reduced-system}
\left\{
\begin{aligned}
0 &= H(x,y,z)[\xi_x]
   + \sum_{i\in\mathcal{E}}\xi_{y,i}\, \mathrm D h_i(x) \\
& + \sum_{j\in\mathcal{A}}\xi_{z,j}\, \mathrm D g_j(x),\\
0 &= \mathrm D h_i(x)[\xi_x], \quad \forall i\in\mathcal{E},\\
0 &= \mathrm D g_j(x)[\xi_x], \quad \forall j\in\mathcal{A},
\end{aligned}
\right.
\end{equation}
and $\xi_{s,j}=-\,\mathrm D g_j(x)[\xi_x], \forall j\in\mathcal{I}\setminus\mathcal{A}.$

Evaluating the first equation of \eqref{eq:reduced-system} on \(\xi_x\) gives
\begin{equation}
\begin{aligned}
0
&=
H(x,y,z)[\xi_x,\xi_x]
+\sum_{i\in\mathcal{E}}\xi_{y,i}\,\mathrm D h_i(x)[\xi_x] \\
&+\sum_{j\in\mathcal{A}}\xi_{z,j}\,\mathrm D g_j(x)[\xi_x].
\end{aligned}
\end{equation}
Using the second and third equations of \eqref{eq:reduced-system}, we have
\begin{equation}
0=H(x,y,z)[\xi_x,\xi_x].
\end{equation}
Moreover, \eqref{eq:reduced-system} implies
\begin{equation}
\mathrm D h_i(x)[\xi_x]=0,\quad \forall i\in\mathcal{E},
\end{equation}
\begin{equation}
\mathrm D g_j(x)[\xi_x]=0,\quad \forall j\in\mathcal{A}.
\end{equation}
Therefore, by SOSC,
\begin{equation}
\xi_x=0.
\end{equation}

With \(\xi_x=0\), the first equation of \eqref{eq:reduced-system} becomes
\begin{equation}
\sum_{i\in\mathcal{E}}\xi_{y,i}\, \mathrm D h_i(x)
+\sum_{j\in\mathcal{A}}\xi_{z,j}\, \mathrm D g_j(x)=0.
\end{equation}
By LICQ, the covectors
\begin{equation}
\{\mathrm D h_i(x)\}_{i\in\mathcal{E}}\cup \{\mathrm D g_j(x)\}_{j\in\mathcal{A}}
\end{equation}
are linearly independent. Hence
\begin{equation}
\xi_{y,i}=0,\quad \forall i\in\mathcal{E},
\end{equation}
\begin{equation}
\xi_{z,j}=0,\quad \forall j\in\mathcal{A}.
\end{equation}
Together with
\begin{equation}
\xi_{z,j}=0,\quad \forall j\in\mathcal{I}\setminus\mathcal{A},
\end{equation}
we obtain
\begin{equation}
\xi_z=0.
\end{equation}
Finally, from
\begin{equation}
\xi_{s,j}=-\,\mathrm D g_j(x)[\xi_x]
\end{equation}
and \(\xi_x=0\), we get
\begin{equation}
\xi_{s}=0.
\end{equation}
Thus \(\xi_w=(\xi_x,\xi_y,\xi_z,\xi_{s})=0\), so the kernel of \(J_{\zeta_0}(w^\star)\) is trivial. Therefore, $J_{\zeta_0}(w^\star)$ is non-singular.

Since \(\zeta_\mu\) is smooth in \((w,\mu)\) and
$J_{\zeta_\mu}(w^\star)=J_{\zeta_0}(w^\star)$
the Jacobian \(J_{\zeta_\mu}(w)\) remains nonsingular for all \(w\) in a neighborhood of
\(w^\star\) and all \(\mu\) sufficiently small. Therefore, for each fixed \(\mu\), the Newton step
\begin{equation}
J_{\zeta_\mu}(w_k)\xi_k=-\zeta_\mu(w_k)
\end{equation}
is locally well defined. By the local convergence theorem for Newton's method on manifolds
with a retraction update \citep{absil2008optimization}, the iteration $w_{k+1}=R_{w_k}(\xi_k)
$
is locally superlinearly convergent to the zero of \(\zeta_\mu\), and quadratically convergent
under the stated smoothness assumptions.

It remains to pass from the fixed-\(\mu\) subproblem to the full homotopy iteration. Write
\begin{equation}
\zeta_{\mu_k}(w_k)=\zeta_0(w_k)-\begin{bmatrix}0\\0\\0\\ \mu_k e\end{bmatrix}.
\end{equation}
Hence the Newton step for \(\zeta_{\mu_k}\) is a perturbed Newton step for \(\zeta_0\). By Taylor
expansion of \(\zeta_0\) around \(w^\star\) and the local bounded invertibility of \(J_{\zeta_0}\),
there exist constants \(c_1,c_2>0\) such that
\begin{equation}
\|w_{k+1}-w^\star\|
\le
c_1\|w_k-w^\star\|^2+c_2\mu_k
\end{equation}
for all \(w_k\) sufficiently close to \(w^\star\). Therefore, if
\begin{equation}
\mu_k=o(\|\zeta_0(w_k)\|),
\end{equation}
then the perturbation term is asymptotically negligible relative to the Newton contraction,
and \(\{w_k\}\) converges superlinearly to \(w^\star\). If, moreover,
\begin{equation}
\mu_k=\mathcal O(\|\zeta_0(w_k)\|^2),
\end{equation}
then the perturbation is of higher order and the recursion becomes quadratic, yielding
\begin{equation}
\|w_{k+1}-w^\star\|=\mathcal O(\|w_k-w^\star\|^2).
\end{equation}

Finally, by continuity of the KKT Jacobian and its inertia, once \(w_k\) is sufficiently close
to \(w^\star\), the inertia correction is inactive. In the same neighborhood, the line-search
accepts the full Newton step, so the method reduces locally to the pure primal-dual Newton
iteration. This completes the proof.
\end{proof}

\section{Implementation Details of \method and Baselines}
\label{appx:baseline}
In this appendix, we summarize the implementation settings and convergence
criteria used for \method and the baseline solvers.

\subsection{Equivalence Between the Matrix and Quaternion Attitude Costs}
\label{appx:q=R}

Let
\begin{equation}
\delta q=q_{\mathrm{ref}}^{-1}\otimes q
=
[\delta q_w,\delta q_v^\top]^\top
\end{equation}
be the relative unit quaternion. Then
\begin{equation}
R(\delta q)=R_{\mathrm{ref}}^\top R.
\end{equation}
For any unit quaternion \(q=[q_w,q_v^\top]^\top\), the corresponding rotation
matrix satisfies
\begin{equation}
R(q)
=
(q_w^2-q_v^\top q_v)I
+
2q_vq_v^\top
+
2q_w[q_v]_\times .
\end{equation}
Taking the trace gives
\begin{equation}
\operatorname{tr}(R(q))
=
3(q_w^2-\|q_v\|^2)+2\|q_v\|^2
=
4q_w^2-1,
\end{equation}
where we used \(q_w^2+\|q_v\|^2=1\). Therefore,
\begin{equation}
\begin{aligned}
\|R_{\mathrm{ref}}^\top R-I\|_F^2
&=
\|R(\delta q)-I\|_F^2 \\
&=
6-2\operatorname{tr}(R(\delta q)) \\
&=
6-2(4\delta q_w^2-1) \\
&=
8(1-\delta q_w^2) \\
&=
8\|\delta q_v\|^2 .
\end{aligned}
\end{equation}
Hence, choosing \(Q_q=8 I_3\) yields
\begin{equation}
\delta q_v^\top Q_q\delta q_v
=
8\|\delta q_v\|^2
=
\|R_{\mathrm{ref}}^\top R-I\|_F^2.
\end{equation}

\subsection{\method}
A run is counted as successfully converged when the convergence criterion of
\Cref{prob:cons-nlp} in \eqref{eq:converge} satisfies
\[
    \|E_0\| \leq \epsilon_{\mathrm{tol}} = 10^{-6}.
\]
We note that \method does not include a full filter mechanism or restoration
phase, in contrast to IPOPT~\citep{wachter2006implementation}; therefore, we do
not claim global convergence for the implemented globalization strategy.

\subsection{IPOPT}
IPOPT~\citep{wachter2006implementation} is the closest baseline to \method,
since both methods are based on primal--dual interior-point iterations. We use
CASADi~\citep{andersson2019casadi} as the modeling and solver interface for
IPOPT, and keep the IPOPT parameters at their default values unless otherwise
specified.

A run is counted as successfully converged when IPOPT reports a KKT residual below
\[
    \epsilon_{\mathrm{tol}} = 10^{-6}.
\]
which is the same as \method.
The maximal number of iterations is set to \(1000\), which includes both the main iterations and the feasibility restoration iterations.

\subsection{SNOPT}
SNOPT~\citep{gill2005snopt} is used as the active-set SQP baseline. We use
CASADi~\citep{andersson2019casadi} as the modeling and solver interface for
SNOPT.

For all SNOPT runs, we set the major iteration limit, i.e., the total iterations of QPs to \(10^4\), and set the
major feasibility $\epsilon_r$ and optimality tolerances $\epsilon_d$ to
\[
    \epsilon_r=\epsilon_d=10^{-6}.
\]
SNOPT solves problems in the form
\[
    \min_x f_0(x),\qquad 
    l \leq 
    \begin{bmatrix}
        x \\ f(x) \\ A_Lx
    \end{bmatrix}
    \leq u ,
\]
where \(f(x)\) denotes the nonlinear constraint rows. The major feasibility
tolerance controls the accuracy of these nonlinear rows. Specifically, SNOPT
requires the normalized maximum nonlinear constraint violation
\[
    \mathrm{rowerr}
    =
    \max_i \frac{\mathrm{viol}_i}{\|x\|}
    \leq \epsilon_r ,
\]
with $    \mathrm{viol}_i
    =
    \max\{l_i-f_i(x),\,0,\,f_i(x)-u_i\}.
$. Thus, this tolerance directly controls the accuracy of the nonlinear dynamics
and path constraints in our trajectory optimization problems.

The major optimality tolerance controls SNOPT's dual optimality measure. Let $d_j = g_j-\pi^\top a_j$ be the reduced gradient associated with the \(j\)th variable or slack, where
\(g_j\) is the corresponding objective-gradient component, \(a_j\) is the
corresponding column of \((A,-I)\), and \(\pi\) is the QP multiplier vector.
SNOPT defines the complementarity estimate
$$        \mathrm{Comp}_j =
    \begin{cases}
    d_j\min\{x_j-l_j,1\}, & d_j\geq 0,\\
    -d_j\min\{u_j-x_j,1\}, & d_j<0,
    \end{cases}
$$
and requires $    \mathrm{max}\ \mathrm{Comp}
    =
    \max_j \frac{\mathrm{Comp}_j}{\|\pi\|}
    \leq \epsilon_d .
$
Therefore, the optimality tolerance should be interpreted as SNOPT's
reduced-gradient complementarity accuracy, rather than as a direct bound on
the full Lagrangian-gradient residual. The minor and total iteration limits are
left at their default values.

\subsection{ACADO}
ACADO is used as the multiple-shooting SQP baseline. After discretization, the
optimal control problem is treated as a finite-dimensional nonlinear program
with Lagrangian
\[
    \mathcal{L}(w,\lambda,\nu)
    =
    J(w)+\lambda^\top c(w)+\nu^\top g(w).
\]
Therefore, convergence is measured by the KKT conditions of this discretized
problem, namely stationarity, primal feasibility, and complementarity:
\[
    \nabla_w \mathcal{L}(w,\lambda,\nu)\approx 0,\qquad
    c(w)\approx 0,\qquad
    0\leq \nu \perp g(w)\leq 0 .
\]
Following the default ACADO setting, we use the KKT tolerance
\[
    \epsilon_{\mathrm{KKT}} = 10^{-6}.
\]
A run is counted as successfully converged only when the KKT tolerance reported
by ACADO is below this threshold.

\subsection{CROCODDYL}
CROCODDYL~\citep{mastalli2020crocoddyl} is used as the multi-shooting DDP
baseline. In particular, we use its box-constrained feasibility-driven DDP (FDDP)
implementation.

A run is counted as successfully converged when the internal FDDP stopping
measure is below \(5\times 10^{-5}\), the internal feasibility condition is
satisfied, and the final reported primal feasibility satisfies
\[
    \mathrm{\epsilon}_{\mathrm{primal}} < 10^{-6}.
\]
The FDDP globalization strategy contracts the shooting defects according to
\[
    \bar f^{+}=(1-\alpha)\bar f,
\]
and accepts steps using a Goldstein-type decrease test.

\subsection{ALTRO}
ALTRO is used as the augmented-Lagrangian single-shooting baseline based on an iterative Linear Quadratic Regulator (iLQR). A run is counted
as successfully converged when the solver returns a successful status, and the recomputed objective value is finite.

The maximal number of iterations is set to \(500\). The configured stopping
criteria use primal feasibility below \(10^{-6}\), cost decrease below \(10^{-4}\). 

\end{appendices}

\end{document}